\documentclass[11pt]{article}

\usepackage{amsmath,amssymb,amsfonts,amsthm}
\usepackage{thmtools}
\usepackage[numbers,sort]{natbib}
\usepackage{xcolor}
\definecolor{dkred}  {rgb}{.7,0,0} 
\definecolor{midnight}  {rgb}{0,0,.5} 
\definecolor{forest}  {rgb}{0,.6,0} 
\usepackage[colorlinks,
            citecolor=forest,
			linkcolor=midnight,
			anchorcolor=red,
			urlcolor=midnight]{hyperref}  
\usepackage{xurl}      
\newcommand{\becca}[1]{\textcolor{dkred}{[becca: #1]}}

\usepackage[capitalize,noabbrev]{cleveref}
\Crefname{equation}{Eq.}{Eqs.}

\usepackage{microtype}
\usepackage{graphicx}
\usepackage{subcaption}
\usepackage{booktabs} 

\usepackage{appendix}
\usepackage{verbatim}

\usepackage{arxiv}

\usepackage{amsmath}
\usepackage{amssymb}
\usepackage{mathtools}
\usepackage{amsthm}

\usepackage{xcolor}
\usepackage{graphicx} 
\usepackage{amsfonts}
\usepackage{amssymb}
\usepackage{amsmath}
\usepackage{amsthm}

\usepackage{subcaption}
\usepackage{bbm, dsfont}

\newcommand{\fitparenth}[1]{\left(#1\right)}
\newcommand{\fitbracket}[1]{\left[#1\right]}
\newcommand{\inprod}[1]{\left\langle#1\right\rangle}

\newcommand{\vect}[1]{\mathbf{#1}}

\newcommand{\RR}{\mathbb{R}}

\newcommand{\CalN}{\mathcal{N}}
\newcommand{\CalS}{\mathcal{S}}
\NewDocumentCommand{\expect}{O{} m}{\mathbb{E}_{#1}\left[#2\right]}

\theoremstyle{plain}
\newtheorem{theorem}{Theorem}[section]
\newtheorem{proposition}[theorem]{Proposition}
\newtheorem{lemma}[theorem]{Lemma}

\newtheorem{definition}[theorem]{Definition}
\newtheorem{assumption}[theorem]{Assumption}
\newtheorem{remark}{Remark}

\Crefname{proposition}{Proposition}{Propositions}

\usepackage{authblk}
\usepackage[compact]{titlesec}

\author[*,1,2,3]{Andrew Dennehy}
\author[3]{Ramchandran Muthukumar}
\author[1,2,3,4,5,6]{\\Rebecca Willett}
\author[1,2,5,6]{Nisha Chandramoorthy}

\affil[1]{Committee on Computational and Applied Mathematics, University of Chicago}
\affil[2]{Department of Statistics, University of Chicago}
\affil[3]{Data Science Institute, University of Chicago}
\affil[4]{Department of Computer Science, University of Chicago}
\affil[5]{NSF-Simons National Institute for Theory and Mathematics in Biology}
\affil[6]{NSF-Simons National AI Institute for the Sky}
\affil[*]{Corresponding Author: \href{mailto:nishac@uchicago.edu}{nishac@uchicago.edu}}

\usepackage[textsize=tiny]{todonotes}

\input{macros}
\usepackage{titling}
\title{Diffusion models recover accurate mixture weights despite score function insensitivity}
\begin{document}

\maketitle

\begin{abstract}
Score-based generative models exhibit a puzzling behavior: they often appear to cover all modes of a target multimodal distribution and yet may fail to learn the correct relative mode amplitudes, which can be interpreted as mixture weights. 
We resolve this apparent paradox 
by relating the diffusion score matching (DSM) loss to the error in estimating mixture weights from generated samples. We show that, even when the target score is insensitive to mixture weights, generated samples can recover the weights accurately if the scores at intermediate noise levels are informative about the weights. Accordingly, we define the \textit{diffusion score sensitivity index} (DSSI) as the variation in the DSM loss relative to changes in a parameter. We then show that the DSSI governs the accuracy with which the parameter of the target distribution can be estimated from generated samples.
For Gaussian mixtures in arbitrary dimensions, we prove that the mixture weight estimation errors are on the same order as the DSM loss under mild conditions. Empirically, we show the emergence of sensitivity during the noising process of benchmark data distributions under typical noise schedules, and that these sensitivity values predict how well a well-trained model recovers mixture weights. Furthermore, we show that the choice of noise schedule can reduce diffusion sensitivity, leading to mode amplification. Although we focus on mixture weights, the proposed sensitivity framework governs the recovery of any qualitative parameter of the target distribution. 
 
\end{abstract}

\section{Introduction}
Generative models are powerful tools that are increasingly used in scientific research \citep{lanusse2021deep,price2023gencast,han2025learning}, medicine \citep{loni2025review,fahrner2025generative,teo2025generative}, the justice system \citep{ludwig2024machine,posner2025judge}, and more. In these and other settings, users rely on generative models to accurately reflect the qualitative properties of the true target distribution. Past work notes that learning the score of a target distribution may be insufficient for accurate sampling \citep{koehler2022statisticalefficiencyscorematching, chen2021dimensionfreelogsobolevinequalitiesmixture, li2024softmixturedenoisingexpressive}, particularly for canonical multimodal distributions such as Gaussian mixtures \citep{1100034, yakowitz1968identifiability}. 
Generative models that incorrectly reflect the amplitudes or variances of modes \citep{zhang2025collapse, roos2026met, hakemi2025deeper} may hinder uncertainty quantification and the recovery of qualitative parameters \citep{bouchet2019rare, addison2024cpmgem, mudur2023cosmological}. 
Our work proposes an explicit evaluation of generative models through the lens of parameter recovery. 
Such an evaluation augments sample-quality metrics, such as the Frechet inception distance (FID) \citep{NIPS2017_8a1d6947}, which may not explicitly test for target-specific structure, such as multimodality \citep{sajjadi2018assessing}. 

\textbf{In this work, we present an information-theoretic mechanism that determines when such key parameters may be represented accurately by diffusion models.} In the Gaussian mixture setting, even when the score matching loss is insensitive to mixture weights, learning a diffusion model anneals the score matching loss across steps of the diffusion process \citep{song2019generative}; this annealing leads to sensitivity to mixture weights, resulting in accurate estimates that were unanticipated by past analyses that focused only on the (unannealed) score matching loss \citep{koehler2022statisticalefficiencyscorematching}. 

This idea is illustrated in \cref{fig:scores}. The top row plots the score functions for two Gaussian mixture distributions that are identical except for the mixture weight $\gamma$ and shows that they are nearly identical except in a region with very low probability mass. We are thus unlikely to have sufficient training samples for the loss to reliably indicate the value of $\gamma$ that best represents the training data. However, for larger values of $t$ in the diffusion process ($t=0.25$ or $t=0.50$ in the figure), the corresponding distributions are less well separated, and the score functions are clearly distinguishable over a higher-probability region. Since training diffusion models uses the score across multiple $t \in [0,1]$, learned models hone in on the correct value of $\gamma$.

\begin{figure}[ht]
    \centering\includegraphics[width=0.7\linewidth]{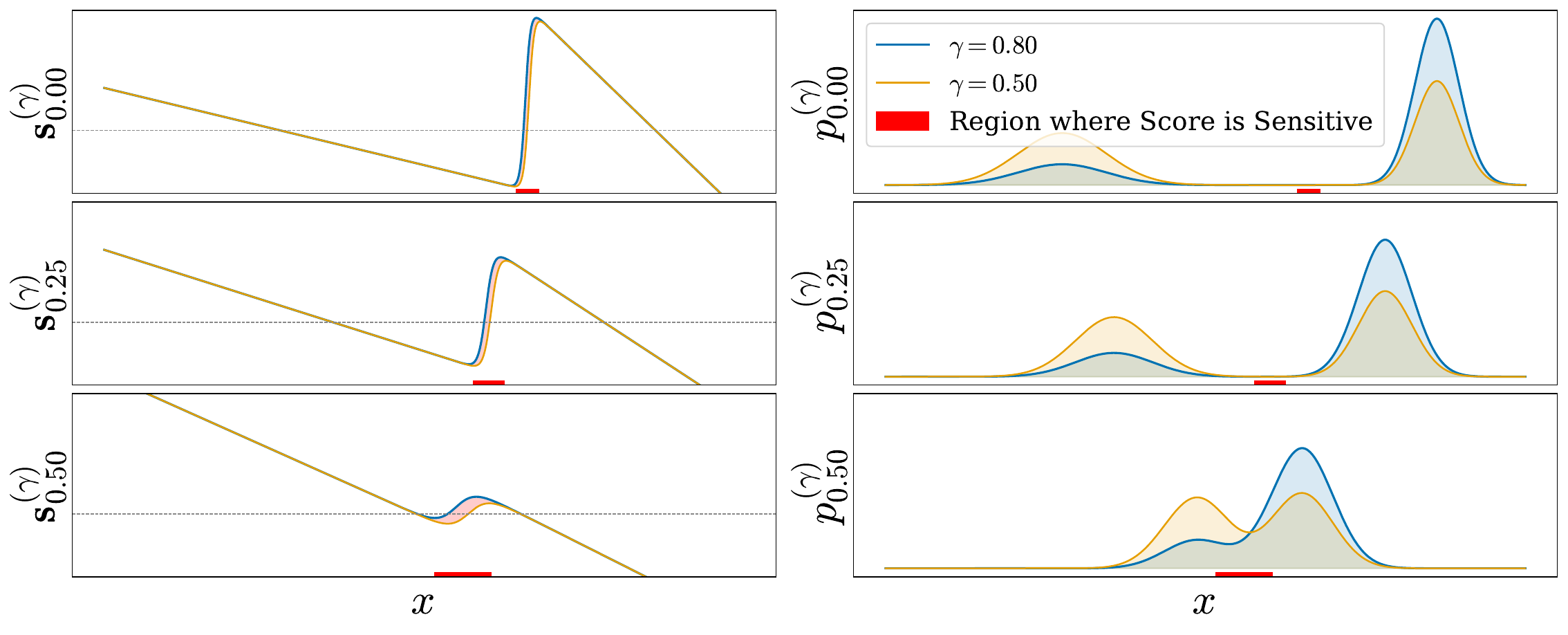}
    \caption{
    Score sensitivity to mixture weights at different diffusion times. At $t=0$, the scores for $\gamma=0.8$ and $\gamma=0.5$ are nearly identical except on a low-probability region. At $t>0$, the mixture components overlap more, and the region where the scores differ has a substantially higher probability.
    }
    \label{fig:scores}
\end{figure}

To quantify this phenomenon, we measure the sensitivity of the diffusion score matching loss $\ldsm$  \eqref{eq:diffScoreMatchingLoss} to changes in parameters of interest. 
Specifically, let $\{\pdist{\gamma}\}_{\gamma \in \bm{\Gamma}}$ be a set of parameterized distributions. Let $\pdist{\gamma^*}$ be a target distribution with target parameter $\gamma^*$. 
The \textit{diffusion score sensitivity index (DSSI)} of the target parameter $\gamma^*$ 
is defined as,
\begin{equation}\label{eq: DSSI}
    L(\gamma^*) = \inf_{\gamma \in \bm{\Gamma}}~ \frac{\ldsm(\pdist{\gamma},\pdist{\gamma^*})}{(\gamma-\gamma^*)^2}
\end{equation}
 
We note that both $\ldsm$ and $L(\gamma^*)$ depend on our choice of noise schedule.
If the target data distribution has mixture weight $\gamma^*$, and we learn a distribution with mixture parameter $\gamma$, then the squared error of mixture weight estimation $\gamma$ is bounded by the diffusion score matching loss divided by the DSSI. Thus, when the DSSI is small, our learned distribution may be highly misrepresentative of $\gamma^*$ even when the diffusion score matching loss is low. In contrast, when $L(\gamma^*)$ is large, we know that a small diffusion score matching loss indicates low error in the parameter $\gamma$. Our main contributions are as follows:
\begin{itemize}
    \item For general bimodal mixtures, we prove a strict minimax separation in mixture-weight recovery between estimators that use only the target-score data and estimators with access to score data at an intermediate, mode-overlapping noise level. (Theorem \ref{thm:sensitivity})
    \item For Gaussian mixtures, we prove a lower bound on the DSSI that holds in any dimension; thus, parameter estimation error is controlled by the DSM loss (Theorem \ref{thm: GMM Sensitivity Emerges}).
    
    \item We further empirically evaluate the DSSI on a real-world dataset. These experiments illustrate how design decisions relating to diffusion model training and inference-time sampling schedules can affect the DSSI and the fidelity of generated samples to the training data distribution.
\end{itemize}
\section{Preliminaries: score-based diffusion models}
\label{sec:prelim}
Score-based diffusion models sample from a target distribution on $\mathbb{R}^d$ by tracing out a one-parameter family of distributions $\{p_t\}_{t \in [0,1]}$ indexed by a diffusion time $t$, with $p_0$ being the target data distribution and $p_1 \approx \mathcal{N}(\mathbf{0}, \mathbf{I})$ the standard normal distribution. A \emph{forward process} progressively transforms samples from $p_0$ into samples from $p_1$ along this family. A \emph{reverse process} inverts the densities along the forward process, starting from $\mathcal{N}(0,\mathbf{I})$, using learned approximations of the scores of the intermediate marginals $p_t$.
Thus, score-based diffusions have 3 components: (a) the forward diffusion process and its corresponding reverse process, (b) a training objective for learning the marginal score functions $\nabla \log p_t$ of the intermediate distributions for $t\in [0,1]$,  and (c) a sampling procedure used to approximately generate samples from the target data distribution at inference time. In this section, we provide a brief description of each component.

\paragraph{Noising process.} 
Given a data distribution $p_0$ on $\mathbb{R}^d$, the noising process we consider is the {variance-preserving forward SDE} (VP-SDE)  proposed in \citet{song2021scorebasedgenerativemodelingstochastic}. Give a noise schedule $\set{\beta_t}$, this takes the form 
\begin{equation}
    \mathrm{d}{\vx}_t 
    \;=\; 
    -\tfrac{1}{2} \beta_t \, {\vx}_t\, \mathrm{d}t \;+\; \sqrt{\beta_t}\,\mathrm{d}\vect{w}_t,
    \qquad {\vx}_0 \sim p_0,\quad t \in [0,1],
    \label{eq:forward-sde}
\end{equation}
where $\vect{w}_t$ is a standard $d$-dimensional Wiener process. We denote by $p_t$ the marginal distribution of ${\vx}_t$. The forward conditional law of ${\vx}_t$ given ${\vx}_0$ under \eqref{eq:forward-sde} is Gaussian, 
\begin{equation}
    p_t({\vx}_t \mid {\vx}_0) 
    \;=\; 
    \mathcal{N}\!\left(
    {\vx}_t;\; \sqrt{\alpha_t}\,{\vx}_0,\; (1-\alpha_t)\,\vect{I}
    \right),
   \label{eq:forward-kernel}
\end{equation}
with conditional mean $\sqrt{\alpha_t} {\vx}_0$ and conditional covariance $(1-\alpha_t) \vect{I}$, where $\alpha_t=e^{-\int_0^t\beta_s \mathrm{d}s}$.\footnote{As a slight abuse of terminology, the sequences $\set{\alpha_t}$ and $\set{\beta_t}$ will interchangably be referred to as the ``noise schedule''.} 

\paragraph{Denoising process.}
\citet{ANDERSON1982313} established that the time-reversal of \eqref{eq:forward-sde} is itself an SDE:
\begin{equation}
    \mathrm{d}{\vx}_t 
    \;=\; 
    \left[-\tfrac{1}{2}{\beta_t}\,{\vx}_t \;-\; {\beta_t}\,\nabla_{{\vx}} \log p_t({\vx}_t)\right]
    \mathrm{d}t 
    \;+\; 
    \sqrt{{\beta_t}}\,\mathrm{d}\bar{\vect{w}}_t,
    \qquad {\vx}_1 \sim p_1,
    \label{eq:reverse-sde}
\end{equation}
where $\bar{\vect{w}}_t$ is the reverse-time Wiener process. 
Starting from ${\vx}_1 \sim p_1$ and integrating the reverse SDE with the exact score function from time $t=1$ to $t=0$ produces samples ${\vx}_0 \sim p_0$.  In practice, this reverse SDE is simulated starting from ${\vx}_1 \sim \mathcal{N}(0,\mathbf{I}) \approx p_1$ using the learned approximation $\vs_{\btheta}: \mathbb{R}^d \times [0,1] \rightarrow \mathbb{R}^d$ with parameters $\btheta  \in \boldsymbol{\Theta}$ trained such that $\vs_{\btheta}({\vx},t) \approx \nabla \log p_t({\vx})$ simultaneously for all noise levels to produce samples approximately distributed as the target $p_0$. 

For a given trained score model $\{\vs_{\btheta }(\cdot, t)\}_{t\in [0,1]}$ (or the true score function $\{\nabla \log p_t\}_{t\in [0,1]}$), generation of samples is an inference task: one may simulate the reverse SDE in \eqref{eq:reverse-sde} with any choice of consistent numerical integration schemes \citep{karras2022elucidating}.
The overall process has 
two distinct sources of error: the \emph{approximation error} incurred in learning the score
and the \emph{sampling error} incurred by the choice of sampling algorithm (e.g., discretization error, finite timesteps, etc.).

\section{Minimax bounds for parameter estimation from diffusion models}\label{sec: parameterized mixture families}
In this section, we show that when the score along the noising process exhibits sensitivity to $\gamma$, the information-theoretic barrier that disallows its recovery from the target score alone is removed. We prove that when the score is sensitive at some noise level, we obtain a lower parameter estimation error when compared to only using the target score. 
While stated explicitly for mixture weight as the parameter, the results in this section apply to any parameter. We consider only bimodal mixtures for simplicity, but make no distributional assumptions about the components. 
\paragraph{Mixture Family.}
Let $\pdist{0}$ and $\pdist{1}$ be two fixed probability distributions on $\mathbb{R}^d$. A probability density $\pdist{\gamma}, \gamma \in \bm{\Gamma} \subseteq (0,1),$ is a member of a family of two-component mixtures if it is of the form $\pdist{\gamma}\coloneqq (1-\gamma)\: \pdist{0} + \gamma \:\pdist{1}$.
The mixture parameter $\gamma$ controls the relative weights of the two component distributions, $\pdist{0}$ and $\pdist{1}$. We now consider the evolution of a mixture, $\pdist{\gamma},$ under a noising process or forward process in a score-based diffusion model. 
Specifically, when $p_0 \equiv \pdist{\gamma},$ we denote by $\pdist[t]{\gamma}$ the marginals of $\vx_t$ following the process \eqref{eq:forward-sde}. Let $\score{\gamma}$ denote the {score} of the density $\pdist{\gamma}$, and similar for marginal densities of $\vx_t$:
\begin{equation}\label{eq:score-and-noisyscore}
  \score{\gamma} := \nabla \log \pdist{\gamma} \qquad \text{and} \qquad \score[t]{\gamma} := \nabla \log \pdist[t]{\gamma}.  
\end{equation}
The relative weights of the mixture are preserved under the forward process: the noisy marginal $\pdist[t]{\gamma}$ is itself a $\gamma$-mixture of the noisy component marginals $\pdist[t]{0}$ and $\pdist[t]{1}$ (see \Cref{lemma:mixture-closure}). 
As a consequence the score of the noisy mixture density $\pdist[t]{\gamma}$ can be decomposed as follows,
\begin{align}
\score[t]{\gamma}({\vx})
    = w_t^{(\gamma,0)}({\vx})\,\score[t]{0}({\vx}) \;+\; w_t^{(\gamma,1)}({\vx})\,\score[t]{1}({\vx})
    \label{eq:mix score}
\end{align}
with component weights $w_t^{(\gamma,0)}({\vx}) = (1-\gamma)\,\pdist[t]{0}({\vx})/\pdist[t]{\gamma}({\vx})$ and $w_t^{(\gamma,1)}({\vx}) = \gamma\,\pdist[t]{1}({\vx})/\pdist[t]{\gamma}({\vx})$.
This decomposition enables a simplified theoretical analysis, as we see in the next section. 

\paragraph{Classical score matching loss/relative Fisher information.} The classical score matching (SM) loss measures the relative Fisher information between distributions \citep{10.5555/1046920.1088696}. For a pair of mixtures, 
\begin{align}
    \lsm(\pdist{\gamma'},\pdist{\gamma}) := \mathbb{E}_{\vx \sim \pdist{\gamma}} \|\score{\gamma'}(\vx)-\score{\gamma}(\vx) \|^2.
\end{align}
The relative Fisher information between a pair of mixture distributions with identical components but different mixture weights is small when the components are well-separated. In \cref{fig:scores}, the scores of Gaussian mixtures may appear nearly identical outside a low-probability region, an effect strengthened with increasing separation between mixture components as shown in \cref{fig: different loss functions}.   
Any sampling algorithm that relies solely on the score, such as Langevin sampling, exhibits near-identical behavior across distinct mixture distributions; consequently, the generated samples may not accurately reflect the target mixture weights.  

\paragraph{Diffusion score matching loss for mixtures.} 
The diffusion score matching (DSM) loss anneals the score matching loss across the noising process \eqref{eq:forward-sde} \citep{song2019generative}. For a pair of mixtures, 
\begin{align}
    \ldsm( \pdist{\gamma'},\pdist{\gamma}) &:=  \mathbb{E}_{\substack{t \sim \mathrm{Unif}(0,1)\\\vx\sim \pdist[t]{\gamma}}}[\lambda_t \|\score[t]{\gamma'}(\vx) - \score[t]{\gamma}(\vx)\|^2] 
    = \mathbb{E}_{t \sim \mathrm{Unif}(0,1)}[\lambda_t \: \lsm(\pdist[t]{\gamma'},\pdist[t]{\gamma})].  \label{eq:diffScoreMatchingLoss}
\end{align}
Here, $\lambda_t$ is a weighting factor set to $\lambda_t = 1-\alpha_t$ following \citet{song2021scorebasedgenerativemodelingstochastic}. The DSM loss measures the relative Fisher information over the entire sequence of noised densities generated by the process \eqref{eq:forward-sde}. Recall that the marginals $p_t$'s and scores $\vs_t$'s depend on the noise schedule $\{\alpha_t\}_t$, as shown in \eqref{eq:forward-kernel} and \eqref{eq:score-and-noisyscore}, and this way, the diffusion score matching loss $\ldsm$ is dependent on the choice of noise schedule. 
In practice, 
we may train a neural network $\vs_{\btheta}$ to approximate the score function. With some abuse of notation, we denote $\pdist{\btheta}$ the distribution\footnote{The superscript here indicates the trainable parameters $\btheta$ of the learned network.} of samples generated using the score functions $\{\vs_{\btheta}(\cdot, t)\}_{t\in (0,1)}$ for a fixed noise schedule as per \eqref{eq:reverse-sde}. Since the true score function $\score[t]{\gamma}$ and indeed the target distribution $\pdist{\gamma}$ are unknown,  calculating the diffusion score matching loss 
is intractable. 
Instead, we minimize the objective 
\begin{equation}\label{eqn: neural network loss}
\lnn(\pdist{\btheta}, \pdist{\gamma}):=\mathbb{E}_{\substack{t \sim \mathrm{Unif}(0,1)\\\vx_0\sim \pdist{\gamma}, \vx_t\sim \pdist[t]{\gamma}(\cdot | \vx_0)}}[\lambda_t \|\vect \vs_{\btheta}(\vx_t, t) - \nabla \log \pdist[t]{\gamma}(\vx_t|\vx_0)\|^2].
\end{equation}

As per \Cref{{lem:nn-dsm-ordering}}, there exists a constant $C\geq 0$, depending only on $\pdist{\gamma}$, such that $\lnn(\pdist{\btheta}, \pdist{\gamma})=\ldsm(\pdist{\btheta}, \pdist[t]{\gamma}) + C$. 
Thus, 
$\ldsm(\pdist{\btheta}, \pdist{\gamma}) \leq \lnn(\pdist{\btheta}, \pdist{\gamma})$.

\paragraph{Minimax theory for mixture weight recovery.} From classical minimax theory (see e.g., \citep{geer2000empirical} for a textbook exposition), we know that the error in any estimator of a parameter is lower-bounded by the informativeness of the data distribution about the parameter. Now, the data distribution loses information about the parameter in the noising process. How, then, would annealing benefit  arameter recovery, if the original mixture is well-separated and hence insensitive to parameter changes? The answer lies in the fact that diffusion models learn to approximate the \emph{scores} at various noise levels. By minimizing $\ldsm,$ diffusion models obtain information about the parameter through the scores even if such information is not present in the samples.  Our main result in this section shows the existence of a noise scale at which the score-data law is sensitive, provably enabling parameter recovery rates that are unattainable from target-score data alone.

\begin{definition}[Score oracle and score-restricted estimators] We consider a Gaussian White Noise (GWN) oracle from nonparametric regression \cite{tsybakov2008nonparametric} of the form: $\hat{\mathbf s}_{t}^{(\gamma)} = \score[t]{\gamma}\:  + \sigma_t \: W,$ where $\sigma_t > 0$ is a scalar and $W$ is an isonormal field indexed by $L^2(\pdist[t]{\gamma})$. The statistical experiment is to estimate $\gamma$ from the observation, $\hat{\mathbf s}^{(\gamma)}_t.$ Let $\qdist[t]{\gamma}$ denote the distribution of the observation, which is a Gaussian measure with mean $\score[t]{\gamma}.$ We define the set, $\Gamma_{\mathrm{score},t},$ of all measurable functions of the GWN observations to $(0,1)$ to be the class of ``score-restricted estimators''.  Further, we use $ \mathbb{E}_{\gamma^*,t}$ as a shorthand for expectations with respect to the score oracle model, $\qdist[t]{\gamma}.$ See also Remark \ref{remark:score-restricted-estimators} for more details on the estimator classes.
\label{def:scoreEstimator}
\end{definition}

We call $\Gamma_{\mathrm{score},t}$ \emph{score-restricted estimators} since they access the score vector field (corrupt by Gaussian noise) and not the target data samples. In other words, our class of estimators $\Gamma_{\mathrm{score},t}$ models parameter recovery achievable from having oracle access to a corrupt score vector field at time $t,$ reflecting the denoising/generation phase in diffusion models after scores have been trained. 
Since the observation model, $\qdist[t]{\gamma},$ is Gaussian, we can compute, fixing the indexing Hilbert space for $W$ to be $L^2(\pdist[t]{\gamma}),$ that $\mathrm{KL}(\qdist[t]{\gamma}||\qdist[t]{\gamma'}) = \lsm(\pdist[t]{\gamma'}, \pdist[t]{\gamma})/(2\sigma_t^2).$ See Remark \ref{remark:cameronMartin} for this computation. 

\begin{assumption}[Stable score-matching at time $t$] We assume that at some time $t > 0,$ the score observation law, $\qdist[t]{(\cdot)},$ satisfies $ (\gamma - \gamma')^2 \leq A_t^2 \: \mathrm{KL}(\qdist[t]{\gamma}||\qdist[t]{\gamma'})$ for some $A_t >0$ and  all $\gamma, \gamma' \in (0,1).$  
\label{assumption:continuity}
\end{assumption}

Since $\mathrm{KL}(\qdist[t]{\gamma}||\qdist[t]{\gamma'}) = \lsm(\pdist[t]{\gamma'}, \pdist[t]{\gamma})/(2\sigma_t^2),$ the above assumption implies that when the score matching loss is bounded above by some $C,$ then, the squared difference in the corresponding parameters, $(\gamma - \gamma')^2 \leq  \:A_t^2\: C / (2\sigma_t^2).$ 
Intuitively, this means that, for an 
$O(\sigma_t^2)$
constant $A_t,$ the score observations predict distances on the space of weight parameters.    Our next definition allows us to make this setting rigorous by comparing score observations at  time $t$ to a different behavior of the scores at time $0:$ the target score observations, $\hat{\mathbf s}^{(\gamma)},$ are insensitive in law to changes in $\gamma$ and therefore, do not predict distances on parameter space. In the next section, note that we obtain explicit estimates on $A_t$ for Gaussian mixtures. 

\begin{definition}[$(\epsilon,\delta)$-diffusion sensitivity]
\label{definition:diffusionSensitivity} We say that a mixture family $\{\pdist{\gamma}\}$ exhibits an $\epsilon$-diffusion sensitivity if for every $\gamma^* \in (0,1),$ there exist $\epsilon$-packing sets $S(\gamma^*) \subset (0,1)$ containing $N(\gamma^*,\epsilon) \geq 16$ elements with 
$\mathrm{KL}(\qdist{\gamma}||\qdist{\gamma'}) = \lsm(\pdist{\gamma'}, \pdist{\gamma})/(2\sigma_0^2) \leq \delta(\epsilon,N) \leq \epsilon^2/4,$ for all $\gamma, \gamma' \in S(\gamma^*)$ satisfying $|\gamma - \gamma'| \geq  4\sqrt{2}\: A_t \epsilon$. 
Note that $A_t$ here is as defined in Assumption \ref{assumption:continuity}.\end{definition}

\begin{definition}[$\epsilon$-diffusion sensitivity]
\label{definition:diffusionSensitivity} We say that a mixture family $\{\pdist{\gamma}\}$ exhibits $\epsilon$-diffusion sensitivity if for every $\gamma^* \in (0,1),$ there exist $\epsilon$-packing sets $S(\gamma^*) \subset (0,1)$ containing $N(\gamma^*,\epsilon) \geq 16$ elements with 
 $\mathrm{KL}(\qdist{\gamma}||\qdist{\gamma'}) = \lsm(\pdist{\gamma'}, \pdist{\gamma})/(2\sigma_0^2) \leq \epsilon^2/4,$ for all $\gamma, \gamma' \in S(\gamma^*)$ satisfying $|\gamma - \gamma'| \geq  4\sqrt{2}\: A_t \epsilon$. Note that $A_t$ here is as defined in Assumption \ref{assumption:continuity}.\end{definition}

\begin{theorem}
\label{thm:sensitivity}
Fix $t>0$ satifying Assumption \ref{assumption:continuity}. Let $\epsilon_{t} > 0$ be a covering radius that solves $\epsilon_{t}:= \sigma_t\: \sqrt{V(\epsilon_{t})},$ where $V(\epsilon_{t})$ is the $\epsilon_{t}$-covering entropy\footnote{Refer \Cref{def:cover-entropy} for an explicit statement} with respect to square root KL divergence of the class of Gaussian measures $\{\qdist[t]{\gamma}\}.$ 
     Suppose the bimodal mixture family satisfies \Cref{assumption:continuity} and exhibits an $\epsilon$-diffusion sensitivity with $\epsilon = \epsilon_{t}.$
    Then, we have the following minimax upper bound on the score-restricted estimators at time $t$, 
    \begin{align}
        \min_{\hat{\gamma} \in \Gamma_{\mathrm{score},t}}\max_{\gamma^*}~
        \mathbb{E}_{\gamma^*,t} |\gamma^* - \hat{\gamma}|^2 \leq 2\:A_t^2\: \epsilon_{t}^2\: .
        \label{eq:upperBound}
    \end{align} 
Simultaneously, if the packing set cardinality $N(\gamma^*, \epsilon_{t}) \geq e^{\sigma_t^2\:V(\epsilon_{t})},$ for all $\gamma^*,$ it  holds that the minimax lower bound achievable for target score-restricted estimators is the following:
    \begin{align}
\label{eq:lowerBound}
\min_{\hat{\gamma} \in \Gamma_{\mathrm{score},0}}\max_{\gamma^*}~ \mathbb{E}_{\gamma^*,0} |\gamma^* - \hat{\gamma}|^2 \geq 4\: A_t^2\: \epsilon_{t}^2.
    \end{align}
\end{theorem}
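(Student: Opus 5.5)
The two bounds have different sources. The upper bound \eqref{eq:upperBound} comes from a Le Cam--Birgé/Yang--Barron type construction at time $t$. The lower bound \eqref{eq:lowerBound} comes from Fano's inequality applied to the packing provided by $\epsilon$-diffusion sensitivity at time $0$. Throughout I use the identity $\mathrm{KL}(\qdist[t]{\gamma}\|\qdist[t]{\gamma'})=\lsm(\pdist[t]{\gamma'},\pdist[t]{\gamma})/(2\sigma_t^2)$ from Remark \ref{remark:cameronMartin}.

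\emph{Upper bound.} Take a minimal $\epsilon_t$-cover $\{\gamma_1,\dots,\gamma_M\}$ of $\{\qdist[t]{\gamma}\}$ in $\sqrt{\mathrm{KL}}$, so that $\log M=V(\epsilon_t)$. From the observation $\hat{\mathbf s}^{(\gamma^*)}_t$, I select $\hat\gamma=\gamma_{\hat j}$ by a minimum-distance rule, namely the maximum likelihood over the finite net. A tournament estimator is a fallback if MLE is awkward in the Gaussian white noise model; there the log-likelihood ratios are explicit Gaussian variables via Cameron--Martin.

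The standard net-MLE risk bound gives $\mathbb{E}\,\mathrm{KL}(\qdist[t]{\gamma^*}\|\qdist[t]{\hat\gamma})\lesssim \epsilon_t^2+\sigma_t^2 V(\epsilon_t)$ in the normalized form. The balance $\epsilon_t=\sigma_t\sqrt{V(\epsilon_t)}$ makes the two terms equal, giving $\le 2\epsilon_t^2$ once constants are tracked.

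Assumption \ref{assumption:continuity} then converts this into parameter error: $\mathbb{E}|\gamma^*-\hat\gamma|^2\le A_t^2\,\mathbb{E}\,\mathrm{KL}\le 2A_t^2\epsilon_t^2$. \textbf{The main obstacle} is getting the exact constant $2$. This is where the Gaussian structure matters: I expect to bound the probability of selecting a far net point by a Gaussian tail $\exp(-\mathrm{KL}/4)$ and then use a union bound over $M=e^{V}$ points. The fixed-point scaling is chosen precisely so that this union bound closes.

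\emph{Lower bound.} Fix $\gamma^*$ and take the packing set $S(\gamma^*)$ from the definition of $\epsilon$-diffusion sensitivity with $\epsilon=\epsilon_t$. Extract a subset whose points are pairwise separated by at least $4\sqrt2 A_t\epsilon_t$. On that subset every pairwise KL of the time-$0$ observation laws $\qdist{\gamma}$ is at most $\epsilon_t^2/4$.

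Reduce estimation to testing in the usual way: $\max\mathbb{E}|\gamma-\hat\gamma|^2\ge (2\sqrt2A_t\epsilon_t)^2\,\inf_\psi\max_j P_j(\psi\ne j)=8A_t^2\epsilon_t^2\,p_e$. Fano's inequality gives
\[
p_e\ge 1-\frac{\epsilon_t^2/4+\log 2}{\log N}.
\]
The hypothesis $\log N\ge \sigma_t^2V(\epsilon_t)$, together with $N\ge16$ (so $\log 2/\log N\le 1/4$) and $\epsilon_t^2=\sigma_t^2V(\epsilon_t)$, gives $\epsilon_t^2/(4\log N)\le 1/4$. Hence $p_e\ge 1/2$, and the lower bound is $4A_t^2\epsilon_t^2$.

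The only delicate point here is that the separated sub-packing keeps enough cardinality. I would handle this by reading the definition as guaranteeing that the KL bound holds for all separated pairs in $S(\gamma^*)$, and by choosing $N$ counted after thinning, which is what the hypothesis $N(\gamma^*,\epsilon_t)\ge e^{\sigma_t^2V(\epsilon_t)}$ is meant to supply.
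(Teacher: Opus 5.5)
Your lower bound is the paper's argument: Fano's method (Yu's Lemma~3) on the sensitivity packing, with separation $4\sqrt2A_t\epsilon_t$ and error probability at least $1/2$, giving $(4\sqrt2A_t\epsilon_t)^2/8=4A_t^2\epsilon_t^2$. Your check $\mathrm{KL}\le\epsilon_t^2/4=\sigma_t^2V(\epsilon_t)/4\le\log N/4$ is cleaner than the paper's chain, and your reading of the separated sub-packing is the one the paper uses implicitly.

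\textbf{The gap is in the upper bound.} The constant $2$ does not come out of a maximum-likelihood (minimum-distance) selector over the net plus a union bound. Let $K_j$ be the KL divergence from the time-$t$ observation law at $\gamma^*$ to the one at the net point $\gamma_j$; these are the units in which the cover is measured. Let $j_0$ be the nearest net point, so $K_{j_0}<\epsilon_t^2$. In the Gaussian model, for $K_j\ge\epsilon_t^2$ one has $\Pr(j\text{ beats }j_0)\le\exp\bigl(-(\sqrt{K_j}-\epsilon_t)^2/4\bigr)$. A union bound over $M=e^{V}$ points therefore controls $\{K_{\hat j}>u\}$ only once $(\sqrt u-\epsilon_t)^2\ge 4V$, so it yields $\mathbb{E}K_{\hat j}\approx(\epsilon_t+2\sqrt V)^2$. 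This is:
\begin{itemize}
\item already about $4V$ when $\gamma^*$ is itself a net point;
\item about $9\epsilon_t^2$ when $V$ and $\epsilon_t^2$ are balanced;
\item $\epsilon_t^2(1+2/\sigma_t)^2$ under the stated fixed point $V=\epsilon_t^2/\sigma_t^2$.
\end{itemize}
The union-bound term scales with $V$ in the cover's own KL units, not with $\sigma_t^2V$, so the fixed point is not what makes the union bound close. Standard least-squares oracle inequalities over a finite set likewise carry coefficients strictly larger than $1$ on both the approximation term and the entropy term. So your route gives \eqref{eq:upperBound} only with a constant well above $2$.

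The paper's $2$ comes from citing Theorem~2 of Yang--Barron, i.e.\ Barron's mixture argument. The uniform Bayes mixture over the net loses at most $\log M$ in log-likelihood against the best net point. After the chain rule and Ces\`aro averaging, its KL risk is at most (approximation term) $+$ (entropy term), each with coefficient exactly $1$, and the fixed point then yields $2\epsilon_t^2$. This is the usual advantage of averaging over a finite net rather than selecting from it.

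That estimator is improper, however. It is a mixture of Gaussian measures, not the observation law of any single $\hat\gamma$, so Assumption~\ref{assumption:continuity} does not by itself turn its KL risk into parameter risk; the paper simply asserts that it lies in the mixture class. Your proper selector is exactly what makes $\mathbb{E}|\gamma^*-\hat\gamma|^2\le A_t^2\,\mathbb{E}K_{\hat j}$ legitimate, but it costs the constant.

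To prove the statement as written, you would need the mixture bound plus a separate argument that extracts a $\hat\gamma$ from the improper estimate with no loss in the constant. As it stands, you can claim only $\mathbb{E}|\gamma^*-\hat\gamma|^2\le C A_t^2\epsilon_t^2$ for an explicit $C>2$, which under the paper's normalization also depends on $\sigma_t$.
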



\paragraph{Proof.} The upper bound in \eqref{eq:upperBound} is a direct application of Theorem 2 from \citep{yangBarron}. Then, using a generalized Fano's method (Lemma 3 of \citep{binyu}) and an appropriate choice of $\delta$ as a function of $\epsilon$ in \cref{definition:diffusionSensitivity} that limits the packing entropy of the cover of $(0,1)$ with cardinality $e^{V(\epsilon_{t})},$ we obtain the desired lower bound in \eqref{eq:lowerBound}. We defer a more detailed proof to  Appendix \ref{sec:lowerUpperBounds}. \qed

Theorem \ref{thm:sensitivity} elucidates the conditions under which a diffusion model learns the mixture weights accurately when target score estimation followed by Langevin sampling does not. The above analysis 
applies to any bimodal mixture density and to any scalar parameter in place of the mixture weight. 
The main mechanism that enables parameter recovery is that the score field at some time $t$
changes more rapidly with parameter changes. In contrast, the target score at time 0 may not vary, creating an information-theoretic barrier that limits the accuracy of any
estimator. That is, no estimator can distinguish two close values of $\gamma$ using a finite number of target
score samples.

\begin{remark}
A key contribution of our work is to explain the surprising phenomenon in which a diffusion model can accurately recover a parameter even when the target score remains insensitive to small perturbations of the parameter. In response, we defined the notion of DSSI, \eqref{eq: DSSI}, which measures the sensitivity of the scores along the entire noising process, as opposed to only the target score, to infinitesimal changes in the parameter. The most interesting scenario that is directly relevant to explaining ``accurate parameter recovery despite target score insensitivity'' is one where an intermediate score is ``much more'' sensitive to the parameter than the score of the target distribution. Our definition of the $\epsilon$-sensitivity is designed to capture this scenario and further allows us to fully explain the information-theoretic mechanism underlying accurate recovery (Theorem \ref{thm:sensitivity}). 
\end{remark}




We remark that \citet{yangBarron} constructs a Bayes density estimator that belongs to the mixture class and achieves the upper bound \eqref{eq:upperBound}. Thus, we can conclude from Theorem \ref{thm:sensitivity} that using score observations at time $t$ leads to provably better parameter recovery.


\textbf{Resolving data-processing inequality.} Saying a family of distributions exhibits an $(\epsilon,\delta)$-sensitivity essentially means that more information about the parameter is created in the noising process, which, at first glance, appears to violate the data processing inequality. The law of noised samples, $\pdist[t]{\gamma},$ cannot be more informative about $\gamma$ at any time $t$ than $\pdist{\gamma}$ (the target mixture), due to the data processing inequality. More precisely, since $\pdist[t]{\gamma}$ results from a $\gamma$-independent convolution operation on $\pdist[0]{\gamma},$ the data processing inequality gives: 
$\mathrm{KL}(\pdist[t]{\gamma'}||\pdist[t]{\gamma}) \leq \mathrm{KL}(\pdist{\gamma'}||\pdist{\gamma}).$ But, our key idea here is that diffusion models use the score observation fields, $\hat{\mathbf s}^{(\gamma)}_t,$ to recover $\gamma$, even when $\hat{\mathbf s}^{(\gamma)}$ can not separate $\gamma$ values sufficiently. Specifically, under the assumptions of Theorem \ref{thm:sensitivity} on the mixture family, the score observations can be more informative about $\gamma$ at time $t$ than at time 0, i.e., $\mathrm{KL}(\qdist[t]{\gamma'}||\qdist[t]{\gamma}) \geq \mathrm{KL}(\qdist{\gamma'}||\qdist{\gamma}),$ without violating data processing. In other words, \cref{definition:diffusionSensitivity} defines a structural property of mixture distributions: the family, $\{\qdist[t]{\gamma}\}$ at intermediate $t$ can be more separated in $\gamma$ than at $t=0$ when mixture components do not overlap.
 In Lemma \ref{remark:dataProcessing}, we prove why this counterintuitive inequality does not violate data processing and why no estimator in $\Gamma_0$ can reconstruct the score observation model that $\Gamma_t$ has access to. 

\subsection{Effect of modified sensitivity to mixture weight}\label{sec: GMM Sens Exp}

While Theorem \ref{thm:sensitivity} provides a mechanism for parameter recovery, we now discuss experimental evidence for the same mechanism but using the practically computable diffusion score sensitivity index \eqref{eq: DSSI}, rather than the information-theoretic quantities used in \cref{definition:diffusionSensitivity}. Our numerical results show that the score sensitivity index quantifies how accurately a diffusion model captures the mixture weight of a Gaussian mixture model. We generate samples using the true analytical score function $\score[t]{\gamma^*}(\vect x)$ and a sequence of modified noise schedules with a coarse time discretization ($T=100$), designed to artificially reduce the diffusion score sensitivity index $L(\gamma^*)$ (for details, see \Cref{sec: Reparam}). Recall that $L(\gamma^*)$ and $\ldsm$ depend on the choice of noise schedule $\{\alpha_t\}_t$. 

We modify both linear and squared-cosine noise schedules and numerically estimate $L(\gamma^*)$ for these schedules.  \Cref{fig: gmm est gamma vs sensitivity index} shows the estimated mixture weight $\hat\gamma$ (calculated using Expectation-Maximization) plotted against $L(\gamma^*)$. We see that as the sensitivity index $L(\gamma^*)$ decreases, both the bias and variance in $\hat\gamma$ increase. This demonstrates that noise schedules with lower $L(\gamma^*)$ yield mixture-weight estimates that are less robust to discretization errors, even with a perfect score estimator.

On the other hand, the means and covariances of the modes are uniformly well-approximated across the different noise schedules. This shows that errors in estimating the mixture weight can occur without noticeable artifacts in the geometry of individual modes. Modern accelerated diffusion sampling schemes (e.g., \citep{song2021denoising}) leverage coarse time discretizations to reduce the computational cost of sampling while maintaining comparable sample fidelity. Our results suggest that such sampling approaches may be vulnerable to misrepresenting parameters of interest in the data distribution while appearing to generate accurate samples.

\begin{figure}
    \centering
    \begin{subfigure}[htbp]{0.24\linewidth}
        \includegraphics[width=\columnwidth]{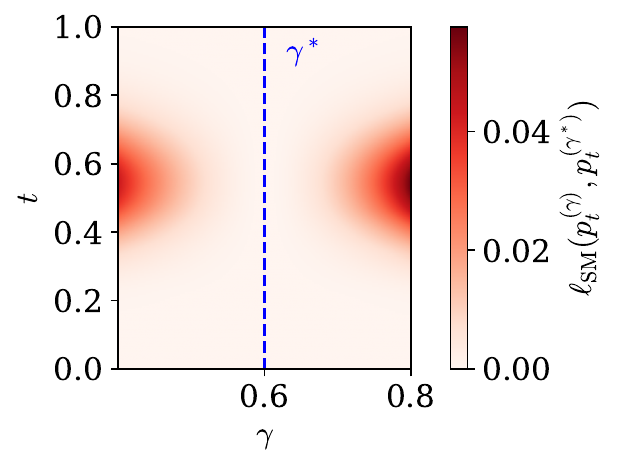}
        \caption{}
        \label{fig: gmm lin heatmap}
    \end{subfigure}
    \hfill
    \begin{subfigure}[htbp]{0.24\linewidth}
        \includegraphics[width=\columnwidth]{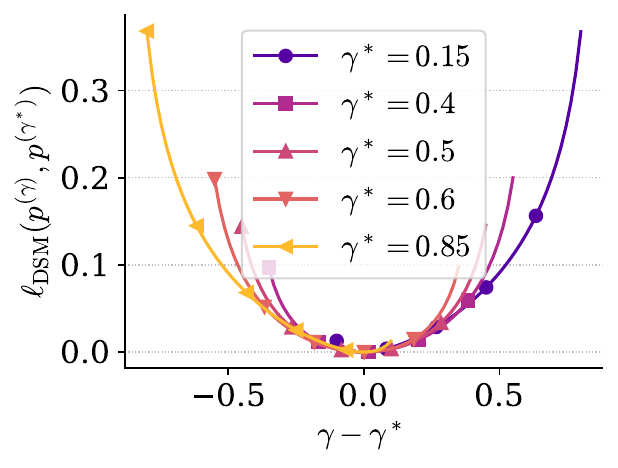}
        \caption{}
        \label{fig: gmm score sensitivity}
    \end{subfigure}
    \hfill    
    \begin{subfigure}[htbp]{0.24\linewidth}
        \includegraphics[width=\columnwidth]{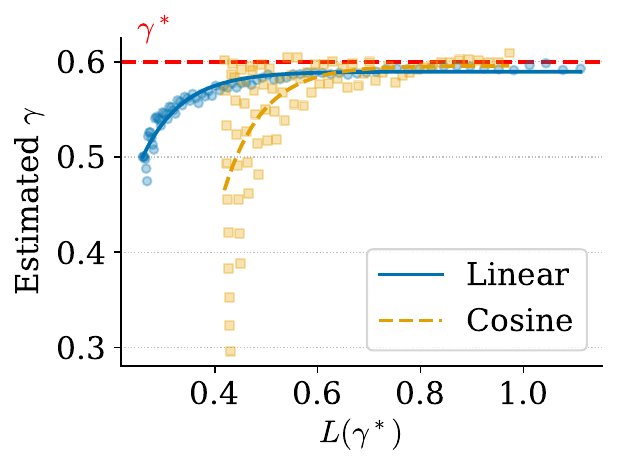}
        \caption{}
        \label{fig: gmm est gamma vs sensitivity index}
    \end{subfigure}
    \hfill    
    \begin{subfigure}[htbp]{0.24\linewidth}
        \includegraphics[width=\columnwidth]{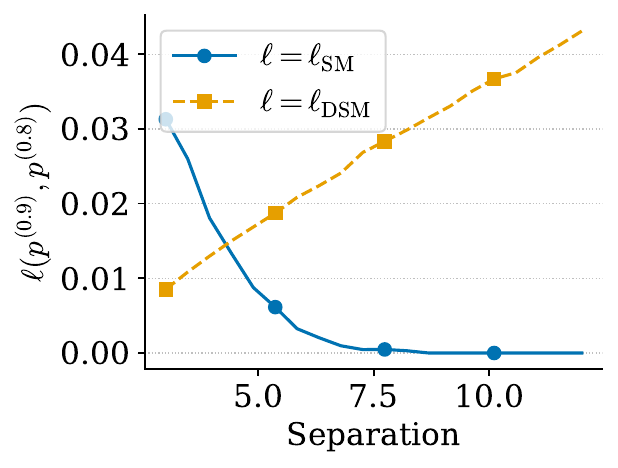}
        \caption{}
        \label{fig: different loss functions}
    \end{subfigure}
    
    \caption{
    Score sensitivity for Gaussian mixture models in dimension $d=10$. (a) $\lsm(\pdist[t]{\gamma}, \pdist[t]{\gamma^*})$ as a function of $t$ and $\gamma$. 
    (b) $\ldsm(\pdist{\gamma}, \pdist{\gamma^*})$ vs. $\gamma-\gamma^*$ for multiple $\gamma^*$, showing non-negligable sensitivity throughout. (c) Mixture weight estimates $\hat\gamma$ from generated data for $\gamma^*=0.6$ under modified linear and cosine noise schedules (see \Cref{sec: Reparam}). The bias and variance of $\hat\gamma$ increase as $L(\gamma^*)$ decreases. (d) $\lsm$ and $\ldsm$ as a function of separation for $\gamma^*=0.8$ and $\gamma=0.9$. $\ldsm$ remains nonnegligible while $\lsm$ rapidly decays.
    }
    \label{fig:gmm prac sens}
\end{figure}

\section{Diffusion sensitivity to GMM mixture weights}

We show that for a bimodal isotropic Gaussian mixture in any dimension, the diffusion score sensitivity index (DSSI) $L(\gamma^*)$ 
is non-negligible. As a result, a diffusion model with a well-trained score generates samples with the correct weights even for extreme values of $\gamma$.  

\begin{theorem}[Diffusion sensitivity in bimodal Gaussian mixtures]\label{thm: GMM Sensitivity Emerges}
Consider the family $\pdist{\gamma}(\vx) =(1-\gamma) \; \CalN(\vx; \mu_0, \vect I)  + \gamma \;\CalN(\vx; \mu_1, \vect I)$ and a linear noise schedule $\beta_t=\beta_1 t$ for $\beta_1>0$. 
Let $\alpha_t=\exp\fitparenth{\int_0^t\beta_s \mathrm{d}s}$. 
Then for any $a \in (0,1/2)$
\begin{align}
   \min_{\hat \gamma \in [a,1-a]}\frac{\ldsm(\pdist{\hat\gamma},\pdist{\gamma^*})}{(\hat\gamma-\gamma^*)^2}>\frac{(1-\Phi(\norm{\mu_1-\mu_0}\sqrt{\alpha_1}))\fitparenth{\sqrt{k^2+16}-k}}{7\beta_1\gamma^*(1-\gamma^*)}\exp\fitparenth{-(k^2+k)/4} 
   \label{eq:main_thm}
\end{align}
with $k:=\left|\log \frac{1-a}{a}\right|+\left|\log\frac{1-\gamma^*}{\gamma^*}\right|$ and $\Phi(\cdot)$ the CDF of $\mathcal{N}(0,1)$. 



\end{theorem}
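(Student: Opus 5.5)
Reduce everything to one dimension, then bound the score difference at each noise level from below, and finally integrate over $t$.

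\textbf{Step 1: reduce to one dimension.} Under the forward kernel \eqref{eq:forward-kernel}, the noised marginal is
\[
\pdist[t]{\gamma}=(1-\gamma)\CalN(\sqrt{\alpha_t}\mu_0,\vect I)+\gamma\CalN(\sqrt{\alpha_t}\mu_1,\vect I),
\]
so the noise levels do not mix the components. With $\Delta=\mu_1-\mu_0$ and the logit $\eta=\log\frac{\gamma}{1-\gamma}$, the mixture score \eqref{eq:mix score} becomes
\[
\score[t]{\gamma}(\vx)=-(\vx-\sqrt{\alpha_t}\mu_0)+\sqrt{\alpha_t}\,\Delta\,\sigma\bigl(\eta+h_t(\vx)\bigr),
\]
where $\sigma$ is the logistic function and $h_t(\vx)=\sqrt{\alpha_t}\Delta^\top(\vx-\sqrt{\alpha_t}\bar\mu)$ with $\bar\mu=(\mu_0+\mu_1)/2$.

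The score difference is therefore a multiple of the fixed vector $\Delta$:
\[
\|\score[t]{\hat\gamma}-\score[t]{\gamma^*}\|^2=\alpha_t\|\Delta\|^2\bigl(\sigma(\hat\eta+h)-\sigma(\eta^*+h)\bigr)^2 .
\]
Under $\vx\sim\pdist[t]{\gamma^*}$, the scalar $h$ is a two-component one-dimensional Gaussian mixture. Its means are $\pm\alpha_t\|\Delta\|^2/2$ and its variance is $\alpha_t\|\Delta\|^2$. Hence the whole problem depends only on the single scalar $s_t=\sqrt{\alpha_t}\|\Delta\|$, which gives the dimension-free claim.

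\textbf{Step 2: mean-value bounds for the logistic differences.} I convert differences in $\sigma$ to differences in $\eta$, and differences in $\eta$ to differences in $\gamma$:
\[
|\sigma(\hat\eta+h)-\sigma(\eta^*+h)|\ge|\hat\eta-\eta^*|\min_{\xi\in[\hat\eta,\eta^*]}\sigma'(\xi+h),
\qquad
|\hat\eta-\eta^*|\ge\frac{|\hat\gamma-\gamma^*|}{\max\gamma(1-\gamma)} .
\]
This is where the factor $\gamma^*(1-\gamma^*)$ should enter, after a suitable accounting of which endpoint dominates. Both $\hat\eta$ and $\eta^*$ lie within distance $k$ of $0$, since $k$ is the sum of the two absolute logits. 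So for $|h|\le r$ I get $\sigma'(\xi+h)\ge \sigma'(k+r)\gtrsim e^{-(k+r)}$.

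I also restrict to the event that $\vx$ comes from the component on the "correct side", so that $h$ is localized. The factor $1-\Phi(\|\Delta\|\sqrt{\alpha_1})$ looks like the probability of such a Gaussian event at the terminal time $t=1$. It is presumably used uniformly because $\alpha_t\ge\alpha_1$, which makes $s_t$ largest-variance-worst at $t=1$.

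\textbf{Step 3: integrate over $t$ and optimize the window.} The plan is to lower bound the probability that $h$ falls in a window of width about $r$ near the point where $\sigma'$ is maximal. I then choose $r$ to balance the Gaussian mass against $e^{-r}$. This optimization is plausibly the source of the factors $\sqrt{k^2+16}-k$ (the root of a quadratic $r^2+kr-4=0$) and $\exp(-(k^2+k)/4)$.

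The outer integral is $\int_0^1(1-\alpha_t)\,\alpha_t\|\Delta\|^2\,(\cdots)\,dt$. I would bound it by substituting $u=\alpha_t$, using $d\alpha_t=-\beta_1 t\,\alpha_t\,dt$, which yields the $1/\beta_1$ factor. The constant $7$ is absorbed from crude numerical bounds such as $\sigma'\ge e^{-|x|}/4$.

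\textbf{Main obstacle.} The main difficulty is Step 3: getting a clean, uniform-in-$\|\Delta\|$ lower bound on
\[
\mathbb{E}\bigl[\sigma'(\xi+h)^2\bigr]\,\alpha_t\|\Delta\|^2 .
\]
This is hard when $\|\Delta\|$ is large, because $\alpha_t\|\Delta\|^2$ grows while the mass of $h$ near the transition shrinks. I would handle it by working with the standardized variable $(h\mp s_t^2/2)/s_t$ and tracking the Gaussian tail explicitly. The tail $\Phi$ is then evaluated at $s_t$, bounded below via $s_t\le\|\Delta\|$, and combined with the $t$-integral near $t=1$ where $\alpha_t$ is smallest.
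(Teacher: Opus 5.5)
Your Step 3 cannot be completed as planned. The inequality it targets does not hold uniformly in $\|\Delta\|$, and it fails at \emph{small} separation, not large. Your own Step 1 reduction shows this once you use the exact identity behind \Cref{prop:exact norm expr},
\[
\bigl(\sigma(\hat\eta+h)-\sigma(\eta^*+h)\bigr)^2=\frac{(\hat\gamma-\gamma^*)^2\,\sigma'(\hat\eta+h)\,\sigma'(\eta^*+h)}{\hat\gamma(1-\hat\gamma)\,\gamma^*(1-\gamma^*)} .
\]
Combined with $\sigma'\le 1/4$ and $\int_0^1\lambda_t\alpha_t\,dt\le 1$, it gives $\ldsm(\pdist{\hat\gamma},\pdist{\gamma^*})/(\hat\gamma-\gamma^*)^2\le\|\mu_1-\mu_0\|^2/\bigl(16a(1-a)\gamma^*(1-\gamma^*)\bigr)$ for every $\hat\gamma\in[a,1-a]\setminus\{\gamma^*\}$. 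This tends to $0$ as $\|\mu_1-\mu_0\|\to0$, and is identically $0$ when $\mu_0=\mu_1$. The right-hand side of \eqref{eq:main_thm} instead tends to $(\sqrt{k^2+16}-k)e^{-(k^2+k)/4}/(14\beta_1\gamma^*(1-\gamma^*))>0$. So the statement needs an extra lower bound on the separation, or a right-hand side that degenerates with it. Any correct proof must use such a hypothesis; tracking Gaussian tails more carefully will not close the gap.

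The paper's argument has the same hole. Its per-path window in $\sqrt{\alpha_t}$ has length $\|\Delta\|^{-1}\sqrt{Z^2+R_0}$, with $Z=\langle\vx_0-\mu_0,\Delta\rangle/\|\Delta\|$, and must fit inside $[\sqrt{\alpha_1},1]$. This is the $\epsilon\ge\epsilon_{\min}$ requirement of \Cref{prop: Lower bound integral}, and it fails for small $\|\Delta\|$. Neither the final choice of $\epsilon$ nor the step that sets $\|\mu_1-\mu_0\|=0$ checks it.

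Even with such a hypothesis, Steps 2--3 are guesses about where the constants come from rather than an argument, and several point the wrong way.
\begin{itemize}
\item The mean-value step $|\hat\eta-\eta^*|\ge|\hat\gamma-\gamma^*|/\max\gamma(1-\gamma)$ does not supply the factor $1/(\gamma^*(1-\gamma^*))$. Whenever the segment between $\hat\gamma$ and $\gamma^*$ contains $1/2$, it only gives a factor $4$. In the paper that factor comes straight from the identity above, after discarding $\hat\gamma(1-\hat\gamma)\le 1/4$.
\item The factors $e^{-(k^2+k)/4}$ and $\sqrt{k^2+16}-k$ come from the Gaussian-type bound $\sigma'(y)\ge\frac14e^{-y^2/4}$, not from $e^{-|y|}/4$. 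Along a probability-flow path the log-odds is $f=-\tfrac12 s_t^2+s_tZ$ with $Z$ fixed, which is quadratic in $\sqrt{\alpha_t}$. So, with $\hat r=-\hat\eta$ and $r^*=-\eta^*$, the set where $e^{-[(f-\hat r)^2+(f-r^*)^2]/4}\ge\epsilon$ is an explicit interval in $\sqrt{\alpha_t}$. Optimizing $\epsilon$ against that interval's length is what produces a quadratic of your form $r^2+kr-4=0$.
\item The factor $1-\Phi(\|\Delta\|\sqrt{\alpha_1})$ is not a per-$t$ window mass; note also that $s_t$ is smallest, not largest, at $t=1$. It is the probability of the path-level event $Z\ge\|\Delta\|\sqrt{\alpha_1}$. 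That event places the path's crossing window at $\sqrt{\alpha_t}\ge\sqrt{\alpha_1}$, i.e.\ within $t\in[0,1]$. The $t$-integral is then done path by path, using convexity of $\tau\mapsto\tau/\sqrt{-\ln\tau}$.
\item The paper's proof takes $\lambda_t=1$, not the weight $1-\alpha_t$ that you carry and that the definition of $\ldsm$ prescribes. That weight vanishes as $t\to0$, and the paper's window estimate does not account for it.
\end{itemize}
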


\emph{Proof Sketch:}
The proof proceeds in three steps. We first decompose $\ldsm(\pdist{\hat\gamma}, \pdist{\gamma^*})$ by mixture component \eqref{eq: mode decomposition} to reduce the problem to lower-bounding the expectations over paths from $\pdist{0}$ and $\pdist{1}$ independently. Second, using the Probability Flow ODE (which is equivalent in probability to the forward process) \citep{song2021scorebasedgenerativemodelingstochastic}, we find an analytical form for $\norm{\score[t]{\hat\gamma}-\score[t]{\gamma^*}}^2$ along paths $\vect{z}_t$ starting from $\pdist{0}$ (\Cref{prop:exact norm expr}). Finally, we identify the interval of $t$ values where this expression is bounded below (\Cref{prop: lower bound over t}), integrate over this region of $t$ (\Cref{prop: Lower bound integral}), and finally integrate over initial conditions $\vect {y} _ {0} $. The bound is dimension-independent because for a given initial condition $\vect {y}_{0} $ starting from e.g. mode $\pdist{0}$, the bound only depends on $\inprod{\vect y_0-\mu_0, \mu_1-\mu_0}/\norm{\mu_1-\mu_0}\sim \mathcal{N}(0,1)$ in any dimension. For full details, see \Cref{Proof of Theorem 1}.
\qed

\begin{remark}
    Recall that if $\ldsm(\pdist{\gamma},\pdist{\gamma^*})<\delta$ then $(\gamma-\gamma^*)^2 \le \delta/L(\gamma^*)$ by the definition of $L(\gamma^*)$ in \eqref{eq: DSSI}.
    \Cref{thm: GMM Sensitivity Emerges} shows that $L(\gamma^*)$ cannot be too small under mild conditions for two-component Gaussian mixture models. 
    Further, recall that the neural network loss $\lnn$ \eqref{eqn: neural network loss} is an upper bound for $\ldsm$, and so if we train a neural network to an error $\lnn(\pdist{\btheta},\pdist{\gamma})<\delta$ and our neural network learns a score corresponding to the mixture distributions $\{\pdist[t]{\gamma'}\}_{t\in (0,1)}$, then $\ldsm(\pdist{\gamma'},\pdist{\gamma^*})<\delta$. 
    Our theory more directly implies that any score model which closely approximates the true score $\score{\gamma^*}$ in $L^2$ will have small $\ldsm$, and thus its sample distribution will accurately reflect $\gamma^*$.
\end{remark}

\begin{remark}
    If, when generating new samples at inference time, we alter the noise schedule $\{\alpha_t\}$, this may alter the mixture proportions in the generated samples. Specifically, altering the noise schedule changes $\ldsm$ and $L(\gamma^*)$, as noted below \eqref{eq:diffScoreMatchingLoss}.
    Hence, similar loss under $\ldsm$ for two different noise schedules could potentially correspond to very different guarantees for the accuracy of the recovered mixture weight $\hat\gamma$. Discretizing the reverse process SDE accumulates time-discretization errors regardless of score accuracy, meaning noise schedules that lower $L(\gamma^*)$ may worsen mixture-weight errors in the generative distribution, even for well-trained diffusion models.
    \Cref{sec: GMM Sens Exp} showcases this effect for GMMs. In \Cref{sec:sensitivity-real-data}, we conduct further experiments on real-world data. 
\end{remark}

\paragraph{Effect of parameters on DSSI.} The DSSI $L(\gamma^*)$ depends on the mode separation $\norm{\mu_1-\mu_0}$, true mixture weight $\gamma^*$, and choice of noise schedule $\set{\alpha_t}_t$. First, consistent with the fact that $\ldsm(\pdist{\gamma}, \pdist{\gamma^*})$ is an increasing function of the separation (\Cref{fig: different loss functions}), \Cref{fig: score sensitivity over separation} shows $L(\gamma^*)$ is an increasing function of the mode separation. Second, although \Cref{thm: GMM Sensitivity Emerges} is only stated for linear noise schedules, different continuous noise schedules are equivalent up to time-reparameterization, guaranteeing $\lsm(\pdist[t]{\hat\gamma}, \pdist[t]{\gamma^*})$ is nonnegligible for some range of $t$ under any noise schedule (\Cref{fig: gmm cos heatmap}). Note however that $L(\gamma^*)$ is defined relative to $\ldsm$, which averages $\lsm(\pdist[t]{\hat\gamma},\pdist[t]{\gamma^*})$ over $t\in[0,1]$, meaning a noise schedule which reduces this range of $t$ to a small interval $[a,a+\epsilon]$ scales down $L(\gamma^*)$ roughly proportional to $\epsilon$, regardless of the peak value of $\lsm(\pdist[t]{\hat\gamma},\pdist[t]{\gamma^*})$. 
Third, \Cref{fig: score sensitivity ratio over gamma^* and gamma hat} shows that for mode separation $\norm{\mu_1-\mu_0}=9$, $L(\gamma^*)$ is bounded below by 0.37 across all $\gamma^*$, implying that the squared prediction error in $\gamma$ is at most $\approx 2.7$ times larger than the squared diffusion score error. Notably, this bound is dimension-independent, and hence this holds for bimodal isotropic Gaussian mixtures in any dimension. In \Cref{sec: Other Parameters}, we showcase additional experiments on parameters of Gaussian mixtures other than the mixture weight.

\begin{figure}
\centering
    \begin{subfigure}[htbp]{0.3\linewidth}
    \includegraphics[width=\columnwidth]{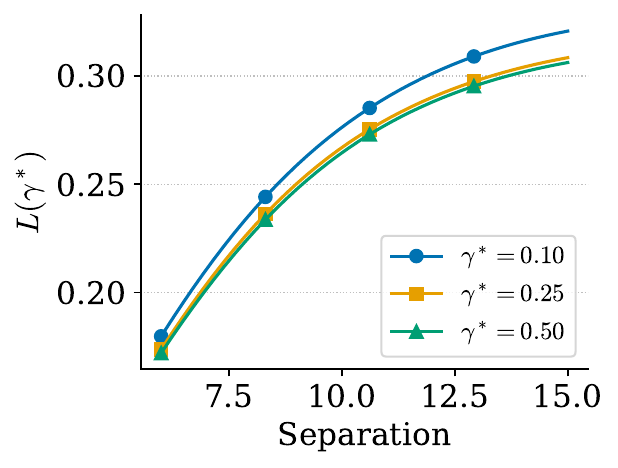}
    \caption{}
    \label{fig: score sensitivity over separation}
    \end{subfigure}
    \hfill    \begin{subfigure}[htbp]{0.3\linewidth}
        \includegraphics[width=\columnwidth]{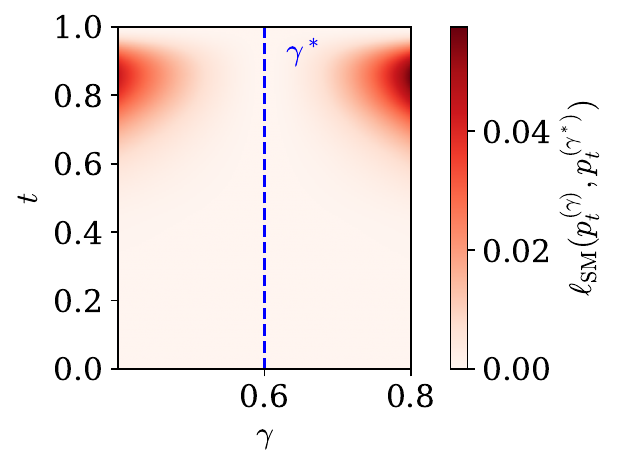}
        \caption{}
        \label{fig: gmm cos heatmap}
    \end{subfigure}
    \hfill    \begin{subfigure}[htbp]{0.3\linewidth}
    \includegraphics[width=\columnwidth]{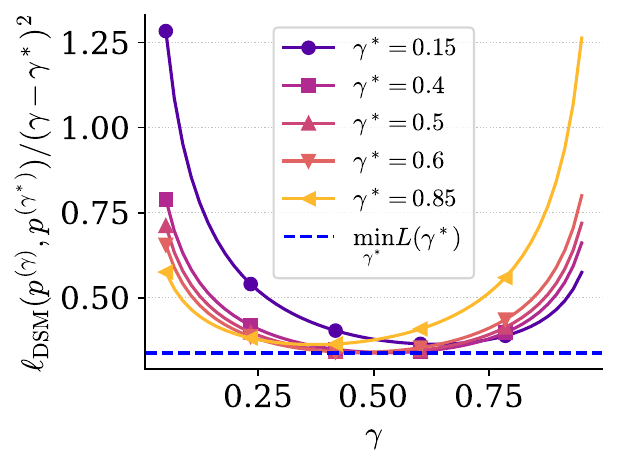}
        \caption{}
        \label{fig: score sensitivity ratio over gamma^* and gamma hat}
    \end{subfigure}
\caption{
Effect of Gaussian mixture model parameters on score sensitivity in dimension $d=10$.  (a) $L(\gamma^*)$ vs. mode separation for multiple $\gamma^*\leq 0.5$ (by symmetry $L(\gamma^*)=L(1-\gamma^*)$). $L(\gamma^*)$ is an increasing function of separation.
(b) $\lsm(\pdist[t]{\gamma}, \pdist[t]{\gamma^*})$ as a function of $t$ and $\gamma$ for a cosine noise schedule. Compared to a linear noise schedule, the noise schedule only affects the range of $t$ in the forward process at which the score is sensitive to $\gamma$. 
(c) $\ldsm(\pdist{\gamma}, \pdist{\gamma^*})/(\gamma -\gamma^*)^2$ vs. $\gamma$; $L(\gamma^*)\geq 0.27$ for all $\gamma^*$, with the minimum at $\gamma^*=0.5$ . 
}\label{fig:sensitivity_index_gmm}
\end{figure}

\subsection{Related Work}

\paragraph{Multimodal capabilities of diffusion models.}
The ability of diffusion models to sample multimodal targets varies substantially across common choices of neural architecture \citep{hakemi2025deeper}, training recipes \citep{karras2022elucidating}, and sampling algorithms \citep{roos2026met, zhang2025collapse}. Existing evaluation metrics \citep{sajjadi2018assessing, kynkaanniemi2019improved, djolonga2020precision, naeem2020reliable, alaa2022faithful} are typically structure-agnostic and may fail to reliably detect mode coverage \citep{raisa2025position, stein2023exposing}. 
In contrast, the diffusion score sensitivity index \eqref{eq: DSSI} provides a rigorous framework to reason about multimodal capability.

\paragraph{Learning GMMs with Diffusion Score Matching.} 
Diffusion score matching (DSM) is \textit{in principle} expressive enough to learn Gaussian mixture: gradient descent recovers component means in the balanced isotropic regime\footnote{The mixture model $q\coloneqq \sum_{i=1}^k \frac{1}{k}\, \mathcal{N}(\boldsymbol{\mu}^{(i)},\, \mathbf{I})$ where each component has equal mixture weight and identity covariance} \citep{shah2023learning}, 
learning algorithms can be tailored for density estimation of GMMs with polynomial sample complexity \citep{gatmiry25b, chen2024learning}. Here, \citet{gatmiry25b} develop noise-sensitivity bounds similar in spirit to ours. Unlike the aforementioned articles, our work focuses on the ability of diffusion models to sample from target GMMs rather than on density estimation. 
When the learned score is $L^2$-accurate, the resulting diffusion sampler is provably close to the target in total variation \citep{chen2023sampling, lee2023convergence}. However, this may be insufficient to recover structural parameters that shape the geometry of distributions in low-probability regions. 
\citet{li2024softmixturedenoisingexpressive} suggest diffusion models may fail to sample multimodal targets due to the lack of expressivity of the Gaussian transition steps in the reverse diffusion process. 
Our work complements \citet{li2024softmixturedenoisingexpressive} by identifying a different mechanism for multimodal sampling, the diffusion score sensitivity index \eqref{eq: DSSI}, and by providing additional insights into the role of noise schedules in alleviating mode amplification.

\paragraph{Benefits of Annealing.} 
Classical (unannealed) score matching \citep{10.5555/1046920.1088696} is unsuitable for generative modeling of multimodal targets: the asymptotic efficiency of the score matching estimator relative to the maximum-likelihood estimator is governed by the log-Sobolev (LS) constant \citep{koehler2022statisticalefficiencyscorematching}, which grows exponentially in mode separation for Gaussian mixtures \citep{chen2021dimensionfreelogsobolevinequalitiesmixture}. \citet{song2019generative} proposed annealing over noise scale to handle multimodality, inspired by \citet{neal_annealed_2001}. 
\citet{biroli2024dynamical} and \citet{raya2023symmetry} observe a phase transition at intermediate noise scales linked to the emergence of multimodal structure in practice. 
Amidst the empirical evidence, \citet{qin2024fit} establishes the first formal statistical benefit of annealing in the context of diffusion. \citet{qin2024fit} adapts techniques from the acceleration of Markov Chain mixing to derive a better score matching loss, with the resulting outcome closely resembling the diffusion score matching loss. While \citet{qin2024fit} establishes sample complexity bounds, our work addresses a complementary question of parameter recovery from generated samples. 

\section{Sensitivity in real datasets}\label{sec:sensitivity-real-data}

We empirically show that mixture weight information is captured in the score functions of intermediate forward process distributions, even outside the Gaussian mixture setting. We additionally explore how altering the noise schedule affects the DSSI and empirically demonstrate that a small DSSI value quantitatively degrades mixture-weight recovery on non-Gaussian data. This is particularly salient to methods like \citep{songddim2021, lu2022dpmsolver, karras2022elucidating} where noise schedules are selected to increase the speed of inference-time sampling; we show that such methods have the potential to amplify some modes over others, augmenting the recent observation by \citet{zhang2025collapse}. 

To study this, we train a simple autoencoder on the ones and eights from MNIST (\citep{lecun2010mnist}). Our target distribution $\pdist{\gamma^*}$ is then the mixture distribution of latent representations of eights and ones ($\pdist{1}$ is the distribution of ones). We set $\gamma^*=0.4$ and use deep ResNets \citep{He2015DeepRL} with a linear noise schedule as our score matching models.

\begin{figure}[ht]
    \centering
    \begin{subfigure}[h]{0.3\columnwidth}
        \centering
        \includegraphics[width=\columnwidth]{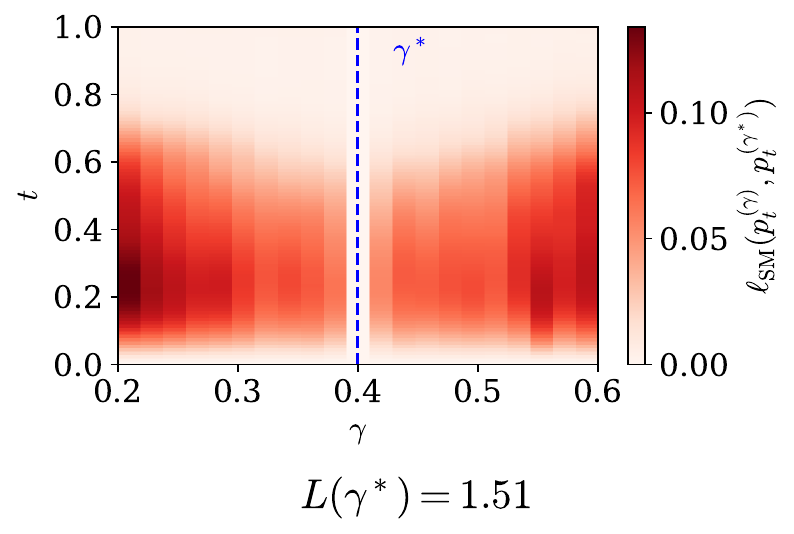}
    \end{subfigure}%
    \begin{subfigure}[h]{0.3\columnwidth}
        \centering
        \includegraphics[width=\columnwidth]{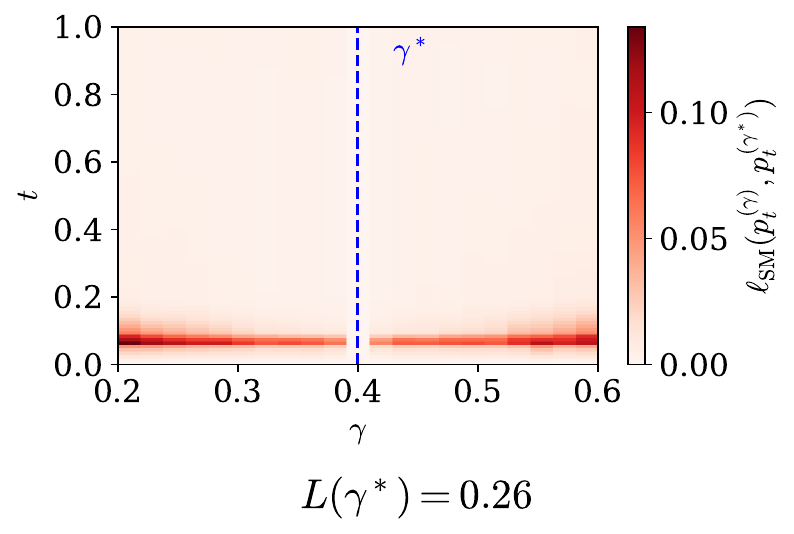}
    \end{subfigure}%
    \begin{subfigure}[h]{0.3\columnwidth}
        \centering
        \includegraphics[width=\columnwidth]{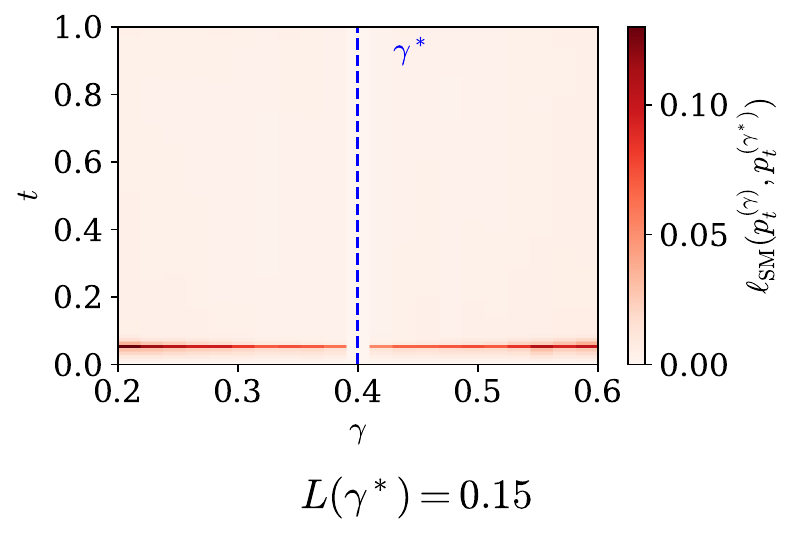}
    \end{subfigure}%
    \\
    \begin{subfigure}[h]{0.3\columnwidth}
        \centering
        \includegraphics[width=\columnwidth]{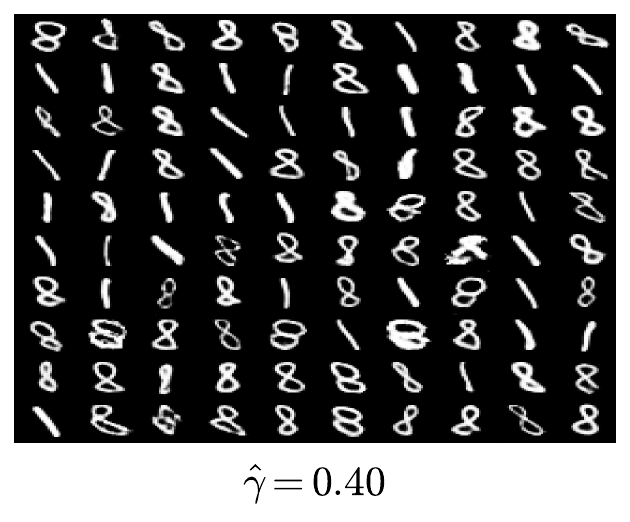}
        \caption{}
        \label{fig:mnist40}
    \end{subfigure}%
    \begin{subfigure}[h]{0.3\columnwidth}
        \centering
        \includegraphics[width=\columnwidth]{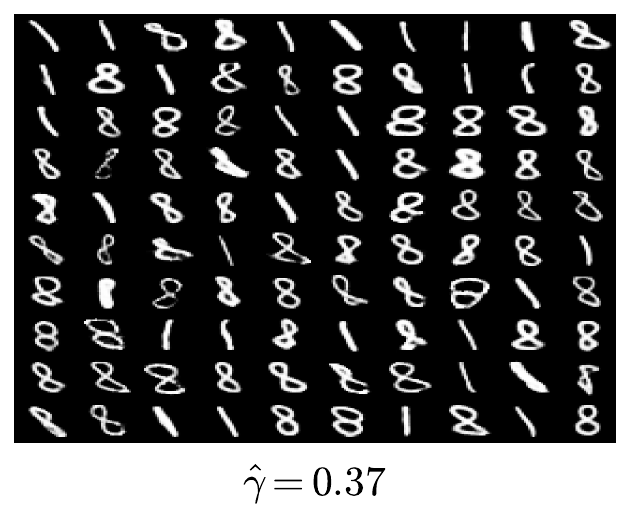}
        \caption{}
        \label{fig:mnist37}
    \end{subfigure}%
    \begin{subfigure}[h]{0.3\columnwidth}
        \centering
        \includegraphics[width=\columnwidth]{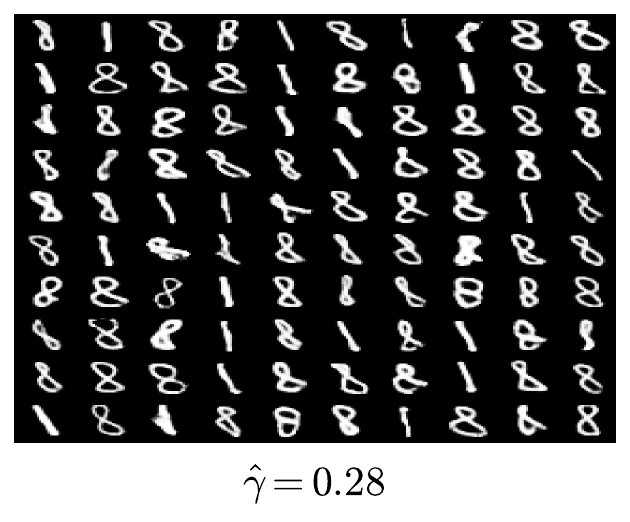}
        \caption{}
        \label{fig:mnist28}
    \end{subfigure}%
    \caption{
    Influence of noise schedule on mixture weight recovery in MNIST. (a) Linear noise schedule; (b-c) two synthetic noise schedules designed to reduce $L(\gamma^*)$ (see \Cref{sec: Reparam}). Top row: $\lsm(\pdist[t]{\gamma},\pdist[t]{\gamma^*})$ as a function of $t$ and $\gamma$ estimated using trained diffusion models, with $\gamma^*=0.4$. Bottom row: generated samples using each schedule, with estimated $\hat\gamma$ below. Larger $L(\gamma^*)$ yields more accurate mixture weight recovery.
    }
    \label{fig: MNIST experiment}
\end{figure}

\paragraph{Sensitivity at intermediate $t$.} In \Cref{fig: MNIST experiment}, the top row displays the score sensitivity over time and $\gamma$ using trained diffusion models (each with a different noise schedule) to approximate the true score function (see \Cref{sec: MNIST Exp Details} for more details). The first column uses the same linear noise schedule as in \Cref{fig: gmm lin heatmap}. Note that relative to the Gaussian mixture example, $\lsm(\pdist[t]{\gamma}, \pdist[t]{\gamma^*})$ is nonnegligable over a wider range of $t$ values.  
We numerically estimate that the sensitivity index is $\approx 5\times$ larger than in the Gaussian mixture model example, so our theory guarantees a five-fold smaller error in the predicted mixture weight for the same level of error under $\ldsm$ as in the Gaussian setting.

\paragraph{Sensitivity affects parameter recovery.} 
In \Cref{fig:mnist40}, the bottom row shows generated samples from a diffusion model with $T=100$. We estimate the value of $\gamma^*$ using generated samples and a trained classifier, shown below the generated samples. The generated samples accurately reflect the true value of the mixture weight, with only a $1\%$ relative error in estimating $\gamma^*$. 

To demonstrate the effect of the score sensitivity index $L(\gamma^*)$ on the error in estimating $\gamma^*$, we perform an ablation study on the trained diffusion model by modifying the noise schedule to reduce $L(\gamma^*)$ as in \Cref{sec: GMM Sens Exp} (for details see \Cref{sec: Reparam}). \Cref{fig:mnist37} and \Cref{fig:mnist28} show the score sensitivity and generated samples for the modified noise schedules, along with the corresponding estimates $\hat\gamma$ from the samples. When the score sensitivity index reduces by a factor of $10$ (\Cref{fig:mnist28}), we see that the estimate of $\gamma^*$ has a relative error of $31\%$. 
Surprisingly, the samples from the modified noise schedules have comparable fidelity to those from the linear noise schedule (\Cref{fig:mnist40}). This is in line with the theoretical and empirical observations that say that diffusion models under errors and accelerated sampling schemes can still accurately learn the geometry of the support of the target distribution \citep{songddim2021, nichol2021improveddenoisingdiffusionprobabilistic, stanczuk2022your,chandramoorthy2026when}. More details on our MNIST experiments can be found in \Cref{sec: MNIST Exp Details}. 

\section{Conclusion}

We investigate the ability of 
diffusion models to recover the relative mode amplitudes of a multimodal target, despite the insensitivity of the score of the target distribution to mixture weights under high separation. 
Our analysis clarifies the impact of annealing: the score of the noisy target distribution across noise scales provides the necessary information for accurate sampling of multimodal distributions. Our experiments indicate that inference-time sampling algorithms can be tuned for diversity and fidelity, suggesting a mechanism for tailoring
sampling algorithms to data distributions arising in scientific applications.

\paragraph{Limitations.} We prove a quantitative relationship between the sensitivity index and the diffusion score error only in the Gaussian mixture setting. Theorem \ref{thm:sensitivity} provides an information theoretic mechanism that cannot be computationally verified since it involves minimax bounds. Further, evaluation of the diffusion score-sensitivity index requires knowledge of the true parameter of interest $\gamma^*$ and the score function of the target distribution $\score[t]{\gamma^*}$. Thus, the application to benchmark datasets requires surrogate generative models in place of the unknown target score.

\bibliographystyle{unsrtnat} 
\bibliography{refs}

@incollection{tsybakov2008nonparametric,
  title={Nonparametric estimators},
  author={Tsybakov, Alexandre B},
  booktitle={Introduction to Nonparametric Estimation},
  pages={1--76},
  year={2008},
  publisher={Springer}
}

@book{geer2000empirical,
  title={Empirical Processes in M-estimation},
  author={Geer, Sara A},
  volume={6},
  year={2000},
  publisher={Cambridge university press}
}

@article{yangBarron,
author={Yuhong Yang and Andrew Barron},
title={Information-theoretic determination of minimax rates of convergence},
journal={The Annals of Statistics},
volume={27},
number={5}, pages={1564--1599}, 
year={1999}
}

@article{lanusse2021deep,
  title={Deep generative models for galaxy image simulations},
  author={Lanusse, Fran{\c{c}}ois and Mandelbaum, Rachel and Ravanbakhsh, Siamak and Li, Chun-Liang and Freeman, Peter and P{\'o}czos, Barnab{\'a}s},
  journal={Monthly Notices of the Royal Astronomical Society},
  volume={504},
  number={4},
  pages={5543--5555},
  year={2021},
  publisher={Oxford University Press}
}

@article{karras2022elucidating,
  title={Elucidating the design space of diffusion-based generative models},
  author={Karras, Tero and Aittala, Miika and Aila, Timo and Laine, Samuli},
  journal={Advances in neural information processing systems},
  volume={35},
  pages={26565--26577},
  year={2022}
}

@article{lecun2010mnist,
  title={MNIST handwritten digit database},
  author={LeCun, Yann and Cortes, Corinna and Burges, CJ},
  journal={ATT Labs [Online]. Available: http://yann.lecun.com/exdb/mnist},
  volume={2},
  year={2010}
}

@article{sajjadi2018assessing,
  title={Assessing generative models via precision and recall},
  author={Sajjadi, Mehdi SM and Bachem, Olivier and Lucic, Mario and Bousquet, Olivier and Gelly, Sylvain},
  journal={Advances in neural information processing systems},
  volume={31},
  year={2018}
}

@inproceedings{roos2026met,
  title={How I Met Your Bias: Investigating Bias Amplification in Diffusion Models},
  author={Roos, Nathan and Iakovleva, Ekaterina and Gjergji, Ani and Pastore, Vito Paolo and Tartaglione, Enzo},
  booktitle={Proceedings of the IEEE/CVF Winter Conference on Applications of Computer Vision},
  pages={5374--5383},
  year={2026}
}

@article{hakemi2025deeper,
  title={Deeper Diffusion Models Amplify Bias},
  author={Hakemi, Shahin and Akhtar, Naveed and Hassan, Ghulam Mubashar and Mian, Ajmal},
  journal={arXiv preprint arXiv:2505.17560},
  year={2025}
}

@inproceedings{
song2021denoising,
title={Denoising Diffusion Implicit Models},
author={Jiaming Song and Chenlin Meng and Stefano Ermon},
booktitle={International Conference on Learning Representations},
year={2021},
url={https://openreview.net/forum?id=St1giarCHLP}
}

@article{ludwig2024machine,
  title={Machine learning as a tool for hypothesis generation},
  author={Ludwig, Jens and Mullainathan, Sendhil},
  journal={The Quarterly Journal of Economics},
  volume={139},
  number={2},
  pages={751--827},
  year={2024},
  publisher={Oxford University Press}
}

@article{posner2025judge,
author = {Eric A. Posner and Shivam Saran},
title ={Judge AI: A Case-Study of Large Language Models as Judges},

journal = {Journal of Law \& Empirical Analysis},
volume = {0},
number = {0},
pages = {2755323X261433614},
year = {2026},
doi = {10.1177/2755323X261433614},

URL = { 
    
        https://doi.org/10.1177/2755323X261433614
    
    

},
eprint = { 
    
        https://doi.org/10.1177/2755323X261433614
    
    

}
}

@article{teo2025generative,
  title={Generative artificial intelligence in medicine},
  author={Teo, Zhen Ling and Thirunavukarasu, Arun James and Elangovan, Kabilan and Cheng, Haoran and Moova, Prasanth and Soetikno, Brian and Nielsen, Christopher and Pollreisz, Andreas and Ting, Darren Shu Jeng and Morris, Robert JT and others},
  journal={Nature medicine},
  volume={31},
  number={10},
  pages={3270--3282},
  year={2025},
  publisher={Nature Publishing Group US New York}
}

@article{fahrner2025generative,
  title={The generative era of medical AI},
  author={Fahrner, L John and Chen, Emma and Topol, Eric and Rajpurkar, Pranav},
  journal={Cell},
  volume={188},
  number={14},
  pages={3648--3660},
  year={2025},
  publisher={Elsevier}
}

@article{loni2025review,
  title={A review on generative AI models for synthetic medical text, time series, and longitudinal data},
  author={Loni, Mohammad and Poursalim, Fatemeh and Asadi, Mehdi and Gharehbaghi, Arash},
  journal={npj Digital Medicine},
  volume={8},
  number={1},
  pages={281},
  year={2025},
  publisher={Nature Publishing Group UK London}
}

@misc{han2025learning,
      title={Learning noisy tissue dynamics across time scales}, 
      author={Ming Han and John Devany and Michel Fruchart and Margaret L. Gardel and Vincenzo Vitelli},
      year={2025},
      eprint={2510.19090},
      archivePrefix={arXiv},
      primaryClass={cond-mat.soft},
      url={https://arxiv.org/abs/2510.19090}, 
}

@article{price2023gencast,
   title={Probabilistic weather forecasting with machine learning}, DOI={https://doi.org/10.1038/s41586-024-08252-9}, journal={Nature}, author={Price, Ilan and Sanchez-Gonzalez, Alvaro and Alet, Ferran and Andersson, Tom R. and El-Kadi, Andrew and Masters, Dominic and Ewalds, Timo and Stott, Jacklynn and Mohamed, Shakir and Battaglia, Peter and Lam, Remi and Willson, Matthew}, year={2024}, month={Dec} }

@inproceedings{qin2024fit,
  title={Fit like you sample: sample-efficient generalized score matching from fast mixing diffusions},
  author={Qin, Yilong and Risteski, Andrej},
  booktitle={The Thirty Seventh Annual Conference on Learning Theory},
  pages={4413--4457},
  year={2024},
  organization={PMLR}
}

@misc{song2021scorebasedgenerativemodelingstochastic,
  author = {Song, Yang and Sohl-Dickstein, Jascha and Kingma, Diederik P and Kumar, Abhishek and Ermon, Stefano and Poole, Ben},
  booktitle = {International Conference on Learning Representations},
  title = {Score-Based Generative Modeling through Stochastic Differential Equations},
  url = {https://openreview.net/forum?id=PxTIG12RRHS},
  year = 2021
}

@article{ANDERSON1982313,
title = {Reverse-time diffusion equation models},
journal = {Stochastic Processes and their Applications},
volume = {12},
number = {3},
pages = {313-326},
year = {1982},
issn = {0304-4149},
doi = {https://doi.org/10.1016/0304-4149(82)90051-5},
url = {https://www.sciencedirect.com/science/article/pii/0304414982900515},
author = {Brian D.O. Anderson}
}

@misc{chen2021dimensionfreelogsobolevinequalitiesmixture,
    title = {Dimension-free log-Sobolev inequalities for mixture distributions},
    journal = {Journal of Functional Analysis},
    volume = {281},
    number = {11},
    pages = {109236},
    year = {2021},
    issn = {0022-1236},
    doi = {https://doi.org/10.1016/j.jfa.2021.109236},
    url = {https://www.sciencedirect.com/science/article/pii/S0022123621003189},
    author = {Hong-Bin Chen and Sinho Chewi and Jonathan Niles-Weed}
}

@inproceedings{
koehler2022statisticalefficiencyscorematching,
title={Statistical Efficiency of Score Matching: The View from Isoperimetry},
author={Frederic Koehler and Alexander Heckett and Andrej Risteski},
booktitle={The Eleventh International Conference on Learning Representations },
year={2023},
url={https://openreview.net/forum?id=TD7AnQjNzR6}
}

@inproceedings{
li2024softmixturedenoisingexpressive,
title={Soft Mixture Denoising: Beyond the Expressive Bottleneck of Diffusion Models},
author={Yangming Li and Boris van Breugel and Mihaela van der Schaar},
booktitle={The Twelfth International Conference on Learning Representations},
year={2024},
url={https://openreview.net/forum?id=aaBnFAyW9O}
}

@InProceedings{nichol2021improveddenoisingdiffusionprobabilistic,
  title = 	 {Improved Denoising Diffusion Probabilistic Models},
  author =       {Nichol, Alexander Quinn and Dhariwal, Prafulla},
  booktitle = 	 {Proceedings of the 38th International Conference on Machine Learning},
  pages = 	 {8162--8171},
  year = 	 {2021},
  editor = 	 {Meila, Marina and Zhang, Tong},
  volume = 	 {139},
  series = 	 {Proceedings of Machine Learning Research},
  month = 	 {18--24 Jul},
  publisher =    {PMLR},
  url = 	 {https://proceedings.mlr.press/v139/nichol21a.html}
}

@Inbook{binyu,
author="Yu, Bin",
title="Assouad, Fano, and Le Cam",
bookTitle="Festschrift for Lucien Le Cam: Research Papers in Probability and Statistics",
year="1997",
publisher="Springer New York",
address="New York, NY",
pages="423--435",
isbn="978-1-4612-1880-7",
doi="10.1007/978-1-4612-1880-7_29",
url="https://doi.org/10.1007/978-1-4612-1880-7_29"
}

@inproceedings{NIPS2017_8a1d6947,
 author = {Heusel, Martin and Ramsauer, Hubert and Unterthiner, Thomas and Nessler, Bernhard and Hochreiter, Sepp},
 booktitle = {Advances in Neural Information Processing Systems},
 editor = {I. Guyon and U. Von Luxburg and S. Bengio and H. Wallach and R. Fergus and S. Vishwanathan and R. Garnett},
 pages = {},
 publisher = {Curran Associates, Inc.},
 title = {GANs Trained by a Two Time-Scale Update Rule Converge to a Local Nash Equilibrium},
 url = {https://proceedings.neurips.cc/paper_files/paper/2017/file/8a1d694707eb0fefe65871369074926d-Paper.pdf},
 volume = {30},
 year = {2017}
}

@inproceedings{
shah2023learning,
title={Learning Mixtures of Gaussians Using the {DDPM} Objective},
author={Kulin Shah and Sitan Chen and Adam Klivans},
booktitle={Thirty-seventh Conference on Neural Information Processing Systems},
year={2023},
url={https://openreview.net/forum?id=aig7sgdRfI}
}

@inproceedings{chen2024learning,
  title = 	 {Learning general Gaussian mixtures with efficient score matching},
  author =       {Chen, Sitan and Kontonis, Vasilis and Shah, Kulin},
  booktitle = 	 {Proceedings of Thirty Eighth Conference on Learning Theory},
  pages = 	 {1029--1090},
  year = 	 {2025},
  editor = 	 {Haghtalab, Nika and Moitra, Ankur},
  volume = 	 {291},
  series = 	 {Proceedings of Machine Learning Research},
  month = 	 {30 Jun--04 Jul},
  publisher =    {PMLR},
  url = 	 {https://proceedings.mlr.press/v291/chen25e.html}
}

@inproceedings{gatmiry25b,
  title = 	 {Learning Mixtures of Gaussians Using Diffusion Models},
  author =       {Gatmiry, Khashayar and Kelner, Jonathan and Lee, Holden},
  booktitle = 	 {Proceedings of Thirty Eighth Conference on Learning Theory},
  pages = 	 {2403--2456},
  year = 	 {2025},
  editor = 	 {Haghtalab, Nika and Moitra, Ankur},
  volume = 	 {291},
  series = 	 {Proceedings of Machine Learning Research},
  month = 	 {30 Jun--04 Jul},
  publisher =    {PMLR},
  url = 	 {https://proceedings.mlr.press/v291/gatmiry25b.html}
}

@inproceedings{
chen2023sampling,
title={Sampling is as easy as learning the score: theory for diffusion models with minimal data assumptions},
author={Sitan Chen and Sinho Chewi and Jerry Li and Yuanzhi Li and Adil Salim and Anru Zhang},
booktitle={The Eleventh International Conference on Learning Representations },
year={2023},
url={https://openreview.net/forum?id=zyLVMgsZ0U_}
}

@inproceedings{lee2023convergence,
  title     = {Convergence of Score-Based Generative Modeling for General Data Distributions},
  author    = {Lee, Holden and Lu, Jianfeng and Tan, Yixin},
  booktitle = {Proceedings of the 34th International Conference on Algorithmic Learning Theory},
  series    = {Proceedings of Machine Learning Research},
  volume    = {201},
  pages     = {946--985},
  year      = {2023},
  publisher = {PMLR},
  note      = {arXiv:2209.12381}
}

@article{biroli2024dynamical,
  title   = {Dynamical Regimes of Diffusion Models},
  author  = {Biroli, Giulio and Bonnaire, Tony and De Bortoli, Valentin and M{\'e}zard, Marc},
  journal = {Nature Communications},
  volume  = {15},
  pages   = {9957},
  year    = {2024},
  doi     = {10.1038/s41467-024-54281-3}
}

@inproceedings{raya2023symmetry,
  title     = {Spontaneous Symmetry Breaking in Generative Diffusion Models},
  author    = {Raya, Gabriel and Ambrogioni, Luca},
  booktitle = {Advances in Neural Information Processing Systems},
  volume    = {36},
  year      = {2023},
  url       = {https://arxiv.org/abs/2305.19693}
}

@inproceedings{songddim2021,
  title     = {Denoising Diffusion Implicit Models},
  author    = {Song, Jiaming and Meng, Chenlin and Ermon, Stefano},
  booktitle = {International Conference on Learning Representations},
  year      = {2021},
  note      = {arXiv:2010.02502}
}

@inproceedings{lu2022dpmsolver,
  title     = {{DPM}-Solver: A Fast {ODE} Solver for Diffusion Probabilistic Model Sampling in Around 10 Steps},
  author    = {Lu, Cheng and Zhou, Yuhao and Bao, Fan and Chen, Jianfei and Li, Chongxuan and Zhu, Jun},
  booktitle = {Advances in Neural Information Processing Systems},
  volume    = {35},
  year      = {2022},
  note      = {arXiv:2206.00927}
}

@article{yakowitz1968identifiability,
  title   = {On the Identifiability of Finite Mixtures},
  author  = {Yakowitz, Sidney J. and Spragins, John D.},
  journal = {The Annals of Mathematical Statistics},
  volume  = {39},
  number  = {1},
  pages   = {209--214},
  year    = {1968},
  doi     = {10.1214/aoms/1177698520}
}

@article{neal_annealed_2001,
	title = {Annealed importance sampling},
	volume = {11},
	issn = {1573-1375},
	url = {https://doi.org/10.1023/A:1008923215028},
	doi = {10.1023/A:1008923215028},
	number = {2},
	journal = {Statistics and Computing},
	author = {Neal, Radford M.},
	month = apr,
	year = {2001},
	pages = {125--139},
}

@article{bouchet2019rare,
  title   = {Rare Event Sampling Methods},
  author  = {Bouchet, Freddy and Rolland, Joran and Wouters, Jeroen},
  journal = {Chaos: An Interdisciplinary Journal of Nonlinear Science},
  volume  = {29},
  number  = {8},
  pages   = {080402},
  year    = {2019},
  doi     = {10.1063/1.5120509}
}

@article{addison2024cpmgem,
  title   = {Machine Learning Emulation of Precipitation from km-Scale {UK} Regional Climate Simulations Using a Diffusion Model},
  author  = {Addison, Henry and Kendon, Elizabeth and Ravuri, Suman and Aitchison, Laurence and Watson, Peter A. G.},
  journal = {Journal of Advances in Modeling Earth Systems},
  year    = {2026},
  doi     = {10.1029/2025MS005140},
  note    = {arXiv:2407.14158}
}

@article{He2015DeepRL,
  title={Deep Residual Learning for Image Recognition},
  author={Kaiming He and X. Zhang and Shaoqing Ren and Jian Sun},
  journal={2016 IEEE Conference on Computer Vision and Pattern Recognition (CVPR)},
  year={2015},
  pages={770-778},
  url={https://api.semanticscholar.org/CorpusID:206594692}
}

@inproceedings{mudur2023cosmological,
  title     = {Cosmological Field Emulation and Parameter Inference with Diffusion Models},
  author    = {Mudur, Nayantara and Cuesta-Lazaro, Carolina and Finkbeiner, Douglas P.},
  booktitle = {NeurIPS Workshop on Machine Learning and the Physical Sciences},
  year      = {2023},
  note      = {arXiv:2312.07534}
}

@misc{zhang2025collapse,
  title         = {On the Collapse Errors Induced by the Deterministic Sampler for Diffusion Models},
  author        = {Zhang, Yi and Liao, Zhenyu and Wu, Jingfeng and Zou, Difan},
  year          = {2025},
  eprint        = {2508.16154},
  archivePrefix = {arXiv},
  primaryClass  = {cs.LG},
  url           = {https://arxiv.org/abs/2508.16154}
}

@book{10.5555/1162264,
author = {Bishop, Christopher M.},
title = {Pattern Recognition and Machine Learning (Information Science and Statistics)},
year = {2006},
isbn = {0387310738},
publisher = {Springer-Verlag},
address = {Berlin, Heidelberg}
}

@inproceedings{song2019generative,
title = {Generative Modeling by Estimating Gradients of the
Data Distribution},
author = {Song, Yang and Ermon, Stefano},
booktitle = {Advances in Neural Information Processing Systems},
volume = {32},
pages = {11895--11907},
year = {2019}
}

@article{6795935,
  author={Vincent, Pascal},
  journal={Neural Computation}, 
  title={A Connection Between Score Matching and Denoising Autoencoders}, 
  year={2011},
  volume={23},
  number={7},
  pages={1661-1674},
  doi={10.1162/NECO_a_00142}}

@inproceedings{
chandramoorthy2026when,
title={When and how can inexact generative models still sample from the data manifold?},
author={Nisha Chandramoorthy and Adriaan A. de Clercq},
booktitle={The Thirty-ninth Annual Conference on Neural Information Processing Systems},
year={2026},
url={https://openreview.net/forum?id=QBjuYL4gAX}
}

@article{10.5555/1046920.1088696,
author = {Hyv\"{a}rinen, Aapo},
title = {Estimation of Non-Normalized Statistical Models by Score Matching},
year = {2005},
issue_date = {12/1/2005},
publisher = {JMLR.org},
volume = {6},
issn = {1532-4435},
journal = {J. Mach. Learn. Res.},
month = dec,
pages = {695–709},
numpages = {15}
}

@inproceedings{NEURIPS2018_c6ede20e,
 author = {Lee, Holden and Risteski, Andrej and Ge, Rong},
 booktitle = {Advances in Neural Information Processing Systems},
 editor = {S. Bengio and H. Wallach and H. Larochelle and K. Grauman and N. Cesa-Bianchi and R. Garnett},
 pages = {},
 publisher = {Curran Associates, Inc.},
 title = {Beyond Log-concavity: Provable Guarantees for Sampling Multi-modal Distributions using Simulated Tempering Langevin Monte Carlo},
 url = {https://proceedings.neurips.cc/paper_files/paper/2018/file/c6ede20e6f597abf4b3f6bb30cee16c7-Paper.pdf},
 volume = {31},
 year = {2018}
}

@article{stanczuk2022your,
  title={Your diffusion model secretly knows the dimension of the data manifold},
  author={Stanczuk, Jan and Batzolis, Georgios and Deveney, Teo and Sch{\"o}nlieb, Carola-Bibiane},
  journal={arXiv preprint arXiv:2212.12611},
  year={2022}
}

@inproceedings{kynkaanniemi2019improved,
  title     = {Improved Precision and Recall Metric for Assessing 
               Generative Models},
  author    = {Kynk{\"a}{\"a}nniemi, Tuomas and Karras, Tero and Laine, Samuli 
               and Lehtinen, Jaakko and Aila, Timo},
  booktitle = {Advances in Neural Information Processing Systems},
  volume    = {32},
  year      = {2019}
}

@inproceedings{djolonga2020precision,
  title     = {Precision-Recall Curves Using Information Divergence Frontiers},
  author    = {Djolonga, Josip and Lucic, Mario and Cuturi, Marco 
               and Bachem, Olivier and Bousquet, Olivier and Gelly, Sylvain},
  booktitle = {Proceedings of the 23rd International Conference on 
               Artificial Intelligence and Statistics},
  series    = {Proceedings of Machine Learning Research},
  volume    = {108},
  pages     = {2550--2559},
  year      = {2020}
}

@inproceedings{naeem2020reliable,
  title     = {Reliable Fidelity and Diversity Metrics for Generative Models},
  author    = {Naeem, Muhammad Ferjad and Oh, Seong Joon and Uh, Youngjung 
               and Choi, Yunjey and Yoo, Jaejun},
  booktitle = {Proceedings of the 37th International Conference on 
               Machine Learning},
  series    = {Proceedings of Machine Learning Research},
  volume    = {119},
  pages     = {7176--7185},
  year      = {2020}
}

@inproceedings{alaa2022faithful,
  title     = {How Faithful is your Synthetic Data? Sample-level Metrics for 
               Evaluating and Auditing Generative Models},
  author    = {Alaa, Ahmed M. and van Breugel, Boris and Saveliev, Evgeny S. 
               and van der Schaar, Mihaela},
  booktitle = {Proceedings of the 39th International Conference on 
               Machine Learning},
  series    = {Proceedings of Machine Learning Research},
  pages     = {290--306},
  year      = {2022},
  note      = {arXiv:2102.08921}
}

@inproceedings{
raisa2025position,
title={Position: All Current Generative Fidelity and Diversity Metrics are Flawed},
author={Ossi R{\"a}is{\"a} and Boris van Breugel and Mihaela van der Schaar},
booktitle={Forty-second International Conference on Machine Learning Position Paper Track},
year={2025},
url={https://openreview.net/forum?id=DMRrbb36r5}
}

@inproceedings{
stein2023exposing,
title={Exposing flaws of generative model evaluation metrics and their unfair treatment of diffusion models},
author={George Stein and Jesse C. Cresswell and Rasa Hosseinzadeh and Yi Sui and Brendan Leigh Ross and Valentin Villecroze and Zhaoyan Liu and Anthony L. Caterini and Eric Taylor and Gabriel Loaiza-Ganem},
booktitle={Thirty-seventh Conference on Neural Information Processing Systems},
year={2023},
url={https://openreview.net/forum?id=08zf7kTOoh}
}

@article{latuszynski2025mcmc,
  title={MCMC for multi-modal distributions},
  author={{\L}atuszy{\'n}ski, Krzysztof and Moores, Matthew T and Stumpf-F{\'e}tizon, Timoth{\'e}e},
  journal={arXiv preprint arXiv:2501.05908},
  year={2025}
}

@ARTICLE{1100034,
  author={Alspach, D. and Sorenson, H.},
  journal={IEEE Transactions on Automatic Control}, 
  title={Nonlinear Bayesian estimation using Gaussian sum approximations}, 
  year={1972},
  volume={17},
  number={4},
  pages={439-448},
  doi={10.1109/TAC.1972.1100034}}

@article{zhuang2020adabelief,
  title={AdaBelief Optimizer: Adapting Stepsizes by the Belief in Observed Gradients},
  author={Zhuang, Juntang and Tang, Tommy and Ding, Yifan and Tatikonda, Sekhar C and Dvornek, Nicha and Papademetris, Xenophon and Duncan, James},
  journal={Advances in Neural Information Processing Systems},
  volume={33},
  year={2020}
}

@article{kidger2021equinox,
    author={Patrick Kidger and Cristian Garcia},
    title={{E}quinox: neural networks in {JAX} via callable {P}y{T}rees and filtered transformations},
    year={2021},
    journal={Differentiable Programming workshop at Neural Information Processing Systems 2021}
}

@misc{jax2018github,
  author = {James Bradbury and Roy Frostig and Peter Hawkins and Matthew James Johnson and Yash Katariya and Chris Leary and Dougal Maclaurin and George Necula and Adam Paszke and Jake Vander{P}las and Skye Wanderman-{M}ilne and Qiao Zhang},
  title = {{JAX}: composable transformations of {P}ython+{N}um{P}y programs},
  url = {http://github.com/jax-ml/jax},
  version = {0.3.13},
  year = {2018},
}

@Article{Hunter2007matplotlib,
  Author    = {Hunter, J. D.},
  Title     = {Matplotlib: A 2D graphics environment},
  Journal   = {Computing in Science \& Engineering},
  Volume    = {9},
  Number    = {3},
  Pages     = {90--95},
  publisher = {IEEE COMPUTER SOC},
  doi       = {10.1109/MCSE.2007.55},
  year      = 2007
}

@misc{wandb,
title = {Experiment Tracking with Weights and Biases},
year = {2020},
note = {Software available from wandb.com},
url={https://www.wandb.com/},
author = {Biewald, Lukas},
}

@ARTICLE{2020SciPy-NMeth,
  author  = {Virtanen, Pauli and Gommers, Ralf and Oliphant, Travis E. and
            Haberland, Matt and Reddy, Tyler and Cournapeau, David and
            Burovski, Evgeni and Peterson, Pearu and Weckesser, Warren and
            Bright, Jonathan and {van der Walt}, St{\'e}fan J. and
            Brett, Matthew and Wilson, Joshua and Millman, K. Jarrod and
            Mayorov, Nikolay and Nelson, Andrew R. J. and Jones, Eric and
            Kern, Robert and Larson, Eric and Carey, C J and
            Polat, {\.I}lhan and Feng, Yu and Moore, Eric W. and
            {VanderPlas}, Jake and Laxalde, Denis and Perktold, Josef and
            Cimrman, Robert and Henriksen, Ian and Quintero, E. A. and
            Harris, Charles R. and Archibald, Anne M. and
            Ribeiro, Ant{\^o}nio H. and Pedregosa, Fabian and
            {van Mulbregt}, Paul and {SciPy 1.0 Contributors}},
  title   = {{{SciPy} 1.0: Fundamental Algorithms for Scientific
            Computing in Python}},
  journal = {Nature Methods},
  year    = {2020},
  volume  = {17},
  pages   = {261--272},
  adsurl  = {https://rdcu.be/b08Wh},
  doi     = {10.1038/s41592-019-0686-2},
}

@misc{hang2024improvednoiseschedulediffusion,
      title={Improved Noise Schedule for Diffusion Training}, 
      author={Tiankai Hang and Shuyang Gu and Xin Geng and Baining Guo},
      year={2024},
      eprint={2407.03297},
      archivePrefix={arXiv},
      primaryClass={cs.CV},
      url={https://arxiv.org/abs/2407.03297}, 
}

\newpage 
\begin{appendices}
\crefalias{section}{appendix}
\crefalias{subsection}{appendix}

\section{Notation}

{\renewcommand{\arraystretch}{1.5}
\begin{tabular}{ |p{15mm}| p{77mm}| p{35mm} |}
\hline
 Symbol & Definition & In words
 \\
 \hline
 $\mathcal{P}(p_0, p_1)$ 
 & 
 $\set{\pdist{\gamma}: \pdist{\gamma} = \gamma \; p_1+(1-\gamma) \; p_0,\ \gamma \in (0,1)}$ 
 & 
 Family of bimodal mixtures of $p_0$ and $p_1$
 \\
 \hline
 $p_t$ 
 & 
 $\mathrm{d} \vx_t = -\frac{\beta_t}{2} \vx_t \mathrm{d}t + \sqrt{\beta_t}\, \mathrm{d}W_t \text{ and } \vx_0 \sim p \Longrightarrow \vx_t \sim p_t$
 & 
 Distribution of samples at time $t$ in the forward process starting from $p$
 \\
 \hline
 $\lnn(\vs_\btheta, p)$ 
 & 
 $\expect[{ \vx_0 \sim p,\ t,\ \vx_t \mid \vx_0}]{\lambda_t\, \norm{\vs_\btheta(\vx_t, t) - \nabla \log \pdist[t]{\gamma^{*}}(\vx_t \mid \vx_0)}^2}$ 
 & 
 Diffusion model training objective
 \\
 \hline
 $\lsm(p,q)$ 
 & 
 $\expect[{\vx \sim q}]{\norm{\nabla \log p(\vx) - \nabla \log q(\vx)}^2}$ 
 & 
 Classical score matching loss
 \\
 \hline
 $\ldsm(p,q)$ 
 & 
 $\begin{aligned}[t]
   &\expect[{t \in (0,1),\ \vx_t \sim q_t}]{\lambda_t\, \norm{\nabla \log p_t(\vx_t) - \nabla \log q_t(\vx_t)}^2} \\
   &=\expect[{t \in (0,1)}]{\lambda_t\lsm(p,q)}
 \end{aligned}$ 
 & 
 Diffusion score matching loss
 \\
 \hline
 $L(\gamma^*)$ 
 & 
 $\displaystyle \min_{\gamma'} \frac{\ldsm(\pdist{\gamma'},\pdist{\gamma^*})}{(\gamma^* - \gamma')^2}$
 & 
 Diffusion score sensitivity index
 \\
 \hline
 $\vs_\btheta(\vx, t)$ 
 & 
 $\forall\, \btheta \in \bm{\Theta} = \RR^{\#\,\text{Parameters}},\ \vs_\btheta : \RR^d \times [0,1] \rightarrow \RR^d$ 
 & 
 Neural network with parameters $\btheta$
 \\
 \hline
\end{tabular}
}

\section{Comparison of sampling algorithms for multimodal distributions}

Sampling from multimodal distributions using classical Langevin sampling is known to be challenging \citep{NEURIPS2018_c6ede20e, latuszynski2025mcmc}. For a two-component Gaussian mixture, \Cref{fig: different loss functions} shows that as mode separation increases, the sensitivity of the classical score matching loss to the mixture weight decays rapidly to 0 while the diffusion score matching loss does not. As a consequence, sampling algorithms based solely on the target score are unsuitable for sampling from multimodal distributions. \Cref{fig:Langevin_vs_Diffusion} illustrates this effect, comparing classical Langevin sampling to diffusion sampling. 


\begin{figure}[ht]
    \centering
    \begin{subfigure}[htbp]{0.19\linewidth}
        \includegraphics[width=\columnwidth]{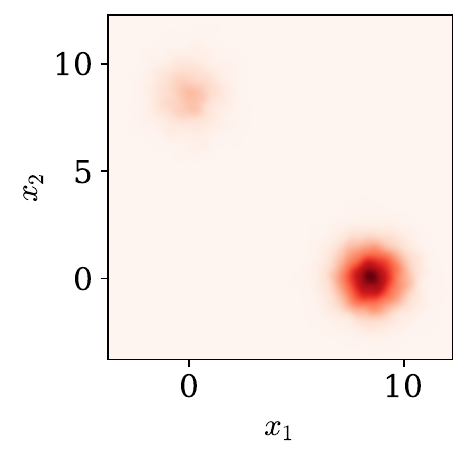}
        \caption{}
        \label{fig: true distribution kde}
    \end{subfigure}
    \begin{subfigure}[htbp]{0.19\linewidth}
        \includegraphics[width=\columnwidth]{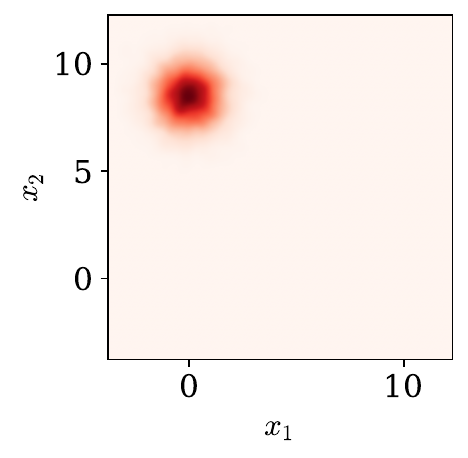}
        \caption{}
        \label{fig: langevin sampling kde}
    \end{subfigure}
    \begin{subfigure}[htbp]{0.19\linewidth}
        \includegraphics[width=\columnwidth]{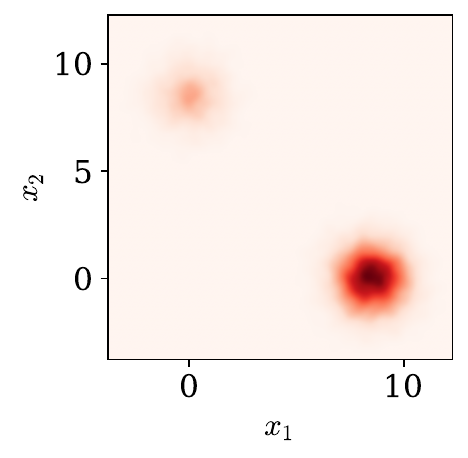}
        \caption{}
        \label{fig: analytic diffusion kde}
    \end{subfigure}
    \begin{subfigure}[htbp]{0.19\linewidth}
        \includegraphics[width=\columnwidth]{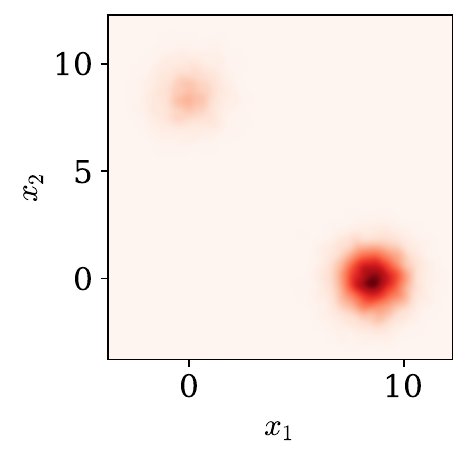}
        \caption{}
        \label{fig: learned diffusion kde}
    \end{subfigure}
    \caption{Comparison between different sampling approaches on a high-separation Gaussian mixture model. (a) shows a KDE of the true data distribution. (b)-(c) show, in order, KDEs of samples using (b) Langevin sampling (run to time $T=10^5$), (c) diffusion sampling using the analytic score function, and (d) diffusion sampling using a trained neural network. We can see that, unlike Langevin sampling, diffusion sampling accurately reflects the relative masses of the two modes. }
    \label{fig:Langevin_vs_Diffusion}
\end{figure}

\section{Minimax upper and lower bounds}
\label{sec:lowerUpperBounds}
Here we provide a more detailed proof of Theorem \ref{thm:sensitivity} and provide additional remarks on the Theorem and the underlying assumptions. We begin with a remark on score-restricted estimators (Definition \ref{def:scoreEstimator}) and explain the motivation for their definition.

In minimax theory, and in particular, Fano's method, the accuracy of an estimator for a parameter learned from a finite number of data samples is inherently limited by how well the data samples can distinguish between two parameter values. This distinguishability is often measured in terms of the KL divergence between the data distributions at two nearby parameter values. At first glance, the data samples given to a diffusion model appear to be the target samples, and as noted previously, the data distributions with well-separated modes do not distinguish between nearby values of weight parameters. Further, note
$\mathrm{KL}(\pdist[t]{\gamma}||\pdist[t]{\gamma'}) \leq
\mathrm{KL}(\pdist{\gamma}||\pdist{\gamma'}).$ As a result, it might seem that the generated samples from the denoising process do not accurately recover the parameter (particularly, given the role that the KL divergence plays in Fano's inequality \citep{binyu}), but our empirical results disagree with this expectation. 
In our empirical results, we see that the noising/denoising process affects parameter recovery compared with score approximation followed by Langevin dynamics. A key idea that helps resolve this mismatch is to recognize that the data a diffusion model uses to accurately estimate parameters is the noisy score (approximated by a neural network), not the noisy sample data. In other words, the generated samples, from the reverse dynamics, are a function of the noisy scores (interpolated by neural networks). These neural network interpolations at some time $t$ can have a distribution that distinguishes between parameter values more than at time $0$. In that sense, the time $t$ scores contain sufficient information to generate samples corresponding to a nearby parameter value, while the time $0$ score (target score) may not. 

Our score-restricted estimators define the notion of recovering $\gamma^*$ from ``score data'' rather than sample data. Such a notion models parameter estimation from the generated samples in score-based generative models where the score field is first approximated and then used for sampling. To distinguish Langevin sampling from diffusion models, we define two separate classes of estimators: one class has access to only a target score oracle and the other class has access to a score oracle at an intermediate time along the noising process. Hence, the estimator classes $\Gamma_{\mathrm{score},t}$ and $\Gamma_{\mathrm{score},0}$ differ in the data distributions given to them, which are respectively, $(\qdist[t]{\gamma})$ and $(\qdist[0]{\gamma}).$ In Lemma \ref{remark:dataProcessing}, we show that the oracle model $\qdist[t]{\gamma}$ cannot be reconstructed in a $\gamma$-independent manner from the oracle manner $\qdist[0]{\gamma},$ and this induces a separation between the two estimator classes.

\begin{remark}    \label{remark:score-restricted-estimators}
 While in Theorem \ref{thm:sensitivity}, we are only comparing $\Gamma_{\mathrm{score},0}$ and a set of estimators that use score data at one other time $t,$ the result continues to hold if instead under a specific redefinition of the two estimator classes. Let $t$ still be a time at which the diffusion sensitivity definition holds for the values of $\epsilon$ and $\delta$ defined in Theorem \ref{thm:sensitivity}. The estimator class $\Gamma_{\mathrm{score},0}$ contains any sampling scheme that does not use samples from $\qdist[t']{\gamma}$, for any time except $t' = 0.$ On the other hand, the estimators in $\Gamma_{\mathrm{score},0}$ must include sampling (denoising) steps that access score data from $\qdist[t]{\gamma}.$

\end{remark}

\begin{remark}
    \label{remark:cameronMartin} To compute the KL divergence, $\mathrm{KL}(\qdist[t]{\gamma}||\qdist[t]{\gamma'})$, we first recognize that both Gaussian measures are on $L^2(\pdist[t]{\gamma}).$ Hence, by Cameron-Martin formula, the Radon-Nikodym derivative, $d\qdist[t]{\gamma}/d\qdist[t]{\gamma'}.$ We can take expectations with respect to $\qdist[t]{\gamma}$ to arrive at the formula $\mathrm{KL}(\qdist[t]{\gamma}||\qdist[t]{\gamma'}) = \|\score[t]{\gamma} - \score[t]{\gamma'}\|^2_{L^2(\pdist[t]{\gamma}}/(2\sigma_t^2).$ This resembles the KL divergence between finite dimensional Gaussian distributions with the same covariance. 
\end{remark}

\begin{lemma}
\label{remark:dataProcessing} There is no $\gamma$-independent post processing of score observations at time 0, $\hat{\mathbf s}^{(\gamma)}_0,$ that can reproduce  $\hat{\mathbf s}^{(\gamma)}_t.$
\end{lemma}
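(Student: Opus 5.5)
We argue by contradiction, using the data processing inequality for KL divergence applied to the Gaussian observation laws of Definition \ref{def:scoreEstimator}. Suppose there were a Markov kernel $K$, not depending on $\gamma$, such that feeding $\hat{\mathbf s}^{(\gamma)}_0 \sim \qdist[0]{\gamma}$ through $K$ produces a random field with law $\qdist[t]{\gamma}$ for every $\gamma \in (0,1)$. That is, $K\qdist[0]{\gamma} = \qdist[t]{\gamma}$ for all $\gamma$.

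Since KL divergence contracts under any Markov kernel applied to both arguments, we would get
\begin{align*}
\mathrm{KL}(\qdist[t]{\gamma}\|\qdist[t]{\gamma'}) = \mathrm{KL}(K\qdist[0]{\gamma}\|K\qdist[0]{\gamma'}) \le \mathrm{KL}(\qdist[0]{\gamma}\|\qdist[0]{\gamma'})
\end{align*}
for all $\gamma,\gamma'$.

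Next we rewrite both sides with the Cameron--Martin computation of Remark \ref{remark:cameronMartin}. The inequality becomes
\begin{align*}
\lsm(\pdist[t]{\gamma'},\pdist[t]{\gamma})/(2\sigma_t^2) \le \lsm(\pdist{\gamma'},\pdist{\gamma})/(2\sigma_0^2).
\end{align*}

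We then contradict this with the structural hypotheses of the setting, namely Assumption \ref{assumption:continuity} at time $t$ together with $\epsilon$-diffusion sensitivity (Definition \ref{definition:diffusionSensitivity}). Take $\gamma^*$ and two points $\gamma,\gamma' \in S(\gamma^*)$ with $|\gamma-\gamma'| \ge 4\sqrt2\,A_t\epsilon$.
\begin{itemize}
\item By diffusion sensitivity, the right-hand side is at most $\epsilon^2/4$.
\item By Assumption \ref{assumption:continuity}, the left-hand side is at least $(\gamma-\gamma')^2/A_t^2 \ge 32\epsilon^2$.
\end{itemize}
Then $32\epsilon^2 \le \epsilon^2/4$, which is impossible for $\epsilon>0$. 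So no such $K$ exists.

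One subtlety must be handled first, and it is the main obstacle. The two oracles are isonormal fields indexed by different Hilbert spaces, $L^2(\pdist{\gamma})$ and $L^2(\pdist[t]{\gamma})$, which themselves depend on $\gamma$. The KL identities of Remark \ref{remark:cameronMartin} therefore only make sense after fixing a common reference space. Following the convention stated before Assumption \ref{assumption:continuity}, we index $W$ by $L^2$ of the first argument and treat each $\qdist[s]{\gamma}$ as a Gaussian measure on a fixed separable space where the data processing inequality applies. We will note that data processing holds for arbitrary measurable kernels between standard Borel spaces, so no continuity of $K$ is needed.

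A further point concerns where the packing pair is needed. The contradiction only requires a single pair $\gamma,\gamma'$ at which the time-$t$ law is more separated than the time-$0$ law, and the packing pair supplies exactly this. We will also remark why this does not conflict with $\mathrm{KL}(\pdist[t]{\gamma'}\|\pdist[t]{\gamma}) \le \mathrm{KL}(\pdist{\gamma'}\|\pdist{\gamma})$: the map $\score{\gamma} \mapsto \score[t]{\gamma}$ is nonlinear, passing through the normalized density, and cannot be realized as a $\gamma$-independent channel acting on the noisy observation $\hat{\mathbf s}^{(\gamma)}_0$.
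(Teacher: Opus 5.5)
Your proof is correct and follows the paper's argument: assume a $\gamma$-independent kernel, use the data processing inequality to bound $\mathrm{KL}$ at time $t$ by $\mathrm{KL}$ at time $0$, and reach a contradiction on a packing pair with $|\gamma-\gamma'|\ge 4\sqrt{2}A_t\epsilon$, via the chain $\epsilon^2/4 \le \tfrac{1}{128}(\gamma-\gamma')^2/A_t^2$ supplied by $\epsilon$-diffusion sensitivity and Assumption~\ref{assumption:continuity}. Your remarks on fixing a common reference Hilbert space and on the generality of data processing go beyond what the paper spells out but do not change the argument, and, like the paper, you implicitly assume that such a well-separated packing pair exists.
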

\begin{proof}
We can prove why by contradiction. If there exists a $\gamma$-independent processing of $\qdist{\gamma}$ that produces the distribution $\qdist[t]{\gamma},$ the data processing inequality gives $\mathrm{KL}(\qdist[t]{\gamma}||\qdist[t]{\gamma'}) \leq \mathrm{KL}(\qdist{\gamma}||\qdist{\gamma'}).$ But, we have assumed that at time $t,$ our mixture family is such that $\mathrm{KL}(\qdist{\gamma}||\qdist{\gamma'}) \leq \epsilon_t^2/4 \leq (1/128) (\gamma - \gamma')^2/A_t^2 \leq (1/128)\mathrm{KL}(\qdist[t]{\gamma}||\qdist[t]{\gamma'}).$ Thus, the oracle observations and the statistical experiment that can be performed with the model $\qdist[t]{\gamma}$ cannot be simulated from having access to the oracle, $\qdist{\gamma}.$ 
\end{proof}

Before we give the proof of Theorem \ref{thm:sensitivity}, we recapitulate a corollary of Generalized Fano's method from Lemma 3 of \citep{binyu} and define covering entropy, which is used in Theorem \ref{thm:sensitivity}.
\begin{lemma}{[Lack of identifiability of mixture weights]} Let $G:= \{\gamma_1, \cdots, \gamma_N\} \in (0,1)^N$ be a set of $N\geq 16$ values of mixture weights such that $|\gamma_i - \gamma_j| \geq \varepsilon,\: \: i \neq j, \; 1\leq i,j \leq N.$ Then, for the class of score-restricted estimators (\cref{def:scoreEstimator}), $\Gamma_{\mathrm{score},0},$ with oracle access to the model, $\qdist{\gamma^*}$,   if $\mathrm{KL}(\qdist{\gamma_i}||\qdist{\gamma_j}) \leq \log N/4,$ we have
\begin{align}
\inf_{\hat{\gamma} \in \Gamma_{\mathrm{score},0}} \max_{\gamma^* \in G} \mathbb{E}_{\qdist{\gamma^*}} [|\hat{\gamma} - \gamma^*|^2] \geq \varepsilon^2/8.
\label{eq:fano}
\end{align}
\label{lem:fano}
\end{lemma}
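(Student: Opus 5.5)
The plan is to read the statement as the standard Fano corollary of Yu's Lemma 3 and to derive it directly from Fano's inequality via a reduction from estimation to multiple testing. Fix the $\varepsilon$-separated set $G=\{\gamma_1,\dots,\gamma_N\}$. Given any score-restricted estimator $\hat\gamma\in\Gamma_{\mathrm{score},0}$, which is a measurable function of the time-$0$ oracle observation $\hat{\mathbf s}^{(\gamma)}_0\sim\qdist{\gamma}$, I will define the minimum-distance test $\psi:=\arg\min_{j}|\hat\gamma-\gamma_j|$, breaking ties arbitrarily. Because the points are $\varepsilon$-separated, the triangle inequality shows that $\psi\neq i$ forces $|\hat\gamma-\gamma_i|\ge\varepsilon/2$ when $\gamma^*=\gamma_i$. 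Markov's inequality then gives
\begin{align*}
\max_i \mathbb{E}_{\qdist{\gamma_i}}|\hat\gamma-\gamma_i|^2 \ge \frac{\varepsilon^2}{4}\max_i \mathbb{P}_{\qdist{\gamma_i}}(\psi\neq i) \ge \frac{\varepsilon^2}{4}\cdot\frac1N\sum_i \mathbb{P}_{\qdist{\gamma_i}}(\psi\neq i).
\end{align*}

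Next I lower bound the average testing error with Fano's inequality. Put the uniform prior on the index $J\in\{1,\dots,N\}$ and let $Y\sim\qdist{\gamma_J}$. Fano gives $\mathbb{P}(\psi\neq J)\ge 1-\frac{I(J;Y)+\log 2}{\log N}$. For the mutual information I use the convexity bound
\begin{align*}
I(J;Y)\le \frac{1}{N^2}\sum_{i,j}\mathrm{KL}(\qdist{\gamma_i}\|\qdist{\gamma_j})\le \frac{\log N}{4},
\end{align*}
where the last step is the pairwise hypothesis of the lemma. These KL divergences are well defined and finite because, by Remark \ref{remark:cameronMartin}, each one equals $\lsm(\pdist{\gamma_j},\pdist{\gamma_i})/(2\sigma_0^2)$, a Cameron--Martin expression, so all the measures are mutually absolutely continuous Gaussian measures.

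Combining the two bounds, $\mathbb{P}(\psi\neq J)\ge 1-\frac14-\frac{\log 2}{\log N}$. Here I use $N\ge16$: then $\log 2/\log N\le 1/4$, so the error probability is at least $1/2$, and therefore $\max_i\mathbb{E}|\hat\gamma-\gamma_i|^2\ge \varepsilon^2/8$. Since $\hat\gamma$ was arbitrary, the bound holds for the infimum over $\Gamma_{\mathrm{score},0}$, which gives \eqref{eq:fano}.

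I expect the main obstacle to be measure-theoretic rather than combinatorial. The observation is an isonormal field and not a finite-dimensional vector, and the indexing Hilbert space $L^2(\pdist{\gamma})$ nominally depends on $\gamma$. To handle this I will fix a common reference Hilbert space, or equivalently restrict to a finite-dimensional projection and pass to the limit using lower semicontinuity of KL, so that all $\qdist{\gamma_i}$ live on one measurable space. Fano's inequality and the data-processing step hidden in $\psi$ are then valid for any measurable $\hat\gamma$. Once that is settled, the remaining arithmetic is routine.
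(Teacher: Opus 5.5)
Your proposal is correct and is essentially the paper's argument. The paper cites Yu's Lemma 3 (Fano's inequality via a minimum-distance reduction to testing) and then applies Markov's inequality at threshold $\varepsilon/2$, with error probability at least $1/2$ coming from $N\ge 16$; you carry out exactly these steps explicitly. Your measure-theoretic detour through finite-dimensional projections is unnecessary beyond fixing a common space for the $N$ measures, because Fano's inequality holds on an arbitrary measurable space and the lemma already assumes the pairwise KL divergences are finite.
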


We note that the proof follows from \citep[Lemma 3]{binyu}, which is an application of Fano's inequality. To get the error in the squared norm, $\mathbb{E}[|\hat{\gamma} - \gamma^*|^2,$ as we have in \eqref{eq:fano}, we need to additionally use Markov's inequality: $\mathbb{E}[|\hat{\gamma} - \gamma|^2 \geq (\varepsilon/2)^2 \mathbb{P}(|\hat{\gamma} - \gamma^*| > \varepsilon/2) \geq (\varepsilon/2)^2 \: (1/2).$ The second inequality directly follows from the proof of Lemma 3 of \citep{binyu}. The main difference here is that the data used by the estimator corresponds to the score (\cref{def:scoreEstimator}), chosen to exemplify the diffusion model setting.

\begin{definition}{[$\epsilon$-covering entropy]}\label{def:cover-entropy}
Let $O_t$ be a finite open cover of $[0,1],$ with centers $\gamma_1, \cdots, \gamma_{N_t},$ such that for any $\gamma \in (0,1),$ $\min_j \mathrm{KL}(\qdist[t]{\gamma}||\qdist[t]{\gamma_j}) < \epsilon^2.$ Then, $V_{\epsilon,t} := \log |O_t|$ is called the $\epsilon$-covering entropy. 
    
\end{definition}

The main novelty in the proof of Theorem \ref{thm:sensitivity} is the setup of score oracle-based estimators. Note that the above standard definitions of packing and covering entropies extend verbatim to our infinite-dimensional statistical models.

Secondly, while annealing has been known to be statistically beneficial \citep{koehler2022statisticalefficiencyscorematching, qin2024fit}, our result proves better minimax bounds for parameter recovery, rather than density estimation. 
To complete the upper bound in \eqref{eq:upperBound}, we apply directly Theorem 2 \citep{yangBarron}. To obtain the lower bound, we use Lemma \ref{lem:fano} with a possibly different packing set of size $N(\gamma^*)$ around each $\gamma^*.$ Using the definition of $\epsilon$-sensitivity and assuming that $\log N(\gamma^*) \geq \max\{\log 16, \sigma_t^2\: V(\epsilon_{t}) \}$ at all $\gamma^*,$ we obtain that for any $\gamma, \gamma' \in S(\gamma^*),$ $\mathrm{KL}(\qdist{\gamma}||\qdist{\gamma'}) \leq \delta = \epsilon_{t}^2/(4 \sigma_t^2) \leq (\log N(\gamma^*)/V(\epsilon_{t}))  \epsilon_{t}^2/4 = \log N(\gamma^*)/4,$ since $\epsilon_{t}^2 \: \sigma_t^2 = V(\epsilon_{t}).$ As a result, the conditions of Lemma \ref{lem:fano} apply at every $\gamma^*.$

\section{Diffusion process on Gaussian mixture models}
\subsection{Mixture Family Preliminaries}

Let $\pdist{0}$ and $\pdist{1}$ be two fixed probability distributions on $\mathbb{R}^d$. We recall the family of mixed probability densities
\begin{equation*}
    \mathcal{P}(\pdist{0},\pdist{1}) \;\coloneqq\; \big\{\, \pdist{\gamma} \coloneqq (1-\gamma)\,\pdist{0}({\vx}) + \gamma\,\pdist{1}({\vx}) \;\big|\; \gamma \in [0,1] \,\big\}.
\end{equation*}
The set of noisy marginals  at noise level $t$ produced by the forward process \eqref{eq:forward-sde} applied to samples ${\vx}_0 \sim \pdist{\gamma}$ is denoted $\{\pdist[t]{\gamma}\}_{t\in [0,1]}$. 
The forward process preserves the mixture structure: the noisy marginal $\pdist[t]{\gamma}$ is itself a mixture of the noisy component marginals $\pdist[t]{0}$ and $\pdist[t]{1}$, with the same mixture weight $\gamma$. The family of noisy mixtures and the noisy family of mixtures therefore coincide.
\begin{lemma}[Mixture closure under forward VP-SDE]
\label{lemma:mixture-closure}
For every $\gamma \in (0,1)$ and every $t \in [0,1]$, 
\begin{equation}
    \pdist[t]{\gamma} ({\vx}) = (1-\gamma) \pdist[t]{0}({\vx}) + \gamma \pdist[t]{1}({\vx}) 
    \label{eq:mixture-closure}
\end{equation}
\end{lemma}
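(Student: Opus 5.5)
The plan is to use linearity of the forward transition kernel. By \eqref{eq:forward-kernel}, the marginal of $\vx_t$ is obtained from the law of $\vx_0$ by integrating against the $\gamma$-independent Gaussian kernel $k_t(\vx \mid \vx_0) := \mathcal{N}(\vx; \sqrt{\alpha_t}\,\vx_0, (1-\alpha_t)\vect I)$. That is, for any initial law $p$ we have $p_t(\vx) = \int k_t(\vx\mid \vx_0)\, p(\mathrm{d}\vx_0)$. This holds because the VP-SDE \eqref{eq:forward-sde} is a linear SDE with additive noise whose solution is $\vx_t = \sqrt{\alpha_t}\,\vx_0 + \int_0^t(\cdots)\,\mathrm{d}\vect w_s$. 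Here the stochastic integral is a centered Gaussian, independent of $\vx_0$, with covariance $(1-\alpha_t)\vect I$. Conditioning on $\vx_0$ therefore gives exactly the kernel, whatever the law of $\vx_0$.

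With this in hand, I apply the integral representation to $p = \pdist{\gamma} = (1-\gamma)\pdist{0} + \gamma\pdist{1}$. Linearity of the integral in the measure gives
\begin{align*}
\pdist[t]{\gamma}(\vx) = (1-\gamma)\int k_t(\vx\mid\vx_0)\,\pdist{0}(\mathrm{d}\vx_0) + \gamma \int k_t(\vx\mid\vx_0)\,\pdist{1}(\mathrm{d}\vx_0).
\end{align*}
Each integral on the right is, by the same representation, the time-$t$ marginal of the forward process started from $\pdist{0}$ (respectively $\pdist{1}$), i.e.\ $\pdist[t]{0}$ and $\pdist[t]{1}$. This is \eqref{eq:mixture-closure}.

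An equivalent probabilistic phrasing, which I may use instead, introduces a latent label $Z\sim\mathrm{Bernoulli}(\gamma)$ independent of $\vect w$, with $\vx_0\mid Z=i \sim \pdist{i}$. Since $\vx_t$ is a measurable function of $(\vx_0, \vect w)$ and the noise is independent of $(Z,\vx_0)$, the conditional law of $\vx_t$ given $Z=i$ is $\pdist[t]{i}$. The law of total probability then gives the same mixture.

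The only step needing care is the first: justifying that the transition kernel does not depend on the initial distribution, including when $\pdist{0},\pdist{1}$ lack densities. The explicit solution of the linear SDE handles this. For $t=0$ the kernel degenerates to a Dirac mass, and the identity is trivial. For $t>0$, $\pdist[t]{\gamma}$ has a smooth positive density, which also makes the weights in \eqref{eq:mix score} well defined.
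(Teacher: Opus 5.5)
Your proposal is correct and follows essentially the same route as the paper. The paper's proof also writes $\pdist[t]{\gamma}$ as the integral of the $\gamma$-independent Gaussian kernel $k_t$ against $\pdist{\gamma}$ and then splits by linearity; you additionally justify the kernel's independence from the initial law via the explicit solution of the linear SDE, which the paper simply takes from \eqref{eq:forward-kernel}.
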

\begin{proof}
Let $k_t(\vect{y} \mid {\vx}) := \CalN(\vect{y};\, \sqrt{\alpha_t}\,{\vx},\, (1-\alpha_t)\,\vect{I})$ denote the forward conditional density~\eqref{eq:forward-kernel}. Since $k_t$ does not depend on the initial distribution, the marginal at noise level $t$ is obtained by a linear convolution against the initial density:
\[
    \pdist[t]{\gamma}({\vx}) \;=\; \int_{\RR^d} k_t({\vx} \mid {\vx}_0)\,\pdist{\gamma}({\vx}_0)\,\mathrm{d}{\vx}_0.
\]
Substituting the definition of the mixture and applying the linearity of integration,
\begin{align*}
    \pdist[t]{\gamma}({\vx})
    \;&=\; (1-\gamma) \int_{\RR^d} k_t({\vx} \mid {\vx}_0)\,\pdist{0}({\vx}_0)\,\mathrm{d}{\vx}_0
    \;+\; \gamma \int_{\RR^d} k_t({\vx} \mid {\vx}_0)\,\pdist{1}({\vx}_0)\,\mathrm{d}{\vx}_0
    \\
    \;&=\; (1-\gamma)\,\pdist[t]{0}({\vx}) + \gamma\,\pdist[t]{1}({\vx}).
\end{align*}
\end{proof}

A useful consequence of \Cref{lemma:mixture-closure} is that samples from $\pdist[t]{\gamma}$ can be drawn by first sampling a component label $c \sim \mathrm{Ber}(\gamma)$ and then drawing $\vect{y}_t \sim \pdist[t]{c}$, a component-wise decomposition that simplifies subsequent analysis.

\subsubsection{The score of the noisy mixture}

Differentiating $\log \pdist[t]{\gamma}$ and applying \Cref{lemma:mixture-closure} gives the score of the noisy mixture as a pointwise convex combination of the component scores,
\begin{equation}
    \nabla_{\vx} \log \pdist[t]{\gamma}({\vx})
    \;=\; w_0^t({\vx})\,\nabla_{\vx} \log \pdist[t]{0}({\vx}) \;+\; w_1^t({\vx})\,\nabla_{\vx} \log \pdist[t]{1}({\vx}),
    \label{eq:mixture-score-app}
\end{equation}
where the weights are the EM responsibilities --- the posterior probabilities under the generative model that ${\vx}$ was produced by component $0$ or component $1$, respectively \cite[\S 9.2]{10.5555/1162264},
\begin{equation}
    w_0^t({\vx}) \;\coloneqq\; \frac{(1-\gamma)\,\pdist[t]{0}({\vx})}{\pdist[t]{\gamma}({\vx})}, \qquad
    w_1^t({\vx}) \;\coloneqq\; \frac{\gamma\,\pdist[t]{1}({\vx})}{\pdist[t]{\gamma}({\vx})},
    \label{eq:mixture-weights-app}
\end{equation}
and satisfy $w_0^t({\vx}) + w_1^t({\vx}) = 1$.

\subsubsection{Specialization: Gaussian components}

When the components are Gaussian, $\pdist{c} = \mathcal{N}(\boldsymbol{\mu}_c^*, \boldsymbol{\Sigma}_c^*)$ for $c \in \{0,1\}$, the noisy marginals remain Gaussian, $\pdist[t]{c} = \mathcal{N}(\boldsymbol{\mu}_{c,t}^*, \boldsymbol{\Sigma}_{c,t}^*)$, and the score \eqref{eq:mixture-score-app} takes the explicit form
\begin{equation}
    \nabla_{\vx} \log \pdist[t]{\gamma}({\vx}) \;=\; \sum_{c \in \{0,1\}} w_c^t({\vx})\, (\boldsymbol{\Sigma}_{c,t}^*)^{-1} (\boldsymbol{\mu}_{c,t}^* - {\vx}),
    \label{eq:gaussian-mixture-score}
\end{equation}
a responsibility-weighted sum of precision-scaled displacements $(\boldsymbol{\Sigma}_{c,t}^*)^{-1}(\boldsymbol{\mu}_{c,t}^* - {\vx})$. The responsibility weights \eqref{eq:mixture-weights-app} admit the explicit closed form
\begin{equation}
    w_c^t({\vx}) \;=\; \frac{\pi_c\, \mathcal{N}\!\left({\vx};\, \boldsymbol{\mu}_{c,t}^*,\, \boldsymbol{\Sigma}_{c,t}^*\right)}{\sum_{c' \in \{0,1\}} \pi_{c'}\, \mathcal{N}\!\left({\vx};\, \boldsymbol{\mu}_{c',t}^*,\, \boldsymbol{\Sigma}_{c',t}^*\right)},
    \label{eq:responsibility-weights}
\end{equation}
with mixture weights $\pi_0 = 1-\gamma$ and $\pi_1 = \gamma$. The numerator and denominator together carry information about $\gamma$, the Mahalanobis distance from ${\vx}$ to each component mean, and the Gaussian normalizing constants $\det(\boldsymbol{\Sigma}_{c,t}^*)^{-1/2}$.

\subsubsection{DDPM-style forward process} 

\begin{lemma}\label{lemma:mix_over_time}
For $\pdist{0}:=\mathcal{N}(\vx;\mu_0,\vect I)$ and $\pdist{1}:=\mathcal{N}(\vx;\mu_1,\vect I)$ denote two individual component distributions. For a variance-preserving forward process with noise schedule $(\bar{\alpha}_t)_{t \in [0,T]}$, the parameters of the mixture modes at time $t$ are
\begin{equation}
    \boldsymbol{\mu}_{c,t}^* \;=\; \sqrt{\bar{\alpha}_t}\, \boldsymbol{\mu}_c^*, \qquad
    \boldsymbol{\Sigma}_{c,t}^* \;=\; \bar{\alpha}_t\, \boldsymbol{\Sigma}_c^* + (1 - \bar{\alpha}_t)\, \mathbf{I},
    \label{eq:ddpm-perturbation}
\end{equation}
Hence, the noisy marginal distributions are,     
    \begin{align*}
        \pdist[t]{0}(\vx) = \mathcal{N}(\vx;\sqrt{\alpha_t} \mu_0, \vect I),
          \qquad 
        \pdist[t]{1}(\vx) = \mathcal{N}(\vx;\sqrt{\alpha_t} \mu_1, \vect I)
    \end{align*}
\end{lemma}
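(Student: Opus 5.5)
The plan is a direct computation from the forward kernel \eqref{eq:forward-kernel}, applied to each Gaussian component separately; \Cref{lemma:mixture-closure} then carries the result over to the mixture. Fix $c\in\{0,1\}$ and take $\vx_0\sim\pdist{c}=\CalN(\boldsymbol{\mu}_c^*,\boldsymbol{\Sigma}_c^*)$. By \eqref{eq:forward-kernel} the forward process can be written as
\[
\vx_t=\sqrt{\bar\alpha_t}\,\vx_0+\sqrt{1-\bar\alpha_t}\,\boldsymbol{\xi},\qquad \boldsymbol{\xi}\sim\CalN(\mathbf 0,\mathbf I),
\]
with $\boldsymbol{\xi}$ independent of $\vx_0$. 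Hence $\vx_t$ is an affine image of the jointly Gaussian pair $(\vx_0,\boldsymbol{\xi})$ and is itself Gaussian. Equivalently, the convolution integral $\int k_t(\vx\mid\vx_0)\,\pdist{c}(\vx_0)\,\mathrm d\vx_0$ from the proof of \Cref{lemma:mixture-closure} is a Gaussian integral.

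Next, read off the first two moments. Linearity of expectation and $\mathbb E\boldsymbol{\xi}=\mathbf 0$ give the mean $\sqrt{\bar\alpha_t}\,\boldsymbol{\mu}_c^*$. Independence makes the cross terms vanish, so the covariance is
\[
\bar\alpha_t\,\boldsymbol{\Sigma}_c^*+(1-\bar\alpha_t)\,\mathbf I .
\]
This is exactly \eqref{eq:ddpm-perturbation}. Since a Gaussian is determined by its mean and covariance, this settles the law of $\pdist[t]{c}$.

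Finally, specialize to $\boldsymbol{\Sigma}_c^*=\mathbf I$. The covariance collapses to $\bar\alpha_t\mathbf I+(1-\bar\alpha_t)\mathbf I=\mathbf I$; this is the variance-preserving property. It yields $\pdist[t]{c}(\vx)=\CalN(\vx;\sqrt{\alpha_t}\mu_c,\vect I)$, where I identify $\bar\alpha_t$ with the $\alpha_t$ of \eqref{eq:forward-kernel}. Combining with \Cref{lemma:mixture-closure}, $\pdist[t]{\gamma}$ is then the $\gamma$-mixture of these two unit-covariance Gaussians.

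There is no real obstacle here. The only points needing care are notational: checking that $\bar\alpha_t$ matches $\alpha_t=e^{-\int_0^t\beta_s\,\mathrm ds}$ from \eqref{eq:forward-kernel}, and that the time range $[0,T]$ matches the paper's $[0,1]$. I would state this identification explicitly at the start of the proof.
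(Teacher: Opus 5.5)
Your proposal is correct and follows the paper's proof essentially line for line. You write $\mathbf{x}_t=\sqrt{\alpha_t}\,\mathbf{x}_0+\sqrt{1-\alpha_t}\,\mathbf{z}$ with $\mathbf{z}$ independent of $\mathbf{x}_0$ and read off the mean $\sqrt{\alpha_t}\,\boldsymbol{\mu}$ and covariance $\alpha_t\boldsymbol{\Sigma}+(1-\alpha_t)\mathbf{I}$; your explicit specialization to $\boldsymbol{\Sigma}_c^*=\mathbf{I}$ and the identification of $\bar\alpha_t$ with $\alpha_t$ simply spell out steps the paper leaves implicit.
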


\begin{proof}
    Let $\vx_0\sim \mathcal{N}(\mu,\vect \Sigma)$. We have that 
    \begin{align*}
        \vx_t | \vx_0 &\sim \mathcal{N}(\sqrt{\alpha_t} \vx_0, (1-\alpha_t) \vect I)
        \\
        \Longrightarrow \vx_t &= \sqrt{\alpha_t}\vx_0 + \sqrt{1-\alpha_t} \vect z
    \end{align*}
    where $\vect z\sim \mathcal{N}(0,\vect I)$ is independent of $\vx_0$. Therefore,
    \begin{align*}
        \vx_t &\sim \mathcal{N}\fitparenth{\sqrt{\alpha_t} \vect \mu, \alpha_t \vect \Sigma +(1-\alpha_t)\vect I}
    \end{align*}
\end{proof}

As per \eqref{eq:ddpm-perturbation} $\alpha_t$ decreases from $1$ to $0$, the component means contract toward the origin and the covariances inflate toward $\mathbf{I}$. Substituting \eqref{eq:ddpm-perturbation} into \eqref{eq:responsibility-weights} reveals two limits of interest. At small $t$ ($\alpha_t \to 1$), the perturbed components remain well separated, and the responsibilities concentrate sharply on the closest mode, recovering the M-step regime of the EM algorithm. At large $t$ ($\alpha_t \to 0$), the perturbed components all approach $\mathcal{N}(\mathbf{0}, \mathbf{I})$ and the responsibilities flatten toward the prior weights $(1-\gamma, \gamma)$, becoming nearly independent of ${\vx}$. Sensitivity of the score to the mixture parameter $\gamma$ is therefore concentrated at the noise scales for which the responsibility weights are neither saturated nor uninformative --- a window whose location and width are determined by the geometry of the components.

We now demonstrate an equivalence between the neural network loss \eqref{eqn: neural network loss} and the diffusion score matching loss \eqref{eq:diffScoreMatchingLoss}. 
\begin{theorem}
\label{lem:nn-dsm-ordering}
Let $\pdist{\gamma}$ be a data distribution on $\mathbb{R}^d$ with $\mathbb{E}_{\pdist{\gamma}}\|\vx_0\|^2 < \infty$, and let $\{\pdist[t]{\gamma}\}_{t\in[0,1]}$ be its marginals under the VP-SDE \eqref{eq:forward-sde}--\eqref{eq:forward-kernel}. Let $\vs_\btheta:\mathbb{R}^d \times [0,1] \to \mathbb{R}^d$ be a score model with $\mathbb{E}_{\pdist[t]{\gamma}}\|\vs_\btheta(\cdot,t)\|^2 < \infty$ for almost every $t \in [0,1]$. Then
\begin{equation}
   \lnn(\pdist{\btheta}, \pdist{\gamma}) \;=\; \ldsm(\pdist{\btheta}, \pdist{\gamma}) \;+\; C(\pdist{\gamma}),
   \label{eq:nn-dsm-equivalence}
\end{equation}
where the constant $C(\pdist{\gamma})$ does not depend on $\btheta$ and admits the closed form
\begin{align}   \label{eq:nn-dsm-constant}
   C(\pdist{\gamma}) &\;=\; \int_0^1 \lambda_t \cdot \frac{\alpha_t}{(1-\alpha_t)^2}\,\mathrm{MMSE}_t(\pdist{\gamma})\, dt,\\
\mathrm{MMSE}_t(\pdist{\gamma}) 
&\;:=\; 
\mathbb{E}_{\substack{\vx_0 \,\sim\, \pdist{\gamma} \\ \vx_t \,\sim\, \pdist[t]{\gamma}(\cdot\mid\vx_0)}}\!\left[\,\big\|\vx_0 - \widehat{\vx}_0(\vx_t)\big\|^2\,\right],\\
\widehat{\vx}_0(\vx_t) &\;:=\; \mathbb{E}_{\vx_0' \,\sim\, \pdist{\gamma}(\cdot\mid\vx_t)}[\vx_0'].
\end{align}
Moreover, $C(\pdist{\gamma}) \ge 0$, with equality if and only if $\pdist{\gamma}$ is a Dirac mass. In particular,
\begin{equation}
   \lnn(\pdist{\btheta}, \pdist{\gamma}) \;\ge\; \ldsm(\pdist{\btheta}, \pdist{\gamma})
   \label{eq:nn-dsm-ordering}
\end{equation}
for every $\btheta \in \boldsymbol{\Theta}$.
\end{theorem}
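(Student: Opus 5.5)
The proof follows Vincent's classical denoising score matching decomposition, applied pointwise in $t$ and then integrated against $\lambda_t\,dt$.

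Fix $t\in(0,1]$ and write $\vs:=\vs_\btheta(\cdot,t)$, $g(\vx_t,\vx_0):=\nabla\log \pdist[t]{\gamma}(\vx_t\mid\vx_0) = -(\vx_t-\sqrt{\alpha_t}\vx_0)/(1-\alpha_t)$, and $\vs^*:=\score[t]{\gamma}$. Expanding the square gives
\[
\mathbb{E}\|\vs(\vx_t)-g\|^2=\mathbb{E}\|\vs(\vx_t)-\vs^*(\vx_t)\|^2+\mathbb{E}\|\vs^*(\vx_t)-g\|^2+2\,\mathbb{E}\langle \vs(\vx_t)-\vs^*(\vx_t),\,\vs^*(\vx_t)-g\rangle .
\]
The key fact is that the conditional expectation of $g$ given $\vx_t$ equals the marginal score, $\mathbb{E}[g(\vx_t,\vx_0)\mid\vx_t]=\vs^*(\vx_t)$. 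This is the usual identity $\nabla p_t(\vx)=\int\nabla_{\vx} p_t(\vx\mid\vx_0)\,p(\vx_0)\,d\vx_0$, and differentiating under the integral is justified because the Gaussian kernel is smooth with integrable derivative. Conditioning on $\vx_t$ therefore kills the cross term. The moment hypotheses make this rigorous: $\mathbb{E}\|\vs\|^2<\infty$ holds by assumption; $g$ is in $L^2$ because $\mathbb{E}\|\vx_0\|^2<\infty$; and $\vs^*$ is in $L^2$ by Jensen's inequality, being a conditional expectation of $g$. So all terms are integrable and Cauchy--Schwarz applies.

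The constant is identified with Tweedie's formula: $\vs^*(\vx_t)=-(\vx_t-\sqrt{\alpha_t}\,\widehat{\vx}_0(\vx_t))/(1-\alpha_t)$. Hence $\vs^*-g=\sqrt{\alpha_t}(\widehat{\vx}_0-\vx_0)/(1-\alpha_t)$, and the residual term equals $\alpha_t\,\mathrm{MMSE}_t/(1-\alpha_t)^2$. Multiplying by $\lambda_t$ and integrating over $t\sim\mathrm{Unif}(0,1)$ yields \eqref{eq:nn-dsm-equivalence} with the closed form \eqref{eq:nn-dsm-constant}. Here I interpret $\ldsm(\pdist{\btheta},\pdist{\gamma})$ as $\mathbb{E}_t\lambda_t\,\mathbb{E}_{\pdist[t]{\gamma}}\|\vs_\btheta-\score[t]{\gamma}\|^2$. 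Finiteness of the integral follows because $\lambda_t=1-\alpha_t$ cancels one factor of $(1-\alpha_t)$, and $\mathrm{MMSE}_t\le\min\{\mathrm{tr}\,\mathrm{Cov}(\vx_0),\,(1-\alpha_t)\,d/\alpha_t\}$, the latter by comparison with the estimator $\vx_t/\sqrt{\alpha_t}$. The integrand is therefore bounded near $t=0$.

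Nonnegativity of $C$ is immediate, and \eqref{eq:nn-dsm-ordering} then follows. For the equality case: if $\pdist{\gamma}$ is a Dirac mass, then $\widehat{\vx}_0=\vx_0$ and $C=0$. Conversely, if $C=0$, then $\mathrm{MMSE}_t=0$ for a.e. $t$ with $\alpha_t>0$, so $\vx_0$ is a measurable function of $\vx_t=\sqrt{\alpha_t}\vx_0+\sqrt{1-\alpha_t}\,\vect z$. The step I expect to be the main obstacle is turning this into a contradiction when $\vx_0$ is not a Dirac mass. I would argue that if $\vx_0$ takes two distinct values on sets of positive probability, the Gaussian noise gives both conditional laws of $\vx_t$ full support, so they overlap. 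On that overlap the posterior is nondegenerate and the conditional variance is positive, which forces $\mathrm{MMSE}_t>0$.
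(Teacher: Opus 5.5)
Your proof is correct. Its engine is the same as the paper's: the identity $\mathbb{E}[g\mid x_t]=s^*(x_t)$, which the paper phrases as $a_t(\theta)=b_t(\theta)$ and proves by the same differentiation under the integral, followed by Tweedie's formula for the constant.

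\textbf{Where you differ is the bookkeeping.} The paper expands both losses around zero and cancels the common $\mathbb{E}\|s\|^2$ and the matched cross terms. It must then evaluate the gap as a difference of second moments, $\mathbb{E}\|g\|^2-\mathbb{E}\|s^*\|^2$. That requires $\mathbb{E}\|g\|^2=d/(1-\alpha_t)$, a decomposition of $x_t-\sqrt{\alpha_t}\,\widehat{x}_0$, and an orthogonality computation for its cross term. Your Pythagorean split around $s^*$ makes the constant literally $\mathbb{E}\|s^*-g\|^2$. Since $s^*-g=\sqrt{\alpha_t}\,(\widehat{x}_0-x_0)/(1-\alpha_t)$, the closed form and its nonnegativity both follow in one line. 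Your explicit reading of the DSM loss with $s_\theta$ in place of a genuine noised score of the generated law is also exactly what the paper's proof uses implicitly.

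\textbf{What you add.} You supply two things the paper leaves out. First, finiteness of $C$: your bound $\mathrm{MMSE}_t\le(1-\alpha_t)d/\alpha_t$ in fact caps the integrand at $d$ when $\lambda_t=1-\alpha_t$. Second, an actual argument for the Dirac equality case, which the paper only asserts.

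\textbf{Tightening the equality case.} Do not rely on ``two distinct values'' (which suggests atoms) or on mere full support. Instead, if the law of $x_0$ is not a Dirac mass, pick a Borel set $A$ with $0<\mathbb{P}(x_0\in A)<1$. The conditional laws of $x_t$ given $x_0\in A$ and given $x_0\notin A$ both have everywhere-positive Lebesgue densities. Hence $\mathbb{P}(x_0\in A\mid x_t=x)\in(0,1)$ for every $x$, so the posterior is never a point mass and $\mathrm{tr}\,\mathrm{Cov}(x_0\mid x_t)>0$ everywhere. Therefore $\mathrm{MMSE}_t>0$ for every $t$ with $0<\alpha_t<1$.

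\textbf{Minor aside.} $g=-z/\sqrt{1-\alpha_t}$ is in $L^2$ without any moment assumption. The hypothesis $\mathbb{E}\|x_0\|^2<\infty$ is what you need for $\widehat{x}_0$ and for the MMSE comparison.
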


\begin{proof}
Our arguments follow the reasoning in \cite{6795935} with an additional logic for guaranteeing the required relation between the neural network loss and the diffusion score matching loss.  
Throughout, all expectations are taken under $\pdist{\gamma}(\vx_0)$, $\pdist[t]{\gamma}(\vx_t\mid \vx_0)$ for the joint, and under $\pdist[t]{\gamma}(\vx_t)$ for the marginal. We suppress the dependence on $t$ in $\vs_\btheta(\vx_t, t)$ when convenient.
Fix $t \in [0,1]$. Expanding the squared norm in the inner expectation of $\ldsm$,
\begin{equation}
   \mathbb{E}_{\vx_t}\!\big\|\vs_\btheta(\vx_t,t) - \score[t]{\gamma}(\vx_t)\big\|^2
   = \mathbb{E}_{\vx_t}\|\vs_\btheta(\vx_t,t)\|^2 \,-\, 2\,b_t(\btheta) \,+\, \mathbb{E}_{\vx_t}\!\big\|\score[t]{\gamma}(\vx_t)\big\|^2,
   \label{eq:proof-esm-expand}
\end{equation}
with $b_t(\btheta) := \mathbb{E}_{\vx_t}\!\big\langle \vs_\btheta(\vx_t,t),\, \score[t]{\gamma}(\vx_t)\big\rangle$. Similarly,
\begin{align}\label{eq:proof-dsm-expand}
&\mathbb{E}_{\vx_0,\vx_t}\!\big\|\vs_\btheta(\vx_t,t) - \nabla_{\vx_t}\log \pdist[t]{\gamma}(\vx_t \mid \vx_0)\big\|^2\\\nonumber 
   &\qquad = \mathbb{E}_{\vx_0,\vx_t}\|\vs_\btheta(\vx_t, t)\|^2 \,-\, 2\,a_t(\btheta) \,+\, \mathbb{E}_{\vx_0,\vx_t}\!\big\|\nabla_{\vx_t}\log \pdist[t]{\gamma}(\vx_t\mid\vx_0)\big\|^2,
\end{align}
with $a_t(\btheta) := \mathbb{E}_{\vx_0,\vx_t}\!\big\langle \vs_\btheta(\vx_t,t),\, \nabla_{\vx_t}\log \pdist[t]{\gamma}(\vx_t\mid\vx_0)\big\rangle$. 
The first term in each expansion coincides because $\vs_\btheta(\vx_t,t)$ depends only on $\vx_t$. We now claim that for all $\theta$, $b_t(\btheta) = a_t(\btheta)$. Using $\score[t]{\gamma} = \nabla \pdist[t]{\gamma}/\pdist[t]{\gamma}$ and the marginalization $\pdist[t]{\gamma}(\vx_t) = \int \pdist[t]{\gamma}(\vx_t\mid\vx_0)\,\pdist{\gamma}(\vx_0)\, d\vx_0$,
\begin{align*}
   b_t(\btheta) 
   &= \int \pdist[t]{\gamma}(\vx_t)\,\big\langle \vs_\btheta(\vx_t,t),\, \score[t]{\gamma}(\vx_t)
   \big\rangle\, d\vx_t \\
   &= \int \big\langle \vs_\btheta(\vx_t,t),\, \nabla_{\vx_t}\pdist[t]{\gamma}(\vx_t)\big\rangle\, d\vx_t \\
   &= \int \Big\langle \vs_\btheta(\vx_t,t),\, \nabla_{\vx_t}\!\!\int \pdist[t]{\gamma}(\vx_t\mid\vx_0)\,\pdist{\gamma}(\vx_0)\, d\vx_0\Big\rangle\, d\vx_t \\
   &= \iint \big\langle \vs_\btheta(\vx_t,t),\, \nabla_{\vx_t}\pdist[t]{\gamma}(\vx_t\mid\vx_0)\big\rangle\,\pdist{\gamma}(\vx_0)\, d\vx_0\, d\vx_t \\
   &= \iint \pdist[t]{\gamma}(\vx_t\mid\vx_0)\,\pdist{\gamma}(\vx_0)\,\big\langle \vs_\btheta(\vx_t,t),\, \nabla_{\vx_t}\log \pdist[t]{\gamma}(\vx_t\mid\vx_0)\big\rangle\, d\vx_0\, d\vx_t \\
   &= a_t(\btheta),
\end{align*}
where Leibniz's rule applies by smoothness and Gaussian decay of the forward kernel, and Fubini's theorem applies due to the regularity assumption on $\vs_{\btheta}$. Combining the above,
\begin{equation}
   \mathbb{E}_{\vx_0,\vx_t}\!\big\|\vs_\btheta(\vx_t, t) - \nabla_{\vx_t}\log \pdist[t]{\gamma}(\vx_t\mid\vx_0)\big\|^2
   - \mathbb{E}_{\vx_t}\!\big\|\vs_\btheta(\vx_t, t) - \score[t]{\gamma}(\vx_t) \big\|^2
   = \mathrm{gap}(t),
   \label{eq:proof-residual}
\end{equation}
where
\begin{equation}
   \mathrm{gap}(t) := \mathbb{E}_{\vx_0,\vx_t}\!\big\|\nabla_{\vx_t}\log \pdist[t]{\gamma}(\vx_t\mid\vx_0)\big\|^2 - \mathbb{E}_{\vx_t}\!\big\|\score[t]{\gamma}(\vx_t)\big\|^2,
   \label{eq:proof-Dt-def}
\end{equation}
which is independent of $\btheta$. The Gaussian forward kernel \eqref{eq:forward-kernel} gives
\begin{equation}
   \nabla_{\vx_t}\log \pdist[t]{\gamma}(\vx_t\mid\vx_0)
   = -\frac{\vx_t - \sqrt{\alpha_t}\vx_0}{1-\alpha_t}
   = -\frac{\vect{z}}{\sqrt{1-\alpha_t}},\quad \vect{z}\sim\mathcal{N}(\vect{0},\vect{I}),
   \label{eq:proof-cond-score}
\end{equation}
hence the first term of \eqref{eq:proof-Dt-def} equals
\begin{equation}
\mathbb{E}~\!\big\|\nabla_{\vx_t}\log \pdist[t]{\gamma}(\vx_t\mid\vx_0)\big\|^2 = \frac{d}{1-\alpha_t}.
   \label{eq:proof-cond-norm}
\end{equation}
For the marginal score, take expectations of \eqref{eq:proof-cond-score} conditional on $\vx_t$ we obtain Tweedie's formula:
\begin{equation}
   \score[t]{\gamma}(\vx_t)
   = \mathbb{E}~\!\big[\nabla_{\vx_t}\log \pdist[t]{\gamma}(\vx_t\mid\vx_0)\,\big|\,\vx_t\big]
   = \frac{\sqrt{\alpha_t}\,\mathbb{E}[\vx_0\mid\vx_t] - \vx_t}{1-\alpha_t}.
   \label{eq:proof-tweedie}
\end{equation}
Setting $\hat{\bm{\eta}}(\vx_t) := \sqrt{\alpha_t}\,\mathbb{E}[\vx_0\mid\vx_t] = \mathbb{E}[\sqrt{\alpha_t}\vx_0 \mid \vx_t]$, we see that, 
\begin{equation}
   \mathbb{E}~\!\big\|\score[t]{\gamma}(\vx_t)\big\|^2 = \frac{\mathbb{E}\|\vx_t - \hat{\bm{\eta}}(\vx_t)\|^2}{(1-\alpha_t)^2}.
   \label{eq:proof-score-norm}
\end{equation}
Decomposing $\vx_t - \hat{\bm{\eta}} = (\vx_t - \sqrt{\alpha_t}\vx_0) + (\sqrt{\alpha_t}\vx_0 - \hat{\bm{\eta}}) = \sqrt{1-\alpha_t}\,\vect{z} + \sqrt{\alpha_t}(\vx_0 - \mathbb{E}[\vx_0\mid\vx_t])$ and expanding the expectation we get,
\begin{equation}
   \mathbb{E}\|\vx_t - \hat{\bm{\eta}}\|^2 = (1-\alpha_t)\,d + 2\sqrt{\alpha_t(1-\alpha_t)}\,\mathbb{E}~\!\big\langle\vect{z},\,\vx_0 - \mathbb{E}[\vx_0\mid\vx_t]\big\rangle + \alpha_t\,\mathrm{MMSE}_t(\pdist{\gamma}).
   \label{eq:proof-decomp}
\end{equation}
For the cross term, use $\vect{z} = (\vx_t - \sqrt{\alpha_t}\vx_0)/\sqrt{1-\alpha_t}$ and orthogonality of conditional expectation, $\mathbb{E}~\!\big[(\vx_0 - \mathbb{E}[\vx_0\mid\vx_t])\,\widehat{\vx}_0(\vx_t)\big] = 0$ for any $g$:
\begin{align*}
   \mathbb{E}~\!\big\langle\vect{z},\,\vx_0 - \mathbb{E}[\vx_0\mid\vx_t]\big\rangle
   &= \tfrac{1}{\sqrt{1-\alpha_t}}\,\mathbb{E}~\!\big\langle \vx_t - \sqrt{\alpha_t}\vx_0,\,\vx_0 - \mathbb{E}[\vx_0\mid\vx_t]\big\rangle \\
   &= \tfrac{1}{\sqrt{1-\alpha_t}}\,
   \widehat{\vx}_0(0 - \sqrt{\alpha_t}\,\mathrm{MMSE}_t(\pdist{\gamma})\big) \\
   &= -\frac{\sqrt{\alpha_t}\,\mathrm{MMSE}_t(\pdist{\gamma})}{\sqrt{1-\alpha_t}}.
\end{align*}
Substituting into \eqref{eq:proof-decomp},
\begin{equation}
   \mathbb{E}\|\vx_t - \hat{\bm{\eta}}\|^2 = (1-\alpha_t)\,d - \alpha_t\,\mathrm{MMSE}_t(\pdist{\gamma}),
\end{equation}
and therefore via \eqref{eq:proof-score-norm},
\begin{equation}
   \mathbb{E}~\!\big\|\score[t]{\gamma}(\vx_t)\big\|^2 = \frac{d}{1-\alpha_t} - \frac{\alpha_t\,\mathrm{MMSE}_t(\pdist{\gamma})}{(1-\alpha_t)^2}.
   \label{eq:proof-score-final}
\end{equation}
Combining \eqref{eq:proof-cond-norm} and \eqref{eq:proof-score-final} in \eqref{eq:proof-Dt-def},
\begin{equation}
   \mathrm{gap}(t) = \frac{\alpha_t\,\mathrm{MMSE}_t(\pdist{\gamma})}{(1-\alpha_t)^2} \;\ge\; 0.
   \label{eq:proof-Dt-final}
\end{equation}
Hence, multiplying \eqref{eq:proof-residual} by $\lambda_t$ and integrating over $t\sim \mathrm{Unif}(0,1)$, the difference in the neural network loss and the diffusion score matching loss can be written as, 
\begin{equation}
   \lnn(\pdist{\btheta}, \pdist{\gamma}) - \ldsm(\pdist{\btheta}, \pdist{\gamma})
   = \int_0^1 \lambda_t\,\mathrm{gap}(t)\,dt
   = \int_0^1 \lambda_t\,\frac{\alpha_t\,\mathrm{MMSE}_t(\pdist{\gamma})}{(1-\alpha_t)^2}\,dt
   =: C(\pdist{\gamma}),
\end{equation}
proving \eqref{eq:nn-dsm-equivalence}--\eqref{eq:nn-dsm-constant}.
For the equality case, note that $C(\pdist{\gamma}) = 0$ iff $\mathrm{MMSE}_t(\pdist{\gamma}) = 0$ for almost every $t \in (0,1)$. The latter requires $\vx_0 = \mathbb{E}[\vx_0\mid\vx_t]$ almost surely, i.e., $\vx_0$ is $\sigma(\vx_t)$-measurable. Since $\vx_t = \sqrt{\alpha_t}\vx_0 + \sqrt{1-\alpha_t}\vect{z}$ with $\vect{z}$ independent of $\vx_0$ and $1-\alpha_t > 0$, this forces $\vx_0$ to be deterministic.
\end{proof}

\subsection{Proof of  \Cref{thm: GMM Sensitivity Emerges}}\label{Proof of Theorem 1}

\begin{remark}
    In this theorem, we use $\lambda_t=1$ rather than $\lambda_t= 1-\alpha_t$. 
    The value of $t$ where $\lsm(\pdist[t]{\gamma},\pdist[t]{\gamma^*})$ is large correspond to when $\alpha_t$ is close to $0$ for when the modes are reasonably well-separated, so that $1-\alpha_t$ is close to $1$. The same analysis can be performed in the presence of any weighting function $\lambda_t$ if one is willing to accept a more complicated expression for the constant $C$; we elaborate on this in \Cref{prop: Lower bound integral}. For all of our numerical results, we use the convention that $\lambda_t=1-\alpha_t$ to show that our results still hold for the weighting factor used in practice.
\end{remark}

Throughout the proof, let $\separation:=\norm{\mu_1-\mu_0}$. From \Cref{lemma:mixture-closure} and \Cref{lemma:mix_over_time}, we can generate samples from $\pdist[t]{\gamma}$ by sampling $c\sim \mathrm{Ber}(\gamma)$ and sampling $\vect y_t\sim \pdist[t]{c}$. Furthermore, the next lemma will show that we can generate these samples using the Probability Flow ODE (\citep{song2021scorebasedgenerativemodelingstochastic}) instead of the stochastic forward process.

First, note that 
\begin{align}\label{eq: mode decomposition}
    \ldsm(\pdist{\hat\gamma},\pdist{\gamma^*})&= \expect[t,\vect x_t\sim \pdist[t]{\gamma^*}]{\norm{\score[t]{\gamma^*}(\vect x_t)-\score[t]{\hat\gamma}}(\vect x_t)}
    \\
    &= (1-\gamma^*)\expect[t,\vect x_t\sim \pdist[t]{\gamma^*}|c=0]{\norm{\score[t]{\gamma^*}(\vect x_t)-\score[t]{\hat\gamma}}(\vect x_t)}
    \\
    &\quad\quad + \gamma^*\expect[t,\vect x_t\sim \pdist[t]{\gamma^*}|c=1]{\norm{\score[t]{\gamma^*}(\vect x_t)-\score[t]{\hat\gamma}}(\vect x_t)}
    \\
    &= (1-\gamma^*)\expect[t,\vect x_t\sim \pdist[t]{0}]{\norm{\score[t]{\gamma^*}(\vect x_t)-\score[t]{\hat\gamma}}(\vect x_t)}
    \\
    &\quad\quad + \gamma^*\expect[t,\vect x_t\sim \pdist[t]{1}]{\norm{\score[t]{\gamma^*}(\vect x_t)-\score[t]{\hat\gamma}}(\vect x_t)}
\end{align}

In particular, we can lower-bound $\ldsm$ by independently lower-bounding each of these expectations.

\begin{lemma}
    Consider any distribution $\pi$ and let $\pi_t$ be the noisy marginal at time $t$ of the forward process. Then, if $\vect y_t$ satisfies the Probability Flow ODE
    \begin{equation}\label{eqn: Probability Flow ODE}
        \mathrm{d}\vect y_t = -\frac{1}{2}\beta_t\fitparenth{\vect y_t + \nabla \log p_{\vect y_t}(\vect y_t)}\mathrm{d}t
    \end{equation}
    and $\vect y_0\sim \pi$, then $\vect y_t\sim \pi_t$.
\end{lemma}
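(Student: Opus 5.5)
The plan is to show that the density of $\vect y_t$ satisfies the same Fokker--Planck equation as the forward SDE \eqref{eq:forward-sde}, with the same initial condition. Uniqueness of solutions to that linear PDE then gives $\vect y_t\sim\pi_t$. Here the vector field in \eqref{eqn: Probability Flow ODE} is understood as $\nabla\log\pi_t$, the score of the forward marginal; the ODE is defined through $\pi_t$, not self-consistently.

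\textbf{Forward marginals.} By \eqref{eq:forward-kernel}, $\pi_t$ is the convolution of the pushforward of $\pi$ under $\vx\mapsto\sqrt{\alpha_t}\vx$ with $\mathcal N(0,(1-\alpha_t)\vect I)$. Hence for $t>0$ it is smooth, strictly positive, and has Gaussian tails. Itô's formula applied to \eqref{eq:forward-sde} gives the Fokker--Planck equation
\begin{equation*}
\partial_t \pi_t = \nabla\cdot\left(\tfrac12\beta_t \vx\,\pi_t\right) + \tfrac12\beta_t\Delta\pi_t .
\end{equation*}
Writing $\Delta\pi_t=\nabla\cdot(\pi_t\nabla\log\pi_t)$, this becomes a pure continuity equation,
\begin{equation*}
\partial_t\pi_t + \nabla\cdot\left(\pi_t\, v_t\right)=0,\qquad v_t(\vx) = -\tfrac12\beta_t\left(\vx+\nabla\log\pi_t(\vx)\right),
\end{equation*}
and $v_t$ is exactly the drift of \eqref{eqn: Probability Flow ODE}.

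\textbf{ODE marginals.} Let $\rho_t$ be the law of $\vect y_t$ under the flow of $v_t$ with $\rho_0=\pi$. For a test function $\varphi$, differentiating $\mathbb{E}\varphi(\vect y_t)$ shows that $\rho_t$ solves the same continuity equation weakly, with the same initial datum $\pi$. The score $\nabla\log\pi_t$ is smooth and locally Lipschitz for $t>0$. Combined with the linear growth of $v_t$, this gives global existence and uniqueness of trajectories. The superposition principle, or uniqueness for continuity equations with locally Lipschitz fields (Ambrosio--Gigli--Savaré), then forces $\rho_t=\pi_t$.

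\textbf{Main obstacle.} The delicate point is the behavior near $t=0$ when $\pi$ is a general distribution, since $\nabla\log\pi_t$ may blow up there. In the setting where the lemma is applied, $\pi=\mathcal N(\mu_c,\vect I)$, so this issue disappears. By \Cref{lemma:mix_over_time}, $\pi_t=\mathcal N(\sqrt{\alpha_t}\mu_c,\vect I)$, so $\nabla\log\pi_t(\vx)=\sqrt{\alpha_t}\mu_c-\vx$. The ODE then reduces to the affine equation $\mathrm d\vect y_t=-\tfrac12\beta_t\sqrt{\alpha_t}\mu_c\,\mathrm dt$, whose explicit solution is $\vect y_t=\vect y_0+(\sqrt{\alpha_t}-1)\mu_c$. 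This is a deterministic translation of $\mathcal N(\mu_c,\vect I)$, and it gives $\mathcal N(\sqrt{\alpha_t}\mu_c,\vect I)=\pi_t$ directly.

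I would present this explicit Gaussian computation as the rigorous core, since it is all that the proof of \Cref{thm: GMM Sensitivity Emerges} needs. The general statement would be covered by the continuity-equation argument above, under standard regularity assumptions.
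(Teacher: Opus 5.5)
Your proposal is correct. Its general part uses the same mechanism as the paper, which writes both Fokker--Planck equations and uses $\nabla\cdot(p\nabla\log p)=\Delta p$ to see that they coincide; the difference lies in how the drift is read.

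The paper takes the score literally as that of the law of $\vect y_t$ itself, so that law is shown to solve the linear Ornstein--Uhlenbeck Fokker--Planck equation. Existence of $\vect y_t$ sits in the hypothesis. The uniqueness step, which the paper leaves unstated, then concerns a parabolic equation with affine drift and constant diffusion, which holds for any initial law. So no analysis near $t=0$ is needed.

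You instead freeze the drift at $v_t=-\tfrac12\beta_t(x+\nabla\log\pi_t)$. This matches how the lemma is actually used in \Cref{lemma: simple paths}. But uniqueness then becomes a transport-equation statement that needs local Lipschitz control of $v_t$, which is exactly where your $t\to0$ obstacle comes from. It also does not, by itself, exclude other self-consistent solutions of the literal statement.

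Your closed-form Gaussian computation is the genuinely different and more elementary route. It integrates the ODE explicitly and needs no PDE uniqueness at all. It essentially coincides with \Cref{lemma: simple paths}, and it is all that \Cref{thm: GMM Sensitivity Emerges} requires.

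One correction to the general argument: for arbitrary $\pi$, $\pi_t$ need not have Gaussian tails and $v_t$ need not grow linearly, so drop those claims. The Ambrosio--Gigli--Savar\'e result you cite needs only local Lipschitz bounds on $v_t$ on $[t_0,1]$ with $t_0>0$, plus time-integrability of $\int|v_t|\,\mathrm d\pi_t$. That is enough to produce $\pi_{t_0}$-a.e.\ global characteristics.
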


\begin{proof}
    Let $\vx_t$ satisfy the Variance Preserving SDE and $\vect y_t$ be as in the lemma statement. Then, we have the Fokker-Planck equations
    \begin{align*}
        \frac{\partial p_{\vx_t} }{\partial t}&= -\nabla \cdot \fitparenth{-\frac{1}{2}\beta_t \vx p_{\vx_t} } + \frac{1}{2}\beta_t\Delta p_{\vx_t} 
        \\
        &=\frac{1}{2}\beta_t  \nabla \cdot \fitparenth{\vx p_{\vx_t} } + \frac{1}{2}\beta_t\Delta p_{\vx_t} 
        \\
        \\
        \frac{\partial p_{\vect y_t} }{\partial t}
        &= \frac{1}{2}\beta_t \nabla \cdot \fitparenth{\vect y p_{\vect y_t} }  +\frac{1}{2}\beta_t \nabla \cdot \fitparenth{p_{\vect y_t}\nabla \log p_{\vect y_t} } 
        \\
        &= \frac{1}{2}\beta_t \nabla \cdot \fitparenth{\vect y p_{\vect y_t} }  +\frac{1}{2}\beta_t \Delta p_{\vect y_t}
    \end{align*}
\end{proof}

The Probability Flow ODE (\eqref{eqn: Probability Flow ODE}) substantially simplifies the analysis because of the following fact:
\begin{lemma}\label{lemma: simple paths}
    For $\vect y_0$, and $\vect y_t$ following the Probability Flow ODE associated with $c$ and starting from $\vect y_0\sim \pdist{c}$, we have that 
    \[\vect y_t - \sqrt{\alpha_t}\mu_c = \vect y_0 - \mu_c\]
\end{lemma}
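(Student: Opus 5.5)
The plan is to write the Probability Flow ODE \eqref{eqn: Probability Flow ODE} explicitly for the single Gaussian component $c$, and then show that $\vect y_t - \sqrt{\alpha_t}\mu_c$ has zero time derivative along every path. The interpretation I adopt is that "the Probability Flow ODE associated with $c$" uses the score of the component marginal $\pdist[t]{c}$, which is the law of $\vect y_t$ when $\vect y_0\sim\pdist{c}$ by the preceding lemma.

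First I will identify that score. By \Cref{lemma:mix_over_time} with $\boldsymbol{\Sigma}_c^*=\vect I$, the noised component is $\pdist[t]{c}=\mathcal{N}(\sqrt{\alpha_t}\mu_c,\vect I)$, because $\alpha_t\vect I+(1-\alpha_t)\vect I=\vect I$. The variance-preserving schedule keeps the covariance exactly at the identity, so
\begin{equation*}
\nabla\log \pdist[t]{c}(\vect y) = -(\vect y-\sqrt{\alpha_t}\mu_c).
\end{equation*}

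Next I substitute this score into the ODE. The $\vect y_t$ terms cancel, which leaves
\begin{equation*}
\frac{\mathrm{d}\vect y_t}{\mathrm{d}t} = -\tfrac12\beta_t\bigl(\vect y_t-\vect y_t+\sqrt{\alpha_t}\mu_c\bigr) = -\tfrac12\beta_t\sqrt{\alpha_t}\,\mu_c .
\end{equation*}
This right-hand side does not depend on the state $\vect y_t$.

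I then differentiate the centering term using the definition of $\alpha_t$ from \eqref{eq:forward-kernel}, namely $\alpha_t=e^{-\int_0^t\beta_s\,\mathrm{d}s}$. This gives $\frac{\mathrm{d}}{\mathrm{d}t}\sqrt{\alpha_t}=-\tfrac12\beta_t\sqrt{\alpha_t}$. (The statement of \Cref{thm: GMM Sensitivity Emerges} writes $\alpha_t$ with a positive exponent; I take the negative exponent of the preliminaries to be the intended one.) Combining the two derivatives,
\begin{equation*}
\frac{\mathrm{d}}{\mathrm{d}t}\bigl(\vect y_t-\sqrt{\alpha_t}\mu_c\bigr)=-\tfrac12\beta_t\sqrt{\alpha_t}\,\mu_c+\tfrac12\beta_t\sqrt{\alpha_t}\,\mu_c=0 .
\end{equation*}
Integrating from $0$ to $t$ and using $\alpha_0=1$ yields $\vect y_t-\sqrt{\alpha_t}\mu_c=\vect y_0-\mu_c$.

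There is no real obstacle in this argument. The only points that need care are the choice of score (the component score, not the mixture score) and the sign convention for $\alpha_t$. Existence and uniqueness of the ODE solution are immediate, because the vector field is independent of $\vect y$ and continuous in $t$.
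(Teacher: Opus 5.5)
Your proof is correct and follows the paper's argument: differentiate $\vect y_t-\sqrt{\alpha_t}\mu_c$ along the flow, substitute the component score $\sqrt{\alpha_t}\mu_c-\vect y_t$, and observe that the derivative vanishes. Your sign convention, $\frac{\mathrm d}{\mathrm dt}\sqrt{\alpha_t}=-\frac12\beta_t\sqrt{\alpha_t}$ from the preliminaries' definition of $\alpha_t$, is the consistent one; the paper's write-up uses the opposite sign and offsets it with a second sign slip in the following line.
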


\begin{proof}
    Note that $\frac{\mathrm d}{\mathrm d t}\sqrt{\alpha_t}=\frac{1}{2}\beta_t \sqrt{\alpha_t}$. Therefore,
    \begin{align*}
        \frac{\mathrm d}{\mathrm dt}(\vect y_t -\sqrt{\alpha_t}\mu_c) &=-\frac{1}{2}\beta_t \fitparenth{\vect y_t + \nabla \log \pdist[t]{c}(\vect y_t)} - \frac{1}{2}\beta_t \sqrt{\alpha_t} \mu_c
        \\
        &=-\frac{1}{2}\beta_t \fitparenth{\vect y_t + \sqrt{\alpha_t} \mu_c - \vect y_t - \sqrt{\alpha_t} \mu_c}
        \\
        &=0
    \end{align*}
\end{proof}

We now focus on the case where $c=0$ (the case of $c=1$ will follow via symmetry). Let $\vect z_t = \vect y_t|c=0$. Our first goal is to lower bound $\norm{\score[t]{\gamma^*}(\vect z_t) - \score[t]{\hat \gamma}(\vect z_t)}^2$ pointwise in $t$ and $\vect z_t$. We first show an analytic expression for this score difference, then identify a range of $t$'s along each path where this expression is sufficiently large. Finally, by integrating over initial conditions $\vect z_0$, we get a lower bound for $\expect[t,\vect z_t\sim \pdist[t]{0}]{\norm{\score[t]{\gamma^*}(\vect z_t)-\score[t]{\hat\gamma}(\vect z_t)}^2}$.
\begin{proposition}\label{prop:exact norm expr}
    Let $\hat \gamma, \gamma^*\in [0,1]$. Then, 
    \[\norm{\score[t]{\hat \gamma}(\vect z_t) - \score[t]{\gamma^{*}}(\vect z_t)}^2 = ({\hat\gamma}-\gamma^*)^2  \frac{\norm{\mu_0-\mu_1}^2 \alpha_te^{2f_0(\vect z_0, t)}}{(1-{\hat\gamma} + {\hat\gamma}e^{f_0(\vect z_0, t)})^2 (1-\gamma^* + \gamma^* e^{f_0(\vect z_0, t)})^2}\]
    where 
    \[f_0(\vect z_0, t):=-\frac{1}{2}\norm{\mu_1-\mu_0}^2\alpha_t + \inprod{\vect z_0 - \mu_0, \mu_1 - \mu_0}\sqrt{\alpha_t}\]
\end{proposition}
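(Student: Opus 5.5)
We will compute the score difference explicitly from the Gaussian-mixture score formula \eqref{eq:gaussian-mixture-score} and then substitute the probability-flow path from \Cref{lemma: simple paths}. By \Cref{lemma:mix_over_time}, both noisy components have identity covariance with means $\sqrt{\alpha_t}\mu_0$ and $\sqrt{\alpha_t}\mu_1$. Write $r_\gamma(\vx):=w_1^t(\vx)$ for the responsibility of component $1$ under weight $\gamma$. Then
\[
\score[t]{\gamma}(\vx)=(1-r_\gamma)(\sqrt{\alpha_t}\mu_0-\vx)+r_\gamma(\sqrt{\alpha_t}\mu_1-\vx)=\sqrt{\alpha_t}\mu_0-\vx+r_\gamma(\vx)\sqrt{\alpha_t}(\mu_1-\mu_0).
\]
The terms $\sqrt{\alpha_t}\mu_0-\vx$ do not depend on $\gamma$, so they cancel in the difference:
\[
\score[t]{\hat\gamma}-\score[t]{\gamma^*}=\sqrt{\alpha_t}\,(r_{\hat\gamma}-r_{\gamma^*})(\mu_1-\mu_0),
\]
and hence $\norm{\score[t]{\hat\gamma}-\score[t]{\gamma^*}}^2=\alpha_t\norm{\mu_1-\mu_0}^2(r_{\hat\gamma}-r_{\gamma^*})^2$.

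Next we express the responsibility through the log-likelihood ratio. Set $g(\vx):=\log\big(\pdist[t]{1}(\vx)/\pdist[t]{0}(\vx)\big)$. For equal-covariance Gaussians the normalizing constants cancel, giving
\[
g(\vx)=-\tfrac12\norm{\vx-\sqrt{\alpha_t}\mu_1}^2+\tfrac12\norm{\vx-\sqrt{\alpha_t}\mu_0}^2=\sqrt{\alpha_t}\inprod{\vx-\sqrt{\alpha_t}\mu_0,\mu_1-\mu_0}-\tfrac12\alpha_t\norm{\mu_1-\mu_0}^2 .
\]
Evaluating at $\vx=\vect z_t$ and using \Cref{lemma: simple paths} (with $c=0$), we have $\vect z_t-\sqrt{\alpha_t}\mu_0=\vect z_0-\mu_0$. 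Therefore $g(\vect z_t)=f_0(\vect z_0,t)$ exactly. Then $r_\gamma=\gamma e^{f_0}/(1-\gamma+\gamma e^{f_0})$.

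Finally, we combine the two responsibilities over a common denominator. The numerator is
\[
\hat\gamma e^{f_0}(1-\gamma^*+\gamma^*e^{f_0})-\gamma^*e^{f_0}(1-\hat\gamma+\hat\gamma e^{f_0})=e^{f_0}(\hat\gamma-\gamma^*),
\]
because the $e^{2f_0}$ terms and the $\hat\gamma\gamma^*$ cross terms cancel. Squaring and multiplying by $\alpha_t\norm{\mu_1-\mu_0}^2$ gives the claimed formula, with $e^{2f_0}$ in the numerator and the product of the squared denominators.

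The only delicate step is identifying $g(\vect z_t)$ with $f_0(\vect z_0,t)$. This requires the path identity from \Cref{lemma: simple paths} together with careful bookkeeping of the $\sqrt{\alpha_t}$ factors, so that the cross term reduces to $\inprod{\vect z_0-\mu_0,\mu_1-\mu_0}\sqrt{\alpha_t}$. Everything else is routine algebra.
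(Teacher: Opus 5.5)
Your proposal is correct and follows essentially the same route as the paper. Both write the mixture score as the component-$0$ score plus the responsibility times $\sqrt{\alpha_t}(\mu_1-\mu_0)$, express that responsibility through the log-likelihood ratio evaluated along the probability-flow path (where it equals $f_0(\vect z_0,t)$), and combine the two fractions. Your explicit use of the path identity $\vect z_t-\sqrt{\alpha_t}\mu_0=\vect z_0-\mu_0$, and of the cancellation in the numerator, just spells out steps the paper leaves implicit.
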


\begin{proof}
    We \[\pdist[t]{0}(\vx)=\frac{1}{Z_t}\exp\fitparenth{-\frac{1}{2}\norm{\vx-\sqrt{\alpha_t}\mu_0}^2}\]
     \[\pdist[t]{1}(\vx)=\frac{1}{Z_t}\exp\fitparenth{-\frac{1}{2}\norm{\vx-\sqrt{\alpha_t}\mu_1}^2}\]

    Expanding the log likelihoods, we have
    \begin{align*}
        \log \frac{p_1(\vect z_t)}{p_0(\vect z_t)}&= -\frac{1}{2} \norm{\mu_0-\mu_1}^2 \alpha_t + \inprod{\vect z_0 - \mu_0, \mu_1 - \mu_0}\sqrt{\alpha_t}
    \end{align*}

    Let $f(\vect z_0,t) :=\log \frac{p_1(\vect z_t)}{p_0(\vect z_t)}$. Then,
    \begin{align*}
        \score[t]{\gamma}(\vect z_t)&= \frac{(1-\gamma)\;\pdist[t]{0}(\vect z_t)\score[t]{0}(\vect z_t) + \gamma\;\pdist[t]{1}(\vect z_t)\score[t]{1}(\vect z_t)}{\pdist[t]{\gamma}(\vect z_t)}
        \\
        &=\score[t]{0}(\vect z_t)  +  \frac{  \gamma\;\pdist[t]{1}(\vect z_t)\fitparenth{\score[t]{1}(\vect z_t)-\score[t]{0}(\vect z_t)}}{\pdist[t]{\gamma}(\vect z_t)}
        \\
        &=\score[t]{0}(\vect z_t)  +  \frac{  \gamma\;\pdist[t]{1}(\vect z_t)}{\pdist[t]{\gamma}(\vect z_t)}\sqrt{\alpha_t}\fitparenth{ \mu_1- \mu_0}
        \\
        &=\score[t]{0}(\vect z_t)  +  \frac{  \gamma\;e^{f(\vect z_0,t)}}{1-\gamma+\gamma \; e^{f(\vect z_0,t)}}\sqrt{\alpha_t}\fitparenth{ \mu_1- \mu_0}
    \end{align*}
    where we used $\nabla \log \pdist[t]{c}(\vx) = \sqrt{\alpha_t}\mu_c-\vx$. Therefore,
    \begin{align*}
        &\score[t]{\hat \gamma}(\vect z_t) - \score[t]{\gamma^{*}}(\vect z_t)= \frac{  ({\hat\gamma}-\gamma^*)\;e^{f(\vect z_0,t)}}{\fitparenth{1-{\hat\gamma}+{\hat\gamma} \; e^{f(\vect z_0,t)}}\fitparenth{1-\gamma^*+\gamma^* \; e^{f(\vect z_0,t)}}}\sqrt{\alpha_t}\fitparenth{ \mu_1- \mu_0}
    \end{align*}

    This gives
    \[\norm{\score[t]{\hat \gamma}(\vect z_t) -\score[t]{\gamma^{*}}(\vect z_t)}^2=({\hat\gamma}-\gamma^*)^2\frac{ \norm{ \mu_1- \mu_0}^2\alpha_t\;e^{2f(\vect z_0,t)}}{\fitparenth{1-{\hat\gamma}+{\hat\gamma} \; e^{f(\vect z_0,t)}}^2\fitparenth{1-\gamma^*+\gamma^* \; e^{f(\vect z_0,t)}}^2} \]
\end{proof}

\begin{lemma}
    For all $x\in \RR$ and $\gamma\in [0,1]$, we have 
    \[\frac{e^x}{(1-\gamma + \gamma e^x)^2}\geq \frac{1}{4\gamma (1-\gamma)}e^{-\frac{1}{4}\fitparenth{x-\log \frac{1-\gamma}{\gamma}}^2}\]
    with equality only when $x=\log \frac{1-\gamma}{\gamma}$. In particular, 
    \begin{align*}
        &\frac{e^{2f_0(\vect z_0, t)}}{\fitparenth{1-{\hat\gamma} + {\hat\gamma} e^{f_0(\vect z_0, t)}}^2 \fitparenth{1-\gamma^* + \gamma^* e^{f_0(\vect z_0, t)}}^2}\geq \frac{e^{-\frac{1}{4}\fitparenth{f_0(\vect z_0,t)-\log \frac{1-{\hat\gamma}}{{\hat\gamma}}}^2-\frac{1}{4}\fitparenth{f_0(\vect z_0,t)-\log \frac{1-\gamma^*}{\gamma^*}}^2}}{16 {\hat\gamma} (1-{\hat\gamma}) \gamma^* (1-\gamma^*)}
    \end{align*}
\end{lemma}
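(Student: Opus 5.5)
The plan is to substitute so that the left-hand side becomes a hyperbolic secant squared, and then apply the elementary inequality $\cosh y \le e^{y^2/2}$. We assume $\gamma\in(0,1)$, so that $\log\frac{1-\gamma}{\gamma}$ is finite. At the endpoints the right-hand side is not defined, and the lemma is only used with $\hat\gamma\in[a,1-a]$ and $\gamma^*\in(0,1)$.

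\textbf{Step 1 (reduction).} Write $x=\log\frac{1-\gamma}{\gamma}+u$. Then $\gamma e^x=(1-\gamma)e^u$, so
\begin{equation*}
1-\gamma+\gamma e^x=(1-\gamma)(1+e^u), \qquad e^x=\frac{1-\gamma}{\gamma}\,e^u .
\end{equation*}
Substituting,
\begin{equation*}
\frac{e^x}{(1-\gamma+\gamma e^x)^2}=\frac{e^u}{\gamma(1-\gamma)(1+e^u)^2}=\frac{1}{4\gamma(1-\gamma)\cosh^2(u/2)},
\end{equation*}
using $(1+e^u)^2 e^{-u}=(e^{u/2}+e^{-u/2})^2=4\cosh^2(u/2)$. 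The right-hand side of the claim is $\frac{1}{4\gamma(1-\gamma)}e^{-u^2/4}$. The claim is therefore equivalent to $\cosh^2(u/2)\le e^{u^2/4}$, i.e.\ $\cosh y\le e^{y^2/2}$ with $y=u/2$.

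\textbf{Step 2 (the scalar inequality).} Compare the power series term by term:
\begin{equation*}
\cosh y=\sum_{k\ge 0}\frac{y^{2k}}{(2k)!}, \qquad e^{y^2/2}=\sum_{k\ge 0}\frac{y^{2k}}{2^k k!}.
\end{equation*}
Since $(2k)!=\prod_{j=1}^{k}(2j)(2j-1)\ge \prod_{j=1}^k 2j=2^k k!$, each term of $\cosh y$ is at most the corresponding term of $e^{y^2/2}$. For $k\ge 2$ the factorial inequality is strict, and $y^{2k}>0$ when $y\neq 0$. Hence the inequality is strict for $y\ne0$, with equality exactly at $y=0$, i.e.\ at $x=\log\frac{1-\gamma}{\gamma}$. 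This gives the first statement together with its equality case.

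\textbf{Step 3 (the product form).} Write the left-hand side of the second display as the product of
\begin{equation*}
\frac{e^{f_0}}{(1-\hat\gamma+\hat\gamma e^{f_0})^2} \quad\text{and}\quad \frac{e^{f_0}}{(1-\gamma^*+\gamma^* e^{f_0})^2}.
\end{equation*}
Apply the first statement to each factor with $x=f_0(\vect z_0,t)$, using $\gamma=\hat\gamma$ and $\gamma=\gamma^*$ respectively. Both factors and both lower bounds are positive, so the inequalities can be multiplied. The constants combine to $\frac{1}{16\hat\gamma(1-\hat\gamma)\gamma^*(1-\gamma^*)}$ and the exponents add, which gives the stated bound.

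There is no real obstacle here. The only point requiring care is spotting the substitution in Step 1 that turns the expression into $\operatorname{sech}^2$; after that, everything reduces to the classical bound $\cosh y\le e^{y^2/2}$.
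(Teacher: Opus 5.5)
Your proof is correct, but it takes a different route from the paper's. The paper stays in the original variable. It sets $g(x)=e^x/(1-\gamma+\gamma e^x)^2$ and $h(x)=\frac{1}{4\gamma(1-\gamma)}e^{-\frac14(x-x^*)^2}$ with $x^*=\log\frac{1-\gamma}{\gamma}$. It computes $\frac{d}{dx}\log g$ to show that $x^*$ is the unique critical point and that $g(x^*)=h(x^*)$. It then shows $\frac{d^2}{dx^2}[\log h-\log g]=-\frac{(1-e^{x-x^*})^2}{2(1+e^{x-x^*})^2}\le 0$. So $\log h-\log g$ is concave, with a critical point at $x^*$ where it equals $0$, which gives $h\le g$ with equality only at $x^*$.

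Your substitution $u=x-x^*$ shows what that computation is doing. One has $\log h-\log g=2\log\cosh(u/2)-u^2/4$, and the paper's second derivative is $-\frac12\tanh^2(u/2)$. In other words, the paper gives the calculus proof of $\log\cosh y\le y^2/2$, and you give the power-series proof.

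What your version buys:
\begin{itemize}
\item It needs no differentiation.
\item It identifies the bound as the standard sub-Gaussian bound on the Rademacher moment generating function.
\item It gets strictness from the single coefficient inequality $(2k)!>2^k k!$ for $k\ge 2$.
\item It makes explicit that $\gamma$ must lie in $(0,1)$. The statement's $\gamma\in[0,1]$ cannot be taken literally, since the right-hand side is undefined at the endpoints.
\end{itemize}

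The paper's version is self-contained in the original coordinates. It also shows in passing that $x^*$ is the maximizer of $g$, which is why the Gaussian-shaped lower bound is centered there.

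Both arguments show the exponent constant $\frac14$ is sharp. In your series the $k=1$ coefficients coincide, so the two sides agree to second order at $u=0$.
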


\begin{proof}
    Let $g(x)= \frac{e^x}{(1-\gamma + \gamma e^x)^2}$ and $h(x)=\frac{1}{4\gamma (1-\gamma)}e^{-\frac{1}{4}\fitparenth{x-\log \frac{1-\gamma}{\gamma}}^2}$. Then, 
    \begin{align*}
        \frac{\mathrm d}{\mathrm d x}\log g(x) &=\frac{1-\gamma -\gamma e^x}{1-\gamma + \gamma e^x}
    \end{align*}

    For all $\gamma\in [0,1]$ $1-\gamma + \gamma e^x>0$. Therefore, $x^*:=\log \frac{1-\gamma}{\gamma}$ is the only critical point of $g(x)$, and $g(x^*)=\frac{1}{4\gamma\fitparenth{1-\gamma}}$. Similarly $h(x^*)=\frac{1}{4\gamma\fitparenth{1-\gamma}}$ and $\frac{\mathrm d}{\mathrm d x}\log h(x^*)=0$. We have
    \begin{align*}
        \frac{\mathrm{d}}{\mathrm{d}x}[\log h(x)-\log g(x)]
        &=-\frac{1}{2}(x-x^*) - \frac{1-e^{x-x^*}}{1+e^{x-x^*}}
        \\
        \frac{\mathrm{d}^2}{\mathrm{d}x^2}[\log h(x)-\log g(x)]
        &= -\frac{(1-e^{x-x^*})^2}{2(1+e^{x-x^*})^2}
    \end{align*}
    
    So, $\frac{\mathrm{d}^2}{\mathrm{d}x^2}[\log h(x)-\log g(x)]\leq 0$ with equality only when $x=x^*$. Therefore, $\log h-\log g$ is strictly concave away from $x^*$ and maximized at $x^*$, where $h(x^*)=g(x^*)$. Therefore, $h< g$.
\end{proof}

\begin{lemma}\label{lemma:bounds}
    Let $\vect z_t$ and $\vect z_0$ be as above. Let $M(\vect z_0):=\inprod{\vect z_0 - \mu_0, \mu_1 - \mu_0}$, and define the log-odds ratios 
    \begin{align*}
        \hat r &:= \log \frac{1-{\hat\gamma}}{{\hat\gamma}}
        \\
        r^* &:= \log \frac{1-\gamma^*}{\gamma^*}
    \end{align*}
    
    Let $\epsilon>0$ be such that 
    \begin{align*}
    \epsilon &\leq \exp\fitparenth{-\frac{1}{8}\fitbracket{(\hat r-r^*)^2 + \fitparenth{\hat r+r^*-\frac{M(\vect z_0)^2}{\separation^2}}_+^2}}
    \end{align*}
    where $(\cdot)_+:= \max\set{\cdot, 0}$.
    
    Let
    \begin{align*}
        a&=\max\left\{\frac{M(\vect z_0)}{\separation^2} + \frac{1}{\separation}\sqrt{\fitparenth{\frac{M(\vect z_0)^2}{\separation^2} - \fitparenth{ \hat r + r^*}-\sqrt{8\log \frac{1}{\epsilon}-\fitparenth{\hat r-r^*}^2}}_+},\sqrt{\alpha_1}\right\}
        \\
        b&= \min\left\{\frac{M(\vect z_0)}{\separation^2} + \frac{1}{\separation}\sqrt{\frac{M(\vect z_0)^2}{\separation^2} - \fitparenth{ \hat r + r^*}+\sqrt{8\log \frac{1}{\epsilon}-\fitparenth{\hat r-r^*}^2}},1\right\}
    \end{align*}

    Then for all $t$ such that $\sqrt{\alpha_t}\in [a,b]$, 
    \[e^{-\frac{1}{4}\fitparenth{f_0(\vect z_0,t)-\hat r }^s2-\frac{1}{4}\fitparenth{f_0(\vect z_0,t)-r^*}^2} \geq \epsilon\]
\end{lemma}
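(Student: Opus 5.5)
The plan is to reduce the claim to an elementary statement about a concave quadratic in the variable $s:=\sqrt{\alpha_t}$. By \Cref{prop:exact norm expr} (and \Cref{lemma: simple paths}), along the path we have
\[
f_0(\vect z_0,t) = -\tfrac{1}{2}\separation^2 s^2 + M(\vect z_0)\, s ,
\]
so the only $t$-dependence is through $s$. Since $t\mapsto\sqrt{\alpha_t}$ is monotone, and the constraints $a\ge \sqrt{\alpha_1}$ and $b\le 1$ keep $s$ inside the range $[\sqrt{\alpha_1},1]$ attained for $t\in[0,1]$, it suffices to show the inequality for every $s\in[a,b]$. Taking logarithms, the target inequality is equivalent to
\[
(f-\hat r)^2+(f-r^*)^2 \le 4\log\tfrac{1}{\epsilon}.
\]

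\textbf{Step 1 (symmetrize the two squares).} Write $m:=(\hat r+r^*)/2$ and use the parallelogram identity
\[
(f-\hat r)^2+(f-r^*)^2 = 2(f-m)^2 + \tfrac12(\hat r-r^*)^2 .
\]
The target is then equivalent to $|f-m|\le \rho$, where
\[
2\rho := \sqrt{8\log\tfrac1\epsilon-(\hat r-r^*)^2}.
\]
The hypothesis on $\epsilon$ gives
\[
8\log\tfrac1\epsilon \ge (\hat r-r^*)^2 + \bigl(\hat r+r^*-M^2/\separation^2\bigr)_+^2 .
\]
Hence $\rho$ is real, and moreover
\[
2\rho \ge \bigl(\hat r+r^*-M^2/\separation^2\bigr)_+ .
\]

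\textbf{Step 2 (complete the square in $s$).} Write
\[
f = -\tfrac{\separation^2}{2}\bigl(s-M/\separation^2\bigr)^2 + \tfrac{M^2}{2\separation^2}.
\]
Then the lower condition $f\ge m-\rho$ is equivalent to
\[
\bigl(s-M/\separation^2\bigr)^2 \le \separation^{-2}\Bigl(M^2/\separation^2-(\hat r+r^*)+2\rho\Bigr).
\]
By the last inequality of Step 1, the right-hand side is nonnegative. This condition holds whenever $s\le M/\separation^2+\separation^{-1}\sqrt{\cdot}$ and $s\ge M/\separation^2$, which is exactly what $s\le b$ (together with $s\ge a\ge M/\separation^2$) provides.

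The upper condition $f\le m+\rho$ is equivalent to
\[
\bigl(s-M/\separation^2\bigr)^2 \ge \separation^{-2}\Bigl(M^2/\separation^2-(\hat r+r^*)-2\rho\Bigr).
\]
If the right-hand side is negative, this holds trivially; this is why the positive part appears in $a$. Otherwise it holds on the right branch $s\ge M/\separation^2+\separation^{-1}\sqrt{(\cdot)_+}$, which is exactly $s\ge a$.

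\textbf{Step 3 (conclude).} Combining Steps 1 and 2, every $s\in[a,b]$ satisfies $|f-m|\le\rho$. Exponentiating gives the stated bound $\ge\epsilon$. If $a>b$, the statement is vacuous.

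The main obstacle is bookkeeping rather than a genuine difficulty. One must check that the branch $s\ge M/\separation^2$ is the one selected by both endpoints, so that the two one-sided quadratic conditions describe the single interval $[a,b]$. One must also verify that the $\epsilon$-hypothesis is precisely what keeps the radicand in $b$ nonnegative. The case $M(\vect z_0)<0$ needs no separate treatment, because $a$ is then forced up to at least $\sqrt{\alpha_1}>0>M/\separation^2$.
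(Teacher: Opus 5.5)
Your proof is correct and follows essentially the same route as the paper. Both arguments take logarithms, symmetrize the two squares around $(\hat r + r^*)/2$, and complete the square in $s=\sqrt{\alpha_t}$. The one difference is in locating the interval: the paper solves the resulting quartic for its four roots and uses a sign-change argument at those roots, whereas you check the two one-sided quadratic conditions directly on $[a,b]$. Your version is cleaner, and it does not need the paper's assumption that the roots are simple (which fails in degenerate cases such as $8\log\frac{1}{\epsilon} = (\hat r - r^*)^2$).
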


\begin{proof}
    First, the inequality is equivalent to 
    \begin{align*}
       \fitparenth{f_0(\vect z_0,t)-r^*}^2+\fitparenth{f_0(z_0,t)-r^*}^2\leq 4\log \frac{1}{\epsilon}
    \end{align*}

    Plugging in the definition of $f$, the inequality holds whenever 
    \[\fitbracket{\frac{\separation^2}{2} \fitparenth{\sqrt{\alpha_t}-\frac{M(\vect z_0)}{\separation^2}}^2 - \frac{M(\vect z_0)^2}{2\separation^2}+ \frac{\hat r+r^*}{2}}^2 \leq 2\log \frac{1}{\epsilon}-\frac{(\hat r-r^*)^2}{4} \]
    
    We wish to identify the values of $\sqrt{\alpha_t}$ which make this inequality hold. This function is continuous, so its sign can only change at the roots of the corresponding equality constraint:
    \[\fitbracket{\frac{\separation^2}{2} \fitparenth{\sqrt{\alpha_t}-\frac{M(\vect z_0)}{\separation^2}}^2 - \frac{M(\vect z_0)^2}{2\separation^2}+ \frac{\hat r+r^*}{2}}^2 = 2\log \frac{1}{\epsilon}-\frac{(\hat r -r^*)^2}{4}\]

    Solving this quartic gives the roots
    \[\sqrt{\alpha_t}=\frac{M(\vect z_0)}{\separation^2} \pm\sqrt{\frac{M(\vect z_0)^2}{\separation^4}-\frac{\hat r+r^*}{\separation^2}\pm \frac{1}{\separation^2}\sqrt{8\log \frac{1}{\epsilon}-(\hat r-r^*)^2}}\]
    
    The condition on $\epsilon$ is such that the square roots are all well-defined. Let $b'$ be the largest of these roots (both $\pm$'s are set to $+$). As $\sqrt{\alpha_t}\rightarrow \infty$, we would have $f_0(\vect z_0,t)\rightarrow -\infty$ (because $-\separation^2/2$ is strictly negative), and hence 
    \[e^{-\frac{1}{4}\fitparenth{f_0(\vect z_0,t)-\hat r }^2-\frac{1}{4}\fitparenth{f_0(\vect z_0,t)-r^*}^2} \rightarrow 0\]
    
    Therefore, for $\sqrt{\alpha_t}>b'$ the inequality does not hold. Note that the four roots above are distinct (with two being complex for $\epsilon$ sufficiently small), meaning that each root has multiplicity 1, and hence the polynomial changes signs at $b'$. Therefore, for $\sqrt{\alpha_t}$ between $b'$ and the next largest real root, the inequality holds. The next largest real root is
    \[a'=\begin{cases}
        \frac{M(\vect z_0)}{\separation^2} +\sqrt{\frac{M(\vect z_0)^2}{\separation^4}-\frac{\hat r+r^*}{\separation^2}-\frac{1}{\separation^2}\sqrt{8\log \frac{1}{\epsilon}-(\hat r-r^*)^2}} & \text{if}\quad \frac{M(\vect z_0)^2}{\separation^2}-(\hat r+r^*)-\sqrt{8\log \frac{1}{\epsilon}-(\hat r-r^*)^2} \geq0
        \\
        \frac{M(\vect z_0)}{\separation^2} -\sqrt{\frac{M(\vect z_0)^2}{\separation^4}-\frac{\hat r+r^*}{\separation^2}+\frac{1}{\separation^2}\sqrt{8\log \frac{1}{\epsilon}-(\hat r-r^*)^2}} & \text{otherwise}
    \end{cases}\]
    
    In the second case we have $a'\leq \frac{M(\vect z_0)}{\separation^2}$, so
    \[a'\leq \frac{M(\vect z_0)}{\separation^2} +\sqrt{\fitparenth{\frac{M(\vect z_0)^2}{\separation^4}-\frac{\hat r+r^*}{\separation^2}-\frac{1}{\separation^2}\sqrt{8\log \frac{1}{\epsilon}-(\hat r-r^*)^2}}_+}=:a'' \]
    
    Therefore, the inequality holds on $[a',b']$ and $[a'',b']\subseteq [a',b']$, so the inequality holds on $[a'',b']$. Note that for $t\in [0,1]$, $\alpha_t\in [\alpha_1, 1]$ where $\alpha_1 >0$ is small, so we let $b:= \min\set{b', 1}$ and $a:=\max\set{a,\sqrt{\alpha_1}}$.
\end{proof}

\begin{proposition}\label{prop: lower bound over t}
    For all $\epsilon$ satisfying the condition in  \Cref{lemma:bounds}, for all $t$ s.t. $\sqrt{\alpha_t}\in[a,b]$ (with $a$ and $b$ as defined in the previous lemma) we have
    \begin{align*}
        \norm{\score[t]{\hat \gamma}(\vect z_t) - \score[t]{\gamma^{*}}(\vect z_t)}^2\geq\frac{(\hat \gamma-\gamma^*)^2\norm{ \mu_1- \mu_0}^2 \alpha_t}{16 \hat \gamma (1-\hat \gamma) \gamma^* (1-\gamma^*)}\epsilon
    \end{align*}
    
\end{proposition}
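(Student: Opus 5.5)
The bound follows by chaining the three preceding results; no new estimate is needed. First I invoke \Cref{prop:exact norm expr}, which gives the exact identity
\[
\norm{\score[t]{\hat \gamma}(\vect z_t) - \score[t]{\gamma^{*}}(\vect z_t)}^2 = (\hat\gamma-\gamma^*)^2\,\norm{\mu_1-\mu_0}^2\,\alpha_t\, R(\vect z_0,t),
\]
where $R(\vect z_0,t)$ denotes the ratio $e^{2f_0}/\bigl((1-\hat\gamma+\hat\gamma e^{f_0})^2(1-\gamma^*+\gamma^* e^{f_0})^2\bigr)$ evaluated at $f_0=f_0(\vect z_0,t)$. The prefactor $(\hat\gamma-\gamma^*)^2\norm{\mu_1-\mu_0}^2\alpha_t$ is already exactly the one appearing in the claim. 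So it remains to bound $R$ from below by $\epsilon/\bigl(16\hat\gamma(1-\hat\gamma)\gamma^*(1-\gamma^*)\bigr)$.

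Second, I apply the pointwise lemma $e^x/(1-\gamma+\gamma e^x)^2\ge \frac{1}{4\gamma(1-\gamma)}e^{-\frac14(x-\log\frac{1-\gamma}{\gamma})^2}$ twice, both at $x=f_0(\vect z_0,t)$: once with $\gamma=\hat\gamma$ and once with $\gamma=\gamma^*$. Since all factors are positive, multiplying the two bounds gives the ``in particular'' statement of that lemma:
\[
R(\vect z_0,t)\ge \frac{1}{16\hat\gamma(1-\hat\gamma)\gamma^*(1-\gamma^*)}\exp\Bigl(-\tfrac14(f_0-\hat r)^2-\tfrac14(f_0-r^*)^2\Bigr),
\]
with $\hat r,r^*$ the log-odds from \Cref{lemma:bounds}.

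Third, I use \Cref{lemma:bounds}. For any admissible $\epsilon$ and any $t$ with $\sqrt{\alpha_t}\in[a,b]$, the exponential factor above is at least $\epsilon$. Substituting this into the previous display and then into the identity from step one yields the claimed inequality.

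The main point to check is that the three results apply on exactly the stated range. The factor $\alpha_t$ must be kept rather than bounded, since it appears in the conclusion. The lemma from step two requires $\hat\gamma,\gamma^*\in(0,1)$ so that the denominators are finite and nonzero; this holds in the setting of \Cref{thm: GMM Sensitivity Emerges} because $\hat\gamma\in[a,1-a]$ with $a>0$ there, and $\gamma^*$ is assumed interior. I also need that $[a,b]$ is the set of $t$ on which \Cref{lemma:bounds} delivers its bound. By construction $[a,b]$ is intersected with $[\sqrt{\alpha_1},1]$, so it lies inside the attainable range of $\sqrt{\alpha_t}$, and the lemma's conclusion holds there verbatim. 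Beyond this, no calculation remains.
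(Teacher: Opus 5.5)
Your proposal is correct and matches the paper's argument: the paper's proof combines the exact identity of \Cref{prop:exact norm expr} with the product form of the pointwise bound $e^x/(1-\gamma+\gamma e^x)^2\ge \frac{1}{4\gamma(1-\gamma)}e^{-\frac14(x-\log\frac{1-\gamma}{\gamma})^2}$, and then replaces the resulting exponential factor by $\epsilon$ using \Cref{lemma:bounds} for $\sqrt{\alpha_t}\in[a,b]$. The hypotheses you flag ($\hat\gamma,\gamma^*\in(0,1)$, so that the log-odds are finite, and $[a,b]\subseteq[\sqrt{\alpha_1},1]$) are the only ones involved, and you handle them correctly.
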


\begin{proof}
    Combining the above two lemmas,
    \begin{align*}
        \norm{\score[t]{\hat \gamma}(\vect z_t) - \score[t]{\gamma^{*}}(\vect z_t)}^2 &\geq\frac{(\hat \gamma-\gamma^*)^2\norm{ \mu_1- \mu_0}^2 \alpha_t}{16 \hat \gamma (1-\hat \gamma) \gamma^* (1-\gamma^*)}e^{-\frac{1}{4}\fitparenth{f_0(\vect z_0,t)-\hat r}^2-\frac{1}{4}\fitparenth{f_0(\vect z_0,t)-r^*}^2}
        \\
        &\geq\frac{(\hat \gamma-\gamma^*)^2\norm{ \mu_1- \mu_0}^2 \alpha_t}{16 \hat \gamma (1-\hat \gamma) \gamma^* (1-\gamma^*)}\epsilon
    \end{align*}
\end{proof}

We now turn our attention to lower bounding 
\[\int_0^1\norm{\score[t]{\hat \gamma}(\vect z_t) - \score[t]{\gamma^{*}}(\vect z_t)}^2\mathrm{d}t\]
along a given path starting from $\vect {z _ 0} $. From the lemmas above, we have
\[\int_0^1  \norm{\nabla \log \pdist[t]{\hat \gamma}(\vect z_t) - \nabla \log \pdist[t]{\gamma^{*}}(\vect z_t)}^2\mathrm{d}t > \frac{({\hat \gamma}-\gamma^*)^2\norm{ \mu_1- \mu_0}^2}{16{\hat \gamma}(1-{\hat \gamma}) \gamma^* (1-\gamma^*)}\epsilon\int_{t_{b^2}}^{t_{a^2}}\alpha_t\mathrm{d}t\] 
where $t_{a^2}$ and $t_{b^2}$ are such that $\alpha_{t_{a^2}}={a^2}$ and $\alpha_{t_{b^2}}={b^2}$. Note $\alpha_t$ is a decreasing function of $t$, hence why the bounds of the integral are reversed.  If we let $\alpha^{-1}(\tau)$ be the inverse function of $\alpha_t$, such that $\alpha_{\alpha^{-1}(\tau)} = \tau$, then
\[\int_{t_{b^2}}^{t_{a^2}}\alpha_t\mathrm{d}t=2\int_{a}^b \frac{\tau}{\beta_{\alpha^{-1}(\tau^2)}}\mathrm{d}\tau\]

In particular, if we consider the linear noise schedule 
\[\beta_t:=\beta_1 t\]
then
\[\int_{t_{b^2}}^{t_{a^2}}\alpha_t\mathrm{d}t=\frac{1}{\sqrt{\beta_1}}\int_{a}^b \frac{\tau}{\sqrt{-\ln \tau}}\mathrm{d}\tau\]

\begin{proposition}\label{prop: Lower bound integral}
    Let $a$, $b$, $\separation$, $z_0$, $\hat r$, $r^*$, and $M$, be as above, and assume $\epsilon$ satisfies the bound from \Cref{lemma:bounds}. Define 
    \[R_0:=\sqrt{8\ln \frac{1}{\epsilon}-(\hat r-r^*)^2}-(\hat r+r^*)\]
    
    Then, when $\frac{M(\vect z_0)}{\separation^2}\geq \sqrt{\alpha_1}$ and
    \[\epsilon \geq \exp\fitparenth{-\frac{1}{8}\fitbracket{(\separation^2 - 2M(\vect z_0)+(\hat r+r^*))^2 + (\hat r-r^*)^2}}\]
    then
    \begin{align*}
        \int_{a}^b \frac{\tau}{\sqrt{-\ln \tau}}\mathrm{d}\tau&\geq \frac{\frac{M(\vect z_0)}{\separation}\sqrt{\frac{M(\vect z_0)^2}{\separation^2} + R_0}+\frac{M(\vect z_0)^2}{2\separation^2} + \frac{1}{2}R_0}{\separation^2\sqrt{\ln(\separation)- \ln\fitparenth{\frac{M(\vect z_0)}{\separation}+ \frac{1}{2}\sqrt{\frac{M(\vect z_0)^2}{\separation^2} + R_0}}}}
    \end{align*}
\end{proposition}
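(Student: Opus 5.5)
The plan is to shrink the interval of integration to one on which $\int\tau\,\mathrm d\tau$ is explicit, and then use convexity of the integrand. Write $m:=M(\vect z_0)/\separation$ and $S:=\sqrt{8\ln\frac1\epsilon-(\hat r-r^*)^2}$, so that $R_0=S-(\hat r+r^*)$. Set
\[
\tau_-:=\frac{m}{\separation},\qquad \tau_+:=\frac{m}{\separation}+\frac{1}{\separation}\sqrt{m^2+R_0},\qquad \tau_{\mathrm{mid}}:=\tfrac12(\tau_-+\tau_+)=\frac{m+\frac12\sqrt{m^2+R_0}}{\separation}.
\]
Then $\tau_+$ is exactly the largest root $b'$ from \Cref{lemma:bounds}. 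The target inequality is precisely
\[
\int_{\tau_-}^{\tau_+}\frac{\tau}{\sqrt{-\ln\tau}}\,\mathrm d\tau\;\ge\;\frac{\tfrac12(\tau_+^2-\tau_-^2)}{\sqrt{-\ln\tau_{\mathrm{mid}}}}.
\]
To see this, note that $\tfrac12(\tau_+^2-\tau_-^2)=\big(m\sqrt{m^2+R_0}+\tfrac12 m^2+\tfrac12R_0\big)/\separation^2$, which is the stated numerator, and that $-\ln\tau_{\mathrm{mid}}=\ln\separation-\ln\big(m+\tfrac12\sqrt{m^2+R_0}\big)$. It therefore suffices to show two things: $[\tau_-,\tau_+]\subseteq[a,b]$ (the integrand is positive, so shrinking the interval only lowers the integral), and the displayed inequality itself.

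\textbf{Interval containment.} First I show $b=\tau_+$, i.e.\ $\tau_+\le 1$. This is equivalent to $\sqrt{m^2+R_0}\le \separation-m$, i.e.\ $R_0\le \separation^2-2M(\vect z_0)$, i.e.\ $S\le \separation^2-2M(\vect z_0)+\hat r+r^*$. Squaring, this is exactly the assumed lower bound on $\epsilon$, namely $8\ln\frac1\epsilon\le(\separation^2-2M+\hat r+r^*)^2+(\hat r-r^*)^2$. I must also check that the right-hand side $\separation^2-2M+\hat r+r^*$ is nonnegative so that squaring is legitimate; I will record this as part of the case analysis.

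The hypothesis $M(\vect z_0)/\separation^2\ge\sqrt{\alpha_1}$ gives $\tau_-\ge\sqrt{\alpha_1}$, so the $\sqrt{\alpha_1}$ clip in $a$ never bites.

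For the lower endpoint I go back into the proof of \Cref{lemma:bounds} rather than using the relaxed endpoint $a''$. There, the inequality was shown to hold on $[a',b']$. In the case $m^2-(\hat r+r^*)-S\le 0$, the lemma gives $a'\le m/\separation=\tau_-$, so $[\tau_-,\tau_+]\subseteq[a',b']$ and the pointwise bound of \Cref{prop: lower bound over t} holds on all of $[\tau_-,\tau_+]$.

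\textbf{Convexity step.} Put $h(\tau)=\tau(-\ln\tau)^{-1/2}$ on $(0,1)$. Writing $u=-\ln\tau$, a direct computation gives
\[
h'(\tau)=u^{-1/2}+\tfrac12u^{-3/2},\qquad h''(\tau)=\frac1\tau\Big(\tfrac12u^{-3/2}+\tfrac34u^{-5/2}\Big)>0,
\]
so $h$ is convex on $(0,1)$. By Hermite--Hadamard (Jensen applied to the uniform measure on the interval),
\[
\int_{\tau_-}^{\tau_+}h\;\ge\;(\tau_+-\tau_-)\,h(\tau_{\mathrm{mid}})=\frac{(\tau_+-\tau_-)\,\tau_{\mathrm{mid}}}{\sqrt{-\ln\tau_{\mathrm{mid}}}}=\frac{\tfrac12(\tau_+^2-\tau_-^2)}{\sqrt{-\ln\tau_{\mathrm{mid}}}},
\]
which is the claimed bound.

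\textbf{Main obstacle.} The difficult point is the lower endpoint in the complementary case $m^2>\hat r+r^*+S$. There the set where the bound of \Cref{lemma:bounds} holds is $[a',b']$ with $a'>\tau_-$, and the window around the vertex $m/\separation$ is excluded. I would first try to show that the assumptions, combined with the lower bound on $\epsilon$, rule this case out. If they do not, I would instead run the same Hermite--Hadamard argument on $[a',\tau_+]$ and compare the result with the stated right-hand side, or add the case restriction explicitly as a hypothesis. Everything else in the plan is routine algebra.
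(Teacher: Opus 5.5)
Your argument is the paper's: convexity of $\tau/\sqrt{-\ln\tau}$, the midpoint (Hermite--Hadamard) bound, and the identifications $b-a=\sqrt{m^2+R_0}/\separation$ and $(a+b)/2=\tau_{\mathrm{mid}}$. The first entry of the max defining $a$ is $\tau_-$ plus a nonnegative term, so $[\tau_-,\tau_+]\subseteq[a,b]$ is not a shrinking. It holds exactly when $a=\tau_-$ and $b=\tau_+$. Your detour through $[a',b']$ concerns the downstream pointwise bound, not the stated inequality.

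The case you flag, $m^2>\hat r+r^*+S$, is exactly where the paper's own proof is silently wrong. The paper writes $b-a=\frac{1}{\separation}\sqrt{M(\vect z_0)^2/\separation^2+R_0}$, which presupposes that the positive part in the definition of $a$ vanishes, and no hypothesis guarantees this. Your first two fallbacks cannot succeed, because the inequality is false there. Here is a counterexample:
\begin{itemize}
\item Take $\separation=100$, $M(\vect z_0)=300$ (so $m=3$), $\hat r=-r^*$, $\epsilon$ chosen so that $S=1$, and $\beta_1=20$ (so $\sqrt{\alpha_1}=e^{-5}<0.03$).
\item All hypotheses hold (both conditions on $\epsilon$ and $M(\vect z_0)/\separation^2\ge\sqrt{\alpha_1}$), and $R_0=1$.
\item Then $a=0.03+\sqrt{8}/100\approx0.0583$ and $b=0.03+\sqrt{10}/100\approx0.0616$.
\item The integrand is increasing, so the left side is at most $(b-a)\,b/\sqrt{-\ln b}\approx1.2\times10^{-4}$.
\item The right side equals $(3\sqrt{10}+5)/\bigl(10^4\sqrt{\ln 100-\ln(3+\sqrt{10}/2)}\bigr)\approx8.25\times10^{-4}$, which is larger.
\end{itemize}

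So only your third option works. The proposition needs the extra hypothesis $M(\vect z_0)^2/\separation^2\le\hat r+r^*+\sqrt{8\ln(1/\epsilon)-(\hat r-r^*)^2}$, equivalently $a=M(\vect z_0)/\separation^2$. Under it your proof is complete.

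The sign check you left pending is then automatic. Suppose $\separation^2-2M(\vect z_0)+\hat r+r^*<0$. The lower bound on $\epsilon$ gives $S\le 2M(\vect z_0)-\separation^2-(\hat r+r^*)$. The added hypothesis gives $S\ge m^2-(\hat r+r^*)$. Together these force $(\separation-m)^2\le0$, i.e.\ the degenerate case $\tau_-=1$, where the right-hand side is undefined.

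Be aware that the added hypothesis caps $M(\vect z_0)$ from above. The later step that takes all $K=M(\vect z_0)/\separation\ge\tau$ and minimizes at $K=\tau$ would therefore also have to be restricted.
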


\begin{remark}
    This proposition requires $\lambda_t=1$, as the analysis is far more challenging when the integrand $g$ is nonconvex. If one wishes to analyze alternative weighting factors or, indeed, an alternative noise schedule, all the analysis up to this point applies. The difference is that one now needs to lower bound the more challenging expression
    $\int_{t_{b^2}}^{t_{a^2}}\alpha_t\lambda_t\mathrm{d}t$. For the linear noise schedule and $\lambda_t=1-\alpha_t$, this take the form $\frac{1}{\sqrt{\beta_1}}\int_{a}^b \frac{\tau(1-\tau^2)}{\sqrt{-\ln \tau}}\mathrm{d}\tau$.
\end{remark}

\begin{proof}
    Let \[g(t):= \frac{\tau}{\sqrt{-\ln \tau}}\]

    Note that 
    \begin{align*}
        g''(t)&= \frac{1}{2t(-\ln t)^{3/2}} + \frac{3}{4t(-\ln t)^{5/2}} >0
    \end{align*}

    So, $g(t)$ is convex, meaning 
    \[\int_a^b g(\tau)\mathrm{d}\tau \geq (b-a)g\fitparenth{\frac{a+b}{2}}\]

    The condition that $\frac{M(\vect z_0)}{\separation^2}\geq \sqrt{\alpha_1}$ guarantees $a\geq\sqrt{\alpha_1}$. Similarly, the condition on $\epsilon$ guarantees $b\leq 1$, so
    \begin{align*}
        b-a&=\frac{1}{\separation}\sqrt{\frac{M(\vect z_0)^2}{\separation^2} - \fitparenth{ \hat r + r^*}+\sqrt{8\log \frac{1}{\epsilon}-\fitparenth{\hat r-r^*}^2}}
        \\
        &= \frac{1}{\separation}\sqrt{\frac{M(\vect z_0)^2}{\separation^2} + R_0}
    \end{align*}

    We have 
    \begin{align*}
        \frac{a+b}{2}&= \frac{M(\vect z_0)}{\separation^2}+\frac{1}{2\separation}\sqrt{\frac{M(\vect z_0)^2}{\separation^2} + R_0}
    \end{align*}
    and so
    \begin{align*}
        \Longrightarrow g\fitparenth{\frac{a+b}{2}}&= \frac{\frac{M(\vect z_0)}{\separation}+\frac{1}{2}\sqrt{\frac{M(\vect z_0)^2}{\separation^2} + R_0}}{\separation\sqrt{\ln(\separation)- \ln\fitparenth{\frac{M(\vect z_0)}{\separation}+ \frac{1}{2}\sqrt{\frac{M(\vect z_0)^2}{\separation^2} + R_0}}}}
    \end{align*}

    Combining these, 
    \begin{align*}
        \int_a^b g(\tau)\mathrm{d}\tau &\geq \frac{\frac{M(\vect z_0)}{\separation}\sqrt{\frac{M(\vect z_0)^2}{\separation^2} + R_0}+\frac{M(\vect z_0)^2}{2\separation^2} + \frac{1}{2}R_0}{\separation^2\sqrt{\ln(\separation)- \ln\fitparenth{\frac{M(\vect z_0)}{\separation}+ \frac{1}{2}\sqrt{\frac{M(\vect z_0)^2}{\separation^2} + R_0}}}}
    \end{align*}
\end{proof}

Combining this lower bound with what we showed before, we have 
\begin{align*}
    &\int_0^1\norm{\nabla \log \pdist[t]{\hat \gamma}(\vect z_t) - \nabla \log \pdist[t]{\gamma^{*}}(\vect z_t)}^2 \mathrm{d}t
    \\
    &\quad\quad\geq\frac{(\hat \gamma-\gamma^*)^2 \epsilon}{16\sqrt{\beta_1}\;\hat  \gamma (1-\hat \gamma) \gamma^* (1-\gamma^*)} \frac{\frac{M(\vect z_0)}{\separation}\sqrt{\frac{M(\vect z_0)^2}{\separation^2} + R_0}+\frac{M(\vect z_0)^2}{2\separation^2} + \frac{1}{2}R_0}{\sqrt{\ln(\separation)- \ln\fitparenth{\frac{M(\vect z_0)}{\separation}+ \frac{1}{2}\sqrt{\frac{M(\vect z_0)^2}{\separation^2} + R_0}}}}
\end{align*}

Recall that $\vect z_0\sim \mathcal{N}(\mu_0, \vect I)$, so 
\begin{align*}
    M(\vect z_0) &= \inprod{\vect z_0 -\mu_0, \mu_1-\mu_0} \sim \mathcal{N}(0, \norm{\mu_1-\mu_0}^2)
    \\
    \Longrightarrow \frac{M(\vect z_0)}{\separation}&\sim \mathcal{N}(0,1)
\end{align*}

\begin{proposition}\label{prop:c0_lowerbound}
    Let $\Phi$ be the CDF of $\mathcal{N}(0,1)$. Let $\tau\geq \separation \sqrt{\alpha_1}$, and let $\epsilon>0$ satisfy $ \epsilon \in (\epsilon_{\mathrm{min}}^0, \epsilon_{\mathrm{max}}^0)$ with 
    \begin{align*}
    \epsilon_{\mathrm{max}} ^0& =\exp\fitparenth{-\frac{1}{8}\fitbracket{(\hat r-r^*)^2 + \fitparenth{\hat r+r^*-\tau^2}_+^2}}
    \\
    \epsilon_{\mathrm{min}}^0 &= \exp\fitparenth{-\frac{1}{8}\fitbracket{(\separation(\separation - 2\tau)+(\hat r+r^*))^2 + (\hat r-r^*)^2}}
    \end{align*}
    
    Then, with probability $1-\Phi(\tau)$ over samples of $\vect z_0$, 
    \begin{align*}
        &\int_0^1\norm{\nabla \log \pdist[t]{\hat \gamma}(\vect z_t) - \nabla \log \pdist[t]{\gamma^{*}}(\vect z_t)}^2 \mathrm{d}t
        \\
        &\quad\quad\geq\frac{(\hat \gamma-\gamma^*)^2 \epsilon}{16\sqrt{\beta_1}\; \hat \gamma (1-\hat \gamma) \gamma^* (1-\gamma^*)} 
        \frac{\tau\sqrt{\tau^2 + R_0}+\frac{1}{2}\tau^2 + \frac{1}{2}R_0}{\sqrt{\ln(\separation)- \ln\fitparenth{\tau+ \frac{1}{2}\sqrt{\tau^2 + R_0}}}}
    \end{align*}
\end{proposition}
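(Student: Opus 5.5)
The plan is to work on the event $E_\tau:=\{M(\vect z_0)/\separation \geq \tau\}$. Since $M(\vect z_0)/\separation\sim\mathcal N(0,1)$, this event has probability exactly $1-\Phi(\tau)$. On $E_\tau$ I will chain together the pointwise bound of \Cref{prop: lower bound over t}, the change of variables to $\tau$-space for the linear schedule, and the integral bound of \Cref{prop: Lower bound integral}. At the end I replace $m:=M(\vect z_0)/\separation$ by its lower bound $\tau$ using monotonicity.

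\textbf{Step 1 (check the hypotheses on $E_\tau$).} I will show that each hypothesis, stated in terms of the realized $m$, follows from the corresponding hypothesis stated in terms of $\tau$.
\begin{itemize}
\item The requirement $M(\vect z_0)/\separation^2 \ge \sqrt{\alpha_1}$ reads $m\ge \separation\sqrt{\alpha_1}$, which follows from $m\ge\tau\ge\separation\sqrt{\alpha_1}$.
\item The upper constraint on $\epsilon$ from \Cref{lemma:bounds} involves $(\hat r+r^*-m^2)_+$. This quantity is nonincreasing in $m\ge 0$. Hence $\epsilon<\epsilon^0_{\max}$, which uses $\tau^2$, implies the condition for the realized $m$, so all square roots in $a,b$ are real.
\item The lower constraint $\epsilon\ge\exp(-\tfrac18[(\separation^2-2M+\hat r+r^*)^2+(\hat r-r^*)^2])$ in \Cref{prop: Lower bound integral} serves only to guarantee $b\le 1$, i.e.\ that the cap at $1$ is inactive. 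With $M=m\separation$ its first term is $(\separation(\separation-2m)+\hat r+r^*)^2$, which is not monotone in $m$.
\end{itemize}
This last point is the step I expect to be the main obstacle: for large $m$ the unconstrained root $b'$ can exceed $1$.

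To handle it, I would avoid applying \Cref{prop: Lower bound integral} at the realized $m$. Instead I would pass directly to the interval at level $\tau$. On $E_\tau$ the set of $\sqrt{\alpha_t}$ where $f_0$ stays within the required window (a quadratic sublevel set in $\sqrt{\alpha_t}$) contains a subinterval of length at least $\separation^{-1}\sqrt{\tau^2+R_0}$, positioned at or to the right of the $\tau$-interval. The condition $\epsilon>\epsilon^0_{\min}$, which is phrased with $\tau$, is exactly what keeps the $\tau$-interval inside $[\sqrt{\alpha_1},1]$. If the realized interval gets truncated at $1$, I would instead translate it left into $[\sqrt{\alpha_1},1]$ while keeping it inside the sublevel set. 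If that turns out to be delicate, I would simply restrict the statement to the event $\tau\le m\le m_{\max}$ and note that the probability loss can be absorbed into $\Phi$.

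\textbf{Step 2 (monotonicity and assembly).} The lower bound produced by \Cref{prop: Lower bound integral} has the form $(b-a)\,g\!\left(\tfrac{a+b}{2}\right)$ with $g(\tau)=\tau/\sqrt{-\ln\tau}$ increasing on $(0,1)$. Both $b-a=\separation^{-1}\sqrt{m^2+R_0}$ and the midpoint are increasing in $m$. Hence the numerator $m\sqrt{m^2+R_0}+\tfrac12 m^2+\tfrac12R_0$ increases and the denominator $\sqrt{\ln\separation-\ln(m+\tfrac12\sqrt{m^2+R_0})}$ decreases, so evaluating at $m=\tau$ gives a valid lower bound.

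It then remains to assemble the factors:
\begin{itemize}
\item the pointwise factor $(\hat\gamma-\gamma^*)^2\separation^2\alpha_t\,\epsilon/(16\hat\gamma(1-\hat\gamma)\gamma^*(1-\gamma^*))$ from \Cref{prop: lower bound over t};
\item the identity $\int_{t_{b^2}}^{t_{a^2}}\alpha_t\,dt=\beta_1^{-1/2}\int_a^b \tau(-\ln\tau)^{-1/2}\,d\tau$ for the linear schedule;
\item the $1/\separation^2$ in the bound of \Cref{prop: Lower bound integral}, which cancels $\separation^2$.
\end{itemize}
Finally, the integral over $[0,1]$ dominates the integral over the subinterval because the integrand is nonnegative.
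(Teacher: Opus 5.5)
Your Steps 1--2 follow the paper's own proof. Both restrict to $\{M(\vect z_0)/\separation\ge\tau\}$, use that the upper constraint on $\epsilon$ is monotone in $m$, and evaluate the bound of \Cref{prop: Lower bound integral} at $m=\tau$ by monotonicity. The paper simply asserts that both $\epsilon$-constraints are ``tightest'' at $m=\tau$. You are right that the lower one, which encodes $b'\le 1$, is not monotone. In fact $b'=(m+\sqrt{m^2+R_0})/\separation$ is increasing in $m$, so $b'\le 1$ fails for all large $m$ whatever $\epsilon$ is. Even at $m=\tau$, $\epsilon>\epsilon^0_{\min}$ gives $b'\le1$ only together with $\separation(\separation-2\tau)+\hat r+r^*\ge 0$, because squaring loses the sign. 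So you have located a real hole that the paper glosses over.

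Your repair does not close it, and no argument on all of $E_\tau$ can. Set $Q:=\sqrt{8\ln(1/\epsilon)-(\hat r-r^*)^2}$. The admissible set is $\{x=\sqrt{\alpha_t}:\ |f_0-\tfrac12(\hat r+r^*)|\le\tfrac12 Q\}$. Once $m^2+R_0>2Q$, this set splits into two pieces, each of width $\separation^{-1}\bigl(\sqrt{m^2+R_0}-\sqrt{m^2+R_0-2Q}\bigr)\to 0$. So there is no subinterval of length $\separation^{-1}\sqrt{\tau^2+R_0}$ to ``translate left''. The same computation shows that the formula $b-a=\separation^{-1}\sqrt{m^2+R_0}$ used in your Step 2 silently drops the $(\cdot)_+$ term of $a$ in \Cref{lemma:bounds}. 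Hence the claimed monotonicity in $m$ also fails past $m^2+R_0=2Q$.

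More decisively, $f_0(\vect z_0,t)\ge m\separation\sqrt{\alpha_1}-\tfrac12\separation^2$ for every $t\in[0,1]$. Combined with $1-\gamma+\gamma e^{f}\ge\gamma e^{f}$, \Cref{prop:exact norm expr} gives
\[
\int_0^1\bigl\|\nabla\log\pdist[t]{\hat\gamma}(\vect z_t)-\nabla\log\pdist[t]{\gamma^*}(\vect z_t)\bigr\|^2\,\mathrm{d}t\;\le\;\frac{(\hat\gamma-\gamma^*)^2\separation^2}{(\hat\gamma\gamma^*)^2}\,e^{\separation^2-2m\separation\sqrt{\alpha_1}}\;\longrightarrow\;0\qquad(m\to\infty).
\]
The right-hand side of the proposition is a positive number independent of $m$. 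So the inequality genuinely fails on part of $E_\tau$, and neither your argument nor the paper's can establish it on the whole event.

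Your fallback is the right kind of fix. Restrict to $\{\tau\le m\le m_{\max}\}$, with $m_{\max}+\sqrt{m_{\max}^2+R_0}\le\separation$ and $m_{\max}^2+R_0\le 2Q$, so that $b\le 1$ and the length formula both hold. But this proves the bound only with probability $\Phi(m_{\max})-\Phi(\tau)$, not $1-\Phi(\tau)$. That loss cannot be ``absorbed into $\Phi$'' without changing the statement.
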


\begin{proof}
If we let $K:= \frac{M(\vect z_0)}{\separation}$ our bound becomes 
\begin{align*}
    &\int_0^1\norm{\nabla \log \pdist[t]{\hat \gamma}(\vect z_t) - \nabla \log \pdist[t]{\gamma^{*}}(\vect z_t)}^2 \mathrm{d}t
    \\
    &\quad\quad\geq\frac{(\hat\gamma-\gamma^*)^2 \epsilon}{16 \hat\gamma (1-\hat\gamma) \gamma^* (1-\gamma^*)} 
    \frac{K\sqrt{K^2 + R_0}+\frac{1}{2}K^2 + \frac{1}{2}R_0}{\sqrt{\ln(\separation)- \ln\fitparenth{K+ \frac{1}{2}\sqrt{K^2 + R_0}}}}
\end{align*}

Assume that $K\geq \tau$, which occurs with probability $1-\Phi(\tau)$.
Our bound is an increasing function in $K$ and is thus minimized at $K=\tau$. Similarly, our upper bound from before on $\epsilon$ is a decreasing function of $K$ and our lower bound from before is an increasing function of $\epsilon$, so our bounds on $\epsilon$ are tightest when we plug in $\tau$, giving $\epsilon_{\mathrm{max}}^0$ and $\epsilon_{\mathrm{min}}^0$ from the statement of the proposition.
\end{proof}

We now state the corresponding proposition for the case where $c=1$. Let $\vect w_t := \vect y_t |c=1$.
\begin{proposition}\label{prop:c1_lowerbound}
    Let $\tau\geq \separation \sqrt{\alpha_1}$, and let $\epsilon>0$ satisfy $ \epsilon \in (\epsilon_{\mathrm{min}}^1, \epsilon_{\mathrm{max}}^1)$ with 
    \begin{align*}
    \epsilon_{\mathrm{max}} ^1& =\exp\fitparenth{-\frac{1}{8}\fitbracket{(\hat r-r^*)^2 + \fitparenth{-(\hat r+r^*)-\tau^2}_+^2}}
    \\
    \epsilon_{\mathrm{min}}^1 &= \exp\fitparenth{-\frac{1}{8}\fitbracket{(\separation(\separation - 2\tau)+(\hat r+r^*))^2 - (\hat r-r^*)^2}}
    \end{align*}
    
    Then, with probability $1-\Phi(\tau)$ over samples of $\vect w_0$, 
    \begin{align*}
        &\int_0^1\norm{\nabla \log \pdist[t]{\hat \gamma}(\vect w_t) - \nabla \log \pdist[t]{\gamma^{*}}(\vect w_t)}^2 \mathrm{d}t
        \\
        &\quad\quad\geq\frac{(\hat \gamma-\gamma^*)^2 \epsilon}{16\sqrt{\beta_1}\; \hat \gamma (1-\hat \gamma) \gamma^* (1-\gamma^*)} 
        \frac{\tau\sqrt{\tau^2 + R_1}+\frac{1}{2}\tau^2 + \frac{1}{2}R_1}{\sqrt{\ln(\separation)- \ln\fitparenth{\tau+ \frac{1}{2}\sqrt{\tau^2 + R_1}}}}
    \end{align*}
    where
    \[R_1:=\sqrt{8\ln \frac{1}{\epsilon}-(\hat r-r^*)^2}+(\hat r+r^*)\]
\end{proposition}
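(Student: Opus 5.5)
The plan is to reduce the $c=1$ case to the already-proved $c=0$ case (\Cref{prop:c0_lowerbound}) by swapping the roles of the two components, rather than redoing the path computations. Under the relabeling $\mu_0\leftrightarrow\mu_1$ and $\gamma\mapsto 1-\gamma$, the family $\pdist{\gamma}$ is mapped to itself. The score differences are unchanged, since they are intrinsic to the pair of densities $\pdist[t]{\hat\gamma}$, $\pdist[t]{\gamma^*}$. The log-odds transform as $\hat r\mapsto -\hat r$ and $r^*\mapsto -r^*$, so $\hat r-r^*\mapsto -(\hat r-r^*)$ and $\hat r+r^*\mapsto-(\hat r+r^*)$. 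The prefactor $\hat\gamma(1-\hat\gamma)\gamma^*(1-\gamma^*)$ and $(\hat\gamma-\gamma^*)^2$ are invariant.

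First, I would verify directly that the analogues of \Cref{lemma: simple paths} and \Cref{prop:exact norm expr} hold for $\vect w_t$. By \Cref{lemma: simple paths}, $\vect w_t-\sqrt{\alpha_t}\mu_1=\vect w_0-\mu_1$. Writing the score as $\score[t]{1}$ plus a correction in the direction $\sqrt{\alpha_t}(\mu_0-\mu_1)$ with weight $(1-\gamma)e^{f_1}/(\gamma+(1-\gamma)e^{f_1})$ gives the same exact expression as before, now with
\[
f_1(\vect w_0,t)=-\tfrac12\separation^2\alpha_t+\inprod{\vect w_0-\mu_1,\mu_0-\mu_1}\sqrt{\alpha_t}
\]
and with $\gamma$ replaced by $1-\gamma$ throughout.

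Next, I would apply the Gaussian-type lower bound lemma with the critical points $-\hat r$ and $-r^*$. I would then repeat \Cref{lemma:bounds} with $M_1(\vect w_0):=\inprod{\vect w_0-\mu_1,\mu_0-\mu_1}$ and $\hat r+r^*$ replaced by $-(\hat r+r^*)$. This turns $R_0$ into $R_1=\sqrt{8\ln(1/\epsilon)-(\hat r-r^*)^2}+(\hat r+r^*)$ and the $(\cdot)_+$ term in $\epsilon_{\max}$ into $(-(\hat r+r^*)-\tau^2)_+$. \Cref{prop: Lower bound integral} goes through verbatim, because it only uses convexity of $\tau/\sqrt{-\ln\tau}$ and the endpoints $a,b$. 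Finally, $M_1(\vect w_0)/\separation\sim\mathcal N(0,1)$ because $\vect w_0\sim\mathcal N(\mu_1,\vect I)$. Conditioning on $K\ge\tau$, which has probability $1-\Phi(\tau)$, and using monotonicity in $K$ as in \Cref{prop:c0_lowerbound} gives the claimed bound.

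The main obstacle is the bookkeeping of the $\epsilon$ window. I must check that the upper condition ensures the square roots in $a,b$ are real, and that the lower condition $\epsilon>\epsilon^1_{\min}$ ensures $b\le1$. After the sign flip, the $b\le 1$ requirement reads $(\separation^2-2M_1-(\hat r+r^*))^2+(\hat r-r^*)^2$ inside the exponent. I would derive this carefully from $b\le 1$ rather than copy it, since the stated $\epsilon_{\min}^1$ (with a minus sign on $(\hat r-r^*)^2$ and $+(\hat r+r^*)$) may need correction. Only the resulting bound uses the condition, and it does so monotonically, so any consistent version of the window suffices for the conclusion.
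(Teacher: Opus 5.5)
Your proposal is essentially the paper's proof. The paper also handles $c=1$ by rerunning the $c=0$ analysis with $\hat\gamma,\gamma^*$ replaced by $1-\hat\gamma,1-\gamma^*$, i.e.\ $\hat r,r^*\mapsto-\hat r,-r^*$, which turns $R_0$ into $R_1$ and $\epsilon^0_{\max}$ into $\epsilon^1_{\max}$; this amounts to applying \Cref{prop:c0_lowerbound} to the relabeled pair $\mu_0\leftrightarrow\mu_1$. Your suspicion about the lower end of the window is also right: the symmetry sends $\epsilon^0_{\min}$ to $\exp\bigl(-\tfrac18\bigl[(\separation(\separation-2\tau)-(\hat r+r^*))^2+(\hat r-r^*)^2\bigr]\bigr)$, which is the form the paper itself uses when it later writes $\epsilon_{\min}=\max\{\epsilon^0_{\min},\epsilon^1_{\min}\}$, so the displayed $\epsilon^1_{\min}$ contains two sign typos.
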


\begin{proof}
    This follows from repeating all of our analysis above for the $c=0$ case with $\gamma^*$ and $\hat \gamma$ replaced with $1-\gamma^*$ and $1-\hat \gamma$, respectively. 
\end{proof}

Combining \cref{prop:c0_lowerbound} and \cref{prop:c1_lowerbound}, we have by our decomposition of $\ldsm$ from the start of this section that 
\[\ldsm (\pdist{\hat \gamma},\pdist{\gamma^*})\geq C (\hat\gamma - \gamma^*)^2\]
where
\[C=\frac{ \epsilon ( 1-\Phi(\tau))}{16\sqrt{\beta_1}\; \hat\gamma (1-\hat\gamma) \gamma^* (1-\gamma^*)}\left[(1-\gamma^*)\frac{\tau\sqrt{\tau^2 + R_0}+\frac{1}{2}\tau^2 + \frac{1}{2}R_0}{\sqrt{\ln(\separation)- \ln\fitparenth{\tau+ \frac{1}{2}\sqrt{\tau^2 + R_0}}}}+\gamma^* \frac{\tau\sqrt{\tau^2 + R_1}+\frac{1}{2}\tau^2 + \frac{1}{2}R_1}{\sqrt{\ln(\separation)- \ln\fitparenth{\tau+ \frac{1}{2}\sqrt{\tau^2 + R_1}}}}\right]\]
for all $\tau\geq \separation\sqrt{\alpha_1}$ and $\epsilon\in [ \epsilon_{\mathrm{min}}, \epsilon_{\mathrm{\max}}]$, where
\begin{align*}
\epsilon_{\mathrm{max}}&=\min\set{\epsilon_{\mathrm{max}}^0,\epsilon_{\mathrm{max}}^1}=\exp\fitparenth{-\frac{1}{8}\fitbracket{(\hat r-r^*)^2 + \max\set{\fitparenth{\hat r+r^*-\tau^2}_+^2,\fitparenth{-(\hat r+r^*)-\tau^2}_+^2}}}
\\
\epsilon_{\mathrm{min}}&=\max\set{\epsilon_{\mathrm{min}}^0,\epsilon_{\mathrm{min}}^1}=\exp\fitparenth{-\frac{1}{8}\fitbracket{\min\set{(\separation(\separation - 2\tau)+(\hat r+r^*))^2,(\separation(\separation - 2\tau)-(\hat r+r^*))^2} + (\hat r-r^*)^2}}
\end{align*}

To simplify this expression, note that if 
\[g(x):=\frac{x(\tau+\frac{1}{2}x)}{\sqrt{\ln(\separation) - \ln(\tau + \frac{1}{2}x)}}\]
then direct calculation shows that if $h(x):=\ln(\separation) - \ln(\tau + \frac{1}{2}x)$,
\[g'(x)=\frac{\frac{x}{4}+(\tau+x)h(x)}{h(x)^{3/2}}>0\]

Therefore, $g$ is increasing, meaning
\begin{align*}
    C&=\frac{ \epsilon ( 1-\Phi(\tau))}{16\sqrt{\beta_1}\; \hat\gamma (1-\hat\gamma) \gamma^* (1-\gamma^*)}\left[(1-\gamma^*)g\fitparenth{\sqrt{\tau^2+R_0}}+\gamma^* g\fitparenth{\sqrt{\tau^2+R_1}}\right]
    \\
    &\geq \frac{ \epsilon ( 1-\Phi(\tau))}{16\sqrt{\beta_1}\; \hat\gamma (1-\hat\gamma) \gamma^* (1-\gamma^*)}g\fitparenth{\min\set{\sqrt{\tau^2+R_0}, \sqrt{\tau^2+R_1}}}
    \\
    &= \frac{ \epsilon ( 1-\Phi(\tau))}{16\sqrt{\beta_1}\; \hat\gamma (1-\hat\gamma) \gamma^* (1-\gamma^*)}g\fitparenth{\sqrt{\tau^2+\sqrt{8\ln \frac{1}{\epsilon}-(\hat r - r^*)^2}- \abs{\hat r-r^*}}}
\end{align*}

Let $R_{\mathrm{min}}:=\sqrt{8\ln \frac{1}{\epsilon}-(\hat r - r^*)^2}- \abs{\hat r-r^*}$. To obtain the bound from the body of the paper, we now specify values for $\tau$ and $\epsilon$ which simplify the expression. First, we will set $\tau$ to be its minimum value: $\tau=\separation\sqrt{\alpha_1}$. We choose this because numerically, it seems to be the value of $\tau$ that maximizes the expression. This gives us
\[C\geq \frac{ \epsilon ( 1-\Phi(\separation\sqrt{\alpha_1}))}{16\sqrt{\beta_1}\; \hat\gamma (1-\hat\gamma) \gamma^* (1-\gamma^*)} \frac{\sqrt{\separation^2\alpha_1 + R_{\mathrm{min}} }\fitparenth{\separation\sqrt{\alpha_1} +\frac{1}{2}\sqrt{\separation^2\alpha_1+ R_{\mathrm{min}}}}}{\sqrt{\log(\separation)-\log\fitparenth{\sqrt{\alpha_1}}-\log\fitparenth{\separation+\frac{1}{2}\sqrt{\separation^2+\frac{R_0}{\alpha_1}}}}}\]

Noting that $\log(\separation)<\log\fitparenth{\separation+\frac{1}{2}\sqrt{\separation^2+\frac{R_0}{\alpha_1}}}$, the denominator of the second term is less than $\sqrt{-\log(\sqrt{\alpha_1})}=\frac{1}{4}\beta_1$ (recall $\alpha_t=\exp\fitparenth{-\int_0^t \beta_s\mathrm{d}s}=\exp\fitparenth{-\frac{1}{2}\beta_1 t^2}$). Additionally, note that the numerator of the second term is an increasing function of $\separation$, meaning we can lower-bound it by setting $\separation=0$. Combining these gives us 
\[C\geq \frac{1-\Phi(\separation\sqrt{\alpha_1})}{8\beta_1\; \hat\gamma (1-\hat\gamma) \gamma^* (1-\gamma^*)} \epsilon \,R_{\mathrm{min}} \]

Finally, direct calculation shows that $\epsilon\, R_{\mathrm{min}}$ is maximized when $\epsilon=\exp\fitparenth{-\frac{1}{8}\fitparenth{(\hat r-r^*)^2 + \delta^2}}$ where $\delta = \frac{|\hat r-r^*|+ \sqrt{(\hat r-r^*)^2+16}}{2}$, giving 
\[\epsilon \, R_{\mathrm{min}} =\frac{\sqrt{(\hat r-r^*)^2+16}-|\hat r-r^*|}{2} \exp\fitparenth{-\frac{1}{8}\fitparenth{(\hat r-r^*)^2 + \delta^2}}\]

Note $\delta\leq \abs{\hat r-r^*}+2$ (because $\sqrt{a^2+b^2}\leq |a|+|b|$), and $\delta^2\leq \abs{\hat r-r^*}^2+2\abs{\hat r-r^*}+4$. Applying this, and the fact that $\hat \gamma(1-\hat \gamma)\leq\frac{1}{4}$
\begin{align*}
C&\geq \frac{\fitparenth{1-\Phi(\separation \sqrt{\alpha_1})}\fitparenth{\sqrt{(\hat r-r^*)^2+16}-|\hat r-r^*|}}{4\beta_1\;  \gamma^* (1-\gamma^*)}\exp\fitparenth{-\frac{1}{4}\fitparenth{(\hat r-r^*)^2 +\abs{\hat r-r^*}} - \frac{1}{2}}
\\
&=\frac{\fitparenth{1-\Phi(\separation \sqrt{\alpha_1})}\fitparenth{\sqrt{(\hat r-r^*)^2+16}-|\hat r-r^*|}}{4\sqrt{e}\beta_1\;  \gamma^* (1-\gamma^*)}\exp\fitparenth{-\frac{1}{4}\fitparenth{(\hat r-r^*)^2 +\abs{\hat r-r^*}}}
\end{align*}

Finally, for $\hat \gamma\in [a,1-a]$, $\abs{\hat r - r^*} \leq \log\frac{1-a}{a}+\abs{\log \frac{1-\gamma^*}{\gamma^*}}$. Our expression is a decreasing function of $\abs{\hat r-r^*}$, and thus for all $\hat \gamma\in [a,1-a]$, if we let $k:=\log\frac{1-a}{a}+\abs{\log \frac{1-\gamma^*}{\gamma^*}}$
\[C\geq \frac{\fitparenth{1-\Phi(\separation \sqrt{\alpha_1})}\fitparenth{\sqrt{k^2+16}-k}}{7\beta_1\;  \gamma^* (1-\gamma^*)}\exp\fitparenth{-\fitparenth{k^2 +k}/4}\]
where we additionally used $4\sqrt{e}<7$. Bringing this all together, 
\begin{align}
   \min_{\hat \gamma \in [a,1-a]}\frac{\ldsm(\pdist{\hat\gamma},\pdist{\gamma^*})}{(\hat\gamma-\gamma^*)^2}>\frac{(1-\Phi(\norm{\mu_1-\mu_0}\sqrt{\alpha_1}))\fitparenth{\sqrt{k^2+16}-k}}{7\beta_1\gamma^*(1-\gamma^*)}\exp\fitparenth{-(k^2+k)/4} 
\end{align}
\qed

\section{Additional numerical experiments}\label{sec:Additional Numerics}

\subsection{Noise schedules and time reparameterization}\label{sec: Reparam}
In this work, we reference two noise schedules used in practice. The linear noise schedule is defined by $\beta_t:=\beta_1 t$ for a constant $\beta_1>0$, such that $\alpha_t=\exp\fitparenth{-\frac{\beta_1}{2}t^2}$. The squared-cosine noise schedule \citep{hang2024improvednoiseschedulediffusion}

$    \bar\alpha_t=\cos\fitparenth{\frac{t+s}{1+s}\cdot \frac{\pi}{2}}^2$ and $\alpha_t =\frac{\bar\alpha_t}{\bar\alpha_0}$
for $s=0.008$.

We want to analyze the effect of artificially reducing the sensitivity index on the estimation of $\gamma^*$. To do this, we can reparameterize the time in the forward process as $t=t_{a,b,v}(\tau)$, where 
\[t_{a,b,v}(\tau)=\begin{cases}
    \tau & 0\leq \tau<a
    \\
    a + v(\tau-a)& a \leq \tau <a + \frac{b-a}{v}
    \\
    b + \frac{1-b}{1-\fitparenth{a + \frac{b-a}{v}}}\fitparenth{t - a - \frac{b-a}{v}} & a + \frac{b-a}{v} \leq t\leq 1
\end{cases}\]

Under this parameterization, if we let $\vect y_\tau:= \vx_{t(\tau)}$ where $\vx_t$ follows the Variance Preserving SDE with noise schedule $\set{\alpha_t}$, then $\vect y_t$ satisfies
\begin{align}
    \mathrm{d}\vect y_\tau &= -\frac{\beta_{t(\tau)}}{2}\vect y_\tau \frac{\mathrm{d}t}{\mathrm{d} \tau}\mathrm{d}\tau + \sqrt{\beta_{t(\tau)}}\cdot \sqrt{\frac{\mathrm{d}t}{\mathrm{d}\tau}}\mathrm dW_t
    \\
    &= -\frac{\tilde{\beta}_{\tau}}{2}\vect y_\tau \mathrm{d}\tau + \sqrt{\tilde{\beta}_{\tau}}\mathrm{d}W_t
\end{align}
where $\tilde{\beta}_\tau:= \beta_{t(\tau)} \cdot \frac{\mathrm{d}t}{\mathrm{d}\tau}$. In particular, reparameterizing time is equivalent to choosing a different noise schedule $\set{\tilde{\alpha}_\tau}$. For $v\geq 1$, $t_{a,b,v}$ maps the interval $[a,b]$ to $\fitbracket{a, a+\frac{b-a}{v}}$, scaling the length of the interval by $1/v$. To reduce the value of $L(\gamma^*)$, we let $[a,b]$ contain the region of time where $\frac{\lsm(\pdist[t]{\gamma}, \pdist[t]{\gamma^*})}{(\gamma-\gamma^*)^2}>\epsilon$ for fixed $\epsilon>0$ and increase the value of $v$.

\subsection{Neural network experiment details}

For the score models $\vs_\btheta$ in our experiments, we use deep ResNets \citep{He2015DeepRL} with width equal to the ambient dimension and depth $8$. Unless otherwise specified, we use a linear noise schedule with $\beta_1=20$ and a time discretization of $T=4000$ steps on the interval $[10^{-5},1]$ for training and $[10^{-3},1]$ for sampling, following \citep{song2021scorebasedgenerativemodelingstochastic}.

We train our models using the loss described in this paper (and \citep{song2021scorebasedgenerativemodelingstochastic}) using AdaBelief \citep{zhuang2020adabelief} with a linear warmup and cosine decay learning rate. We use JAX \citep{jax2018github} and Equinox \citep{kidger2021equinox} for our neural network code, SciPy \citep{2020SciPy-NMeth} for additional machine learning algorithms, Matplotlib \citep{Hunter2007matplotlib} for figure generation, and Weights and Biases \citep{wandb} for experiment tracking.
\footnote{We will make our code available upon publication.}

\subsection{MNIST ablation study details}\label{sec: MNIST Exp Details}

The architecture of our autoencoder is simple: two convolutional layers followed by a dense layer for the encoder, and the reverse for the decoder. For our MNIST experiments, we use a latent dimension of $12$. We train the autoencoder on all examples of $1$s and $8$s in MNIST. We additionally train an MNIST classifier on only $1$s and $8$s using the architecture in the Equinox documentation \citep{kidger2021equinox}.

Unlike in the case of Gaussian mixture models, we do not know the analytical score function for the latents of our autoencoder, and thus estimation of $\ldsm(\pdist[t]{\gamma}, \pdist[t]{\gamma^*})$ and $L(\gamma^*)$ analytically is intractable. Instead, for $\gamma^*\in \set{0.2,0.22,...,0.8}$, we train diffusion models $\vs_{\btheta_\gamma}$ on data from $\pdist{\gamma}$ as approximations for the analytical scores using a linear noise schedule. We keep the number of training examples constant across all experiments (5800, the smaller of the counts of 1s and 8s in MNIST) and train the models identically. For each $t\in [0,1]$, we then have the estimate \[\ldsm(\pdist[t]{\gamma}, \pdist[t]{\gamma^*})\approx \expect[\vx_t\sim \pdist[t]{\gamma^{*}}]{\lambda_t \norm{\vs_{\btheta_{\gamma^*}}(\vx_t, t) - \vs_{\btheta_{\gamma'}}(\vx_t, t)}^2}\]

We generate all samples used in \Cref{fig: MNIST experiment} using $\vs_{\btheta_{0.6}}$, varying the noise schedule used for sampling.




\subsection{DSSI for other parameters }\label{sec: Other Parameters}

Here, we apply our DSSI framework to parameters other than the mixture weight in a Gaussian model. In what follows, we will let $\Sigma_0$ and $\Sigma_1$ be fixed random positive definite matrices, such that 
\[\Sigma_i = A^\top A+10^{-8}I, \;\;A_{ij}\sim \mathcal{N}(0,1)\]
for $i\in \set{0,1}$ and use a linear noise schedule. We focus here on one-dimensional families of distributions for consistency with the body of the paper, although we note that our framework can be extended to multi-parameter recovery directly. For this section, we examine GMMs in dimension $d=10$.

For our analysis of the sensitivity of $\ldsm$ to the modes of the mixture components, we consider the one-dimensional family of distributions 
\[\pdist{\delta}:=0.6 \; \mathcal{N}(\vect 0, \vect \Sigma_0) + 0.4 \; \mathcal{N}\fitparenth{\frac{12\;\delta}{\sqrt{d}}\vect 1,\vect\Sigma_1}\] 
in dimension $d=10$ where we range $\delta$ between $0$ and $1$. \Cref{fig: means heatmap} shows $\lsm(\pdist[t]{\delta}, \pdist[t]{\delta^*})$ as a function of $\delta$ and $t$, and \Cref{fig: means index} shows $\frac{\ldsm(\pdist[t]{\delta}, \pdist[t]{\delta^*})}{(\delta - \delta^*)^2}$ as a function of $\delta$ for multiple values of $\delta^*$. We see that, as with the mixture weight, $\lsm(\pdist[t]{\delta}, \pdist[t]{\delta^*})$ is nonnegligible only for intermediate times in the forward process. However, note that the minimum value of $L(\delta^*)$ we see here is greater than $2$ (compared to $0.37$ in \Cref{fig: gmm score sensitivity}), meaning that our theory implies a $5.4\times$ stronger control on the error in estimation $\mu$ using a diffusion model trained to the same level of error under $\ldsm$.

\begin{figure}
    \centering
    \begin{subfigure}[htbp]{0.45\linewidth}
        \includegraphics[width=\columnwidth]{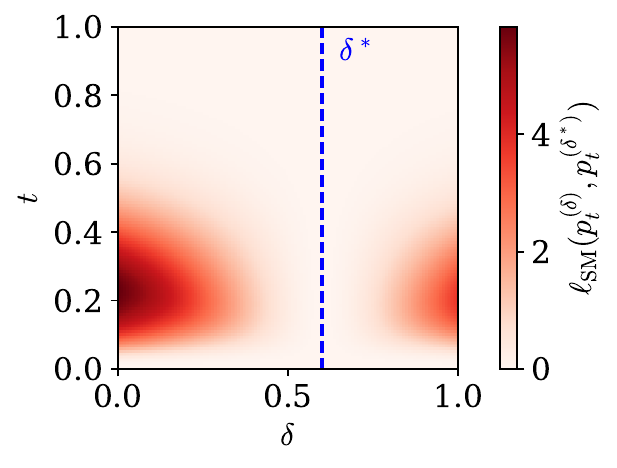}
        \caption{}
        \label{fig: means heatmap}
    \end{subfigure}
    \hfill
    \begin{subfigure}[htbp]{0.45\linewidth}
        \includegraphics[width=\columnwidth]{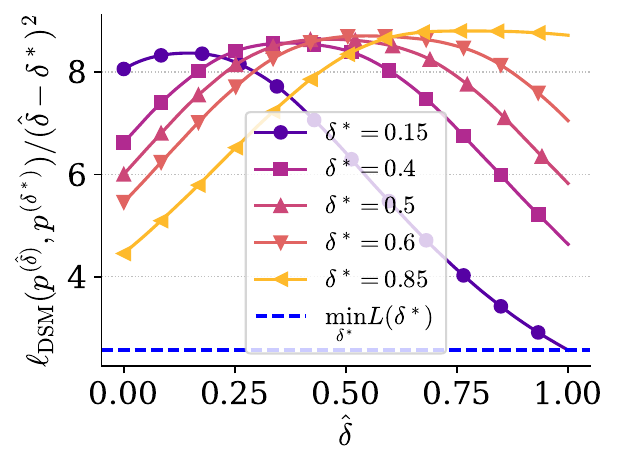}
        \caption{}
        \label{fig: means index}
    \end{subfigure}
    \caption{Sensitivity of $\ldsm$ to means of mixture components. (a) $\lsm(\pdist[t]{\delta}, \pdist[t]{\delta^*})$ as a function of $t$ and $\delta$. (b)  $\frac{\ldsm(\pdist{\delta}, \pdist{\delta^*})}{(\delta - \delta^*)^2}$ as a function of $\delta$.}
\end{figure}

We now apply our DSSI framework to covariance matrix recovery. Consider the one-dimensional family of distributions 
\[\pdist{\kappa}:=0.6 \; \mathcal{N}(\vect 0, \vect I) + 0.4 \; \mathcal{N}\fitparenth{\frac{9.0}{\sqrt{d}}\vect 1,\vect\Sigma_\kappa}\] 
in dimension $d=10$ where we range $\delta$ between $0$ and $1$. Here, we define 
\[\vect \Sigma_\kappa:= \vect \Sigma_0^{1/2}\fitparenth{\vect \Sigma_0^{-1/2}\vect \Sigma_1\vect \Sigma_0^{-1/2}}^{\kappa}\vect \Sigma_0^{1/2}\]
to be the geodesic interpolating between $\vect \Sigma_0$ and $\vect \Sigma_1$.  \Cref{fig: variance heatmap} shows $\lsm(\pdist[t]{\kappa}, \pdist[t]{\kappa^*})$ as a function of $\kappa$ and $t$, and \Cref{fig: variance index} shows $\frac{\ldsm(\pdist{\kappa}, \pdist{\kappa^*})}{(\kappa - \kappa^*)^2}$ as a function of $\kappa$.

Again, $\lsm(\pdist[t]{\delta}, \pdist[t]{\delta^*})$ is nonnegligable for intermediate times in the forward process. We additionally see that $L(\kappa^*)$ is greater than $0.6$ uniformly across $\kappa$.

\begin{figure}
    \centering
    \begin{subfigure}[htbp]{0.45\linewidth}
        \includegraphics[width=\columnwidth]{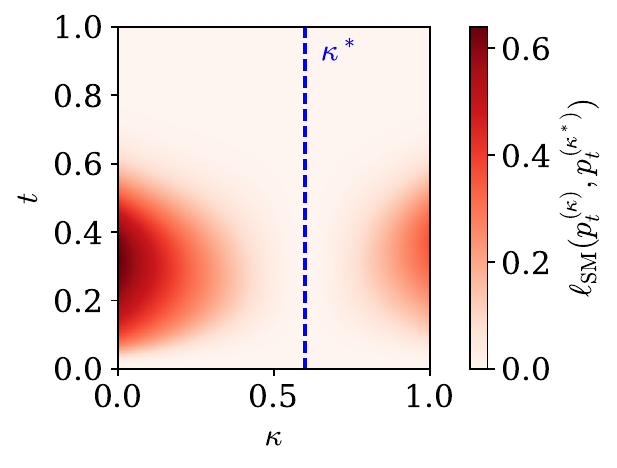}
        \caption{}
        \label{fig: variance heatmap}
    \end{subfigure}
    \hfill
    \begin{subfigure}[htbp]{0.45\linewidth}
        \includegraphics[width=\columnwidth]{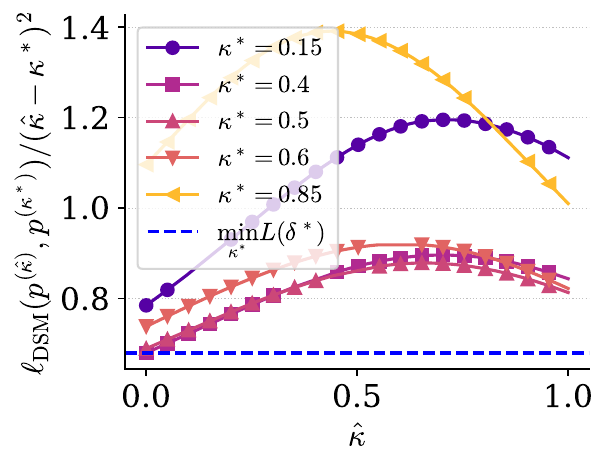}
        \caption{}
        \label{fig: variance index}
    \end{subfigure}
    \caption{Sensitivity of $\ldsm$ to covariance matrix of mixture components. (a) $\lsm(\pdist[t]{\kappa}, \pdist[t]{\kappa^*})$ as a function of $t$ and $\kappa$. (b)  $\frac{\ldsm(\pdist{\kappa}, \pdist{\kappa^*})}{(\kappa - \kappa^*)^2}$ as a function of $\kappa$.}
\end{figure}

\end{appendices}
\end{document}